\newtheorem{theorem}{Theorem}
\newtheorem{lemma}{Lemma}
\newtheorem*{remark}{Remark}
\newtheorem{assump}{Assumption}
\newtheorem{proposition}{Proposition}
\newcommand{\eps}{\varepsilon}
\newcommand{\R}{\mathbb{R}}
\newcommand{\tr}{\text{Tr}}
\newcommand{\mc}{\text{mc}}
\newcommand{\KL}{\text{KL}}
\newcommand{\mE}{\mathcal{E}}
\newcommand{\mL}{\mathcal{L}}
\newcommand{\mA}{\mathcal{A}}
\newcommand{\upk}{\overline{\kappa}}
\newcommand{\downk}{\underline{\kappa}}
\newcommand{\Q}{\mathcal{Q}}
\newcommand*\samethanks[1][\value{footnote}]{\footnotemark[#1]}
\newcommand*{\bbE}{\mathbb{E}}
\DeclareMathOperator*{\argmin}{arg\,min}
\DeclareMathOperator*{\argmax}{arg\,max}
\title{Spike and slab variational Bayes for high dimensional logistic regression}
\author{%
  Kolyan Ray\thanks{Equal contribution.}\\
  %\thanks{Use footnote for providing further information
   % about author (webpage, alternative address)---\emph{not} for acknowledging
    %funding agencies.} \\
  Department of Mathematics\\
  Imperial College London\\
  %Strand, London WC2R 2LS, UK\\
  \texttt{kolyan.ray@ic.ac.uk}
  % examples of more authors
   \And
   Botond Szab\'o\samethanks\\
   Department of Mathematics\\
   Vrije Universiteit Amsterdam\\
   %Niels Bohrweg 1,  \\
   \texttt{b.t.szabo@vu.nl}
   \And
   Gabriel Clara\\
   Department of Mathematics\\
   Vrije Universiteit Amsterdam\\
   %Niels Bohrweg 1,  \\
   \texttt{g.clara@student.vu.nl}
}
\begin{document}

\maketitle

\begin{abstract}
Variational Bayes (VB) is a popular scalable alternative to Markov chain Monte Carlo for Bayesian inference. We study a mean-field spike and slab VB approximation of widely used Bayesian model selection priors in sparse high-dimensional logistic regression. We provide non-asymptotic theoretical guarantees for the VB posterior in both $\ell_2$ and prediction loss for a sparse truth, giving optimal (minimax) convergence rates. Since the VB algorithm does not depend on the unknown truth to achieve optimality, our results shed light on effective prior choices. We confirm the improved performance of our VB algorithm over common sparse VB approaches in a numerical study.
\end{abstract}

\section{Introduction}

Let $x\in \R^p$ denote a feature vector and $Y\in \{0,1\}$ an associated binary label to be predicted. In logistic regression, one of the most widely used methods in classification, we model
\begin{equation}\label{model}
P(Y = 1|X = x) = 1-P(Y=0|X=x)=\Psi(x^T \theta) = \frac{e^{x^T\theta}}{1+e^{x^T\theta}},
\end{equation}
where $\theta \in \R^p$ is an unknown regression parameter and $\Psi(t) = e^t /(1+e^t)$ is the logistic function. 
%An intercept can be incorporated into model \eqref{model} by setting the first coordinate of $x$ equal to $1$.
Suppose we observe $n$ training examples $\{(x_1,y_1),\dots,(x_n,y_n)\}$.

We study the \textit{sparse high-dimensional} setting, where $n\leq p$ and typically $n\ll p$, and many of the coefficients of $\theta$ are (close to) zero. This setting has been studied by many authors, notably using $\ell_1$-regularized $M$-estimators (e.g. the LASSO), see for instance \cite{bach2010,li2015,negahban2009,negahban2012} and the references therein. In Bayesian logistic regression, one assigns a prior distribution to $\theta$, giving a probabilistic model. An especially natural Bayesian way to model sparsity is via a \textit{model selection} prior, which assigns probabilistic weights to every potential model, i.e. every subset of $\{1,\dots,p\}$ corresponding to selecting the non-zero coordinates of $\theta \in \R^p$. This is a widely used Bayesian approach and includes the hugely popular spike and slab prior \cite{george1993,mitchell1988}.

Such priors work well in many settings for estimation and prediction \cite{atchade2017,castillo2015,castillo2012}, uncertainty quantification \cite{ray2017,castillo2020} and multiple hypothesis testing \cite{castillo2018}, see \cite{bcg:review:2020} for a recent review. An especially attractive property is their interpretability, particularly for variable selection, compared to many other black-box machine learning methods. For example, such methods provide posterior inclusion probabilities of particular features, and their credible sets, which are often important in practice.

However, the discrete nature of such priors makes scalable computation hugely challenging. Under a model selection prior, posterior exploration typically involves searching over all $2^p$ possible models, making standard Markov chain Monte Carlo (MCMC) methods infeasible for moderate $p$ unless the feature vectors $\{x_1,\dots,x_n\}$ satisfy strong structural conditions like orthogonality \cite{castillo2012,vanerven2018}. There has been recent progress on adapting MCMC methods to sparse high-dimensional logistic regression \cite{narisetty2019}, while another common alternative is to instead use continuous shrinkage-type priors \cite{carvalho2010,wei2020}.

A popular scalable alternative is variational Bayes (VB), which approximates the posterior by solving an optimization problem. One proposes an approximating family of tractable distributions, called the variational family, and finds the member of this family that is closest to the computationally intractable posterior in Kullback-Leibler (KL) sense. This member is taken as a substitute for the posterior. An especially popular family consists of factorizable distributions, called \textit{mean-field variational Bayes}. VB scales to large data sets and works empirically well in many models, see \cite{blei2017} for a recent review.

In this work, we study the theoretical properties of a mean-field VB approach to sparsity-inducing priors with variational family the set of factorizable spike and slab distributions. This is a natural approximation since it keeps the discrete model selection aspect and many of the interpretable features of the original posterior, while reducing the full $O(2^p)$ model complexity to a much more manageable $O(p)$. The procedure is \textit{adaptive} in that it does not depend on the typically unknown sparsity level, which avoids delicate issues about tuning hyperparameters. This sparse variational family has been employed in various settings \cite{huang2016,logsdon2010,ormerod2017,rayszabo2019,titsias2011}, including logistic regression \cite{carbonetto2012,zhang2019}. VB is natural in model \eqref{model} since in even the simplest low-dimensional setting ($p \ll n$) using Gaussian priors, the posterior is intractable and VB is widely used \cite{bishop2006,jaakkola2000,paisley2012,titsias2014,wang2013}.

However, VB generally comes with few theoretical guarantees, with none currently available in high-dimensional logistic regression. Our main contribution is to show that the sparse VB posterior converges to the true sparse vector at the optimal (minimax) rate in both $\ell_2$ and prediction loss. We prove this under the same conditions for which the true (computationally infeasible) posterior is known to converge \cite{atchade2017}, showing that one does not necessarily need to sacrifice theoretical guarantees when using sparse VB, at least for estimation and prediction. Our convergence bounds are non-asymptotic and thus reflect relevant finite-sample situations.

Our results also provide practical insights on effective prior and VB calibrations, in particular the choice of prior slab distribution. Many existing works employ Gaussian slabs for the underlying prior, even though these cause excessive shrinkage and suboptimal parameter recovery in benchmark models \cite{castillo2012}. Our theoretical results show that optimal parameter recovery is possible if the VB posterior is based on a prior with heavier-tailed Laplace slabs, corroborating findings in linear regression that one should use prior slabs with exponential or heavier tails \cite{castillo2015,castillo2012}, including for VB \cite{rayszabo2019}. We confirm in simulations that using Laplace prior slabs, as our theory suggests, indeed empirically outperforms the usual VB choice of Gaussian prior slabs, demonstrating the practical importance of the prior slab choice. We further demonstrate that our VB algorithm is empirically competitive with other state-of-the-art Bayesian sparse variable selection methods for logistic regression.

Lastly, we provide conditions on the design matrix under which sparse VB can be expected to work well. Together, these provide theoretical backing for using VB for estimation and prediction in the widely used sparse high-dimensional Bayesian logistic regression model \eqref{model}.

\textbf{Related work}. Theoretical guarantees for VB have been studied for specific models, including linear models \cite{ormerod2017,rayszabo2019}, exponential family models \cite{wang2004,wang2004b}, stochastic block models \cite{zhang2017}, latent Gaussian models \cite{sheth2017} and topic models \cite{ghorbani2018}. In low dimensional settings $(p\ll n)$, some Bernstein-von Mises results have also been obtained \cite{lu2016,wang2019b,wang2019}. In high-dimensional and nonparametric settings, general results have been derived \cite{pati2017,zhang2017b} based on the classic Bayesian prior mass and testing approach \cite{ghosal2000}. There has also been work on studying variational approximations to fractional posteriors \cite{alquier2017,yang2017}. For logistic regression, theoretical results have been established for the fully Bayesian spike and slab approach \cite{atchade2017,narisetty2019} and its continuous relaxation \cite{wei2020}.

Theoretical guarantees for VB in sparse linear regression have recently been obtained in \cite{rayszabo2019}. We combine ideas from this paper with tools from high-dimensional and nonparametric Bayesian statistics \cite{atchade2017,castillo2015,nicklray2019} to obtain theoretical results in the nonlinear logistic regression model \eqref{model}. For our algorithm derivation, we use ideas from VB for Bayesian logistic regression \cite{carbonetto2012,jaakkola2000}.

\textbf{Organization}. In Section \ref{sec:setup} we detail the problem setup, including the notation, prior, variational family and conditions on the design matrix. Main results are found in Section \ref{sec:results}, details of the VB algorithm in Section \ref{sec:algorithm}, simulations in Section \ref{sec:num:anal} and discussion in Section \ref{sec:discussion}. In the supplement, we present streamlined proofs for the asymptotic results (Section \ref{sec:proofs}), additional simulations (Section \ref{sec:suppl:sim}), discussion concerning the design matrix conditions (Section \ref{sec:extra_design}), full statements and proofs of the non-asymptotic results (Section \ref{sec:suppl:proofs}) and a derivation of the VB algorithm (Section \ref{sec:algorithm_derivation}).

\section{Problem setup}\label{sec:setup}

\textbf{Notation}. Recall that we observe $n$ training examples $\{(x_1,y_1),\dots,(x_n,y_n)\}$ from model \eqref{model}. For $u\in \R^d$, we write $\|u\|_2 = (\sum_{i=1}^d |u_i|^2)^{1/2}$ for the usual Euclidean norm.
Let $X$ be the $n\times p$ matrix with $i^{th}$ row equal to $x_i^T = (x_{i1},\dots,x_{ip})$ and for $X_{\cdot j}$ the $j^{th}$ column of $X$, set
\begin{equation*}\label{X norm}
\|X\| = \max_{1\leq j \leq p}\|X_{\cdot j}\|_2 = \max_{1\leq j \leq p} (X^T X)_{jj}^{1/2}.
\end{equation*}
We denote by $P_{\theta}$ the probability distribution of observing $Y = (Y_1,\dots,Y_n)$ from model \eqref{model} with parameter $\theta\in\R^p$ and by $E_\theta$ the corresponding expectation. For two probability measures $P,Q$, we write $\KL(P||Q) = \int \log \tfrac{dP}{dQ}dP$ for the Kullback-Leibler divergence. Let $\nabla_\theta f(y,\theta)$ denote the gradient of $f$ with respect to $\theta$. For $\theta\in \R^p$ and a subset $S\subseteq \{1,\dots,p\}$, we write $|S|$ for the cardinality of $S$ and $\theta_S$ for the vector $(\theta_i : i\in S)\in \R^{|S|}$. We set $S_\theta = \{ 1 \leq i \leq p: \theta_i \neq 0\}$ to be the set of non-zero coefficients of $\theta$ and write $s_0 = |S_{\theta_0}|$ for the sparsity level of the true parameter $\theta_0$. Throughout the paper we work under the following frequentist assumption:
\begin{assump}\label{assump:freq}
There is a true $s_0$-sparse parameter $\theta_0\in \R^p$ generating the data $Y\sim P_{\theta_0}$.
\end{assump}

\subsection{Model selection priors and the variational approximation}

A model selection prior first selects a dimension $s \in \{0,\dots,p\}$ from a prior $\pi_p$, then a subset $S \subseteq \{1,\dots,p\}$ uniformly at random from all ${p\choose s}$ subsets of size $|S|=s$, and lastly selects a set of non-zero values for $\theta_S = (\theta_j)_{j\in S} \in \R^{|S|}$ from a product of centered Laplace distributions with density $\prod_{j\in S}  \text{Lap}(\lambda)(\theta_j) = (\lambda/2)^{|S|} \exp(-\lambda\sum_{j\in S}|\theta_j|)$, $\lambda >0$. This prior is represented via the following hierarchical scheme:
\begin{equation}\label{prior}
\begin{split}
s \sim \pi_p(s), \\
S | |S|=s \sim \text{Unif}_{p,s},\\
\theta_j \stackrel{ind}{\sim} \begin{cases} 
      \text{Lap}(\lambda), & j\in S, \\
      \delta_0, & j \not\in S,
   \end{cases}
\end{split}
\end{equation}
where $\text{Unif}_{p,s}$ selects $S$ from the $p\choose s$ possible subsets of $\{1,\dots,p\}$ of size $s$ with equal probability and $\delta_0$ denotes the Dirac mass at zero. We assume that there are constants $A_1,A_2,A_3,A_4>0$ with
\begin{align}\label{prior_cond}
A_1 p^{-A_3} \pi_p (s-1) \leq \pi_p (s) \leq A_2 p^{-A_4} \pi_p (s-1), \quad \quad s= 1,\dots,p.
\end{align}
Condition \eqref{prior_cond} is satisfied by a wide range of priors, such as complexity priors \cite{castillo2012} and binomial priors, including the widely used spike and slab prior $\theta_j \sim^{iid} \rho \text{Lap}(\lambda) + (1-\rho)\delta_0$ by taking $\pi_p=\text{Binomial}(p,\rho)$. Assigning a hyperprior $\rho\sim \text{Beta}(1,p^t)$, $t>1$, to the prior inclusion probabilities also satisfies \eqref{prior_cond} (\cite{castillo2012}, Example 2.2), allows mixing over the sparsity level and most importantly gives a spike and slab prior calibration that does not depend on unknown hyperparameters.

While most works use Gaussian slabs for the prior \cite{carbonetto2012,huang2016,logsdon2010,ormerod2017,titsias2011,zhang2019}, we instead use Laplace slabs since using slab distributions with lighter than exponential tails can cause excessive shrinkage and deteriorate estimation in linear regression \cite{castillo2012,rayszabo2019}. We illustrate numerically that the same phenomenon can occur in logistic regression, where using Laplace rather than Gaussian prior slabs improves estimation, see Section \ref{sec:num:anal}. This shows that our theoretical results in Section \ref{sec:results} are reflected in practice and sheds light on suitable prior choices. Full details of the modified algorithm are provided in Algorithm \ref{algorithm} below.

For our theoretical results, we suppose the regularization parameter $\lambda$ of the slab satisfies
\begin{equation}\label{lambda}
2 \|X\| \sqrt{\log p} \leq  \lambda \leq  \alpha \|X\| \sqrt{\log p}
\end{equation}
for some $\alpha\geq 2$. The choice $\lambda \asymp \|X\| \sqrt{\log p}$ is common for the regularization parameter of the LASSO (\cite{buhlmann2011}, Chapter 6), which corresponds to the posterior mode based on a full product Laplace prior $\theta_j \sim^{iid} \text{Lap}(\lambda)$ with no extra model selection as in \eqref{prior}. We note additional model selection is necessary, since the pure Laplace prior can behave badly in sparse high-dimensional settings \cite{castillo2012}. Specific values of $\|X\|$ for some design matrices are given in Section \ref{sec:extra_design}, but one should typically think of $\|X\| \sim \sqrt{n}$.

Bayesian inference about $\theta_0$, including reconstruction of the class probabilities $P_{\theta_0}(Y=1|X=x)$, is carried out via the posterior distribution $\Pi(\cdot|Y)$.
%\begin{equation}\label{bayes}
%\Pi(B|Y) =\frac{ \int_B e^{\ell_{n,\theta}(Y) - \ell_{n,\theta_0}(Y)} d\Pi(\theta)}{\int e^{\ell_{n,\theta}(Y) - \ell_{n,\theta_0}(Y)} d\Pi(\theta)},
%\end{equation}
%where $\ell_{n,\theta}(Y)$ is the log-likelihood \eqref{likelihood} of data $Y$ under $P_\theta$.
Since computing the posterior is infeasible for large $p$, we consider the following mean-field family of approximating distributions
\begin{equation}\label{VarFamily}
\Q = \left\{ Q_{\mu,\sigma,\gamma} =\prod_{j=1}^p \left[ \gamma_j N(\mu_j,\sigma_j^2) + (1-\gamma_j) \delta_0 \right]: ~~\gamma_j \in [0,1],~~ \mu_j \in \R,~~ \sigma_j^2 >0  \right\}.
\end{equation}
The VB posterior is the element of $\Q$ that minimizes the KL divergence to the exact posterior
\begin{equation}\label{VB}
Q^* = \argmin_{Q\in \Q} \KL(Q||\Pi(\cdot|Y)).
\end{equation}
The family $\Q$ consists of all factorizable distributions of spike and slab form, which is a natural approximation for sparse settings with variable selection. The $(\gamma_j)$ correspond to the VB variable inclusion probabilities, thereby keeping the interpretability of the original model selection prior. While the prior may factorize like \eqref{VarFamily}, the posterior does not, and we replace the full $2^p$ posterior model weights with the $p$ probabilities $(\gamma_j)$, greatly reducing the posterior dimension. Note that while the \textit{prior} has Laplace slabs, we can fit Gaussian distributions in the variational family since the likelihood induces subgaussian tails in the \textit{posterior}.

Computing the VB posterior $Q^*$ for the variational family $\Q$ in \eqref{VarFamily} is an optimization problem that has been studied in the literature \cite{huang2016,logsdon2010,ormerod2017,rayszabo2019,titsias2011}, including for logistic regression \cite{carbonetto2012,zhang2019}, mainly using coordinate ascent variational inference (CAVI). While these works mostly consider Gaussian slabs for the prior, CAVI can be suitably modified to the Laplace case, see Algorithm \ref{algorithm}.

\subsection{Design matrix and sparsity assumptions}\label{sec:design matrix}

In the high-dimensional case $p>n$, the parameter $\theta$ in model \eqref{model} is not identifiable, let alone estimable, without additional conditions on the design matrix $X$.  In the sparse setting, a sufficient condition for consistent estimation is `local invertibility' of $X^TX$ when restricted to sparse vectors. The following definitions are taken from \cite{atchade2017} and make precise this notion of invertibility. Define the diagonal matrix $W\in \R^{n\times n}$ with $i^{th}$ diagonal entry
\begin{equation}\label{W matrix}
W_{ii} = g''(x_i^T \theta_0) = \Psi(x_i^T\theta_0) (1-\Psi(x_i^T\theta_0))
\end{equation}
and the compatibility type constant 
$$\underline{\kappa} = \inf \left\{ \frac{\|W^{1/2}X\theta\|_2^2}{\|X\|^2 \|\theta\|_2^2} : \|\theta_{S_0^c}\|_1 \leq 7\|\theta_{S_0}\|_1, \theta \neq 0 \right\}.$$
For dimension $s\in \{1,\dots,p\}$, set
$$\overline{\kappa}(s) = \sup \left\{ \frac{\|X\theta\|_2^2}{\|X\|^2 \|\theta\|_2^2}: 0 \neq |S_\theta| \leq s \right\}, \qquad \underline{\kappa}(s) = \inf \left\{ \frac{\|W^{1/2}X\theta\|_2^2}{\|X\|^2 \|\theta\|_2^2}: 0 \neq |S_\theta| \leq s \right\}.$$
For a given $L>0$ and $\alpha$ defined in \eqref{lambda}, we require the following bound on the design matrix
\begin{equation}
\|X\| \geq \alpha \max \left(\frac{50(L+2)\|X\|_\infty}{\underline{\kappa}((L+1)s_0)}, \frac{64}{3 \underline{\kappa}}\right)  s_0\sqrt{\log p}.\label{cond:design}
\end{equation}
These constants are widely used in the sparsity literature (e.g.  \cite{buhlmann2011}), including for high-dimensional logistic regression \cite{atchade2017,negahban2012,wei2020}. Assuming such constants are bounded away from zero and infinity, Atchad\'e \cite{atchade2017} proves that the original posterior $\Pi(\cdot|Y)$ converges to the truth at the optimal rate. We show here that under no further assumptions on the design matrix $X$, the VB posterior $Q^*$ also converges to the truth at the optimal rate. We thus provide theoretical guarantees for the scalable VB approximation under the same conditions for which the true posterior is known to converge.

Many standard design matrices satisfy these compatibility conditions, such as orthogonal designs, i.i.d. (including Gaussian) random matrices and matrices satisfying the `strong irrepresentability condition' of \cite{zhao2006}. Details of these examples and further discussion are provided in Section \ref{sec:extra_design} in the supplement (see also Chapter 6 of \cite{buhlmann2011}). 

For a normalized design matrix with entries of size $O(1$), one has $\|X\| \sim \sqrt{n}$, so that \eqref{cond:design} is a minimal sample size condition. For suitably bounded compatibility constants, this translates into the minimal sample size $n \gtrsim s_0^2 \log p$, as in \cite{atchade2017}. The frequentist $\ell_1$-regularized $M$-estimator is known to converge at the same rate under similar assumptions to ours for a deterministic design matrix $X$ \cite{li2015} and under slightly weaker sample size conditions for an i.i.d. subgaussian random design matrix $X$ ($n \gtrsim s_0 \log p$) \cite{negahban2012}.

\section{Main results}\label{sec:results}

We now provide theoretical guarantees for the VB posterior $Q^*$ in \eqref{VB}. We present here our results in a simpler asymptotic form as $n,p\to\infty$ for easier readability. More complicated, but practically more relevant, finite sample guarantees are provided in Section \ref{sec:suppl:proofs} of the supplement. In particular, one should keep in mind that the results here do indeed reflect finite-sample behaviour.

We investigate how well the VB posterior recovers the true underlying high-dimensional parameter $\theta_0$. This is measured via the speed of posterior concentration, which studies the size of the \textit{smallest} $\ell_2$ or prediction type neighbourhood around the true $\theta_0$ that contains most of the (VB) posterior probability \cite{ghosal2000}. This is a frequentist assessment that describes the typical behaviour of the VB posterior under the true generative model, see Assumption \ref{assump:freq}. 

Posterior concentration rates are now entering the machine learning community as tools to gain insights into (variational) Bayesian methods and assess the suitability of priors and their calibrations (e.g. \cite{pati2017,polson:2018,szabo2019,wang2019}). Such results also quantify the typical distance between a point estimator $\hat{\theta}$ (posterior mean/median) and the truth (\cite{ghosal2000}, Theorem 2.5), as well as the typical posterior spread about the truth. Taken together, these quantities are crucial for the accuracy of Bayesian uncertainty quantification and so good posterior concentration results are necessary conditions for ensuring the latter. Ideally, most of the posterior probability should be concentrated in a ball centered around the true $\theta_0$ with radius proportional to the optimal (minimax) rate. This is the case for the true computationally infeasible posterior \cite{atchade2017} and extends to the VB posterior $Q^*$, as we now show. This provides a universal, objective guarantee for the VB posterior.

\begin{theorem}\label{thm:contraction}
Suppose the model selection prior \eqref{prior} satisfies \eqref{prior_cond} and \eqref{lambda} and the design matrix $X$ satisfies assumption \eqref{cond:design} for some sequence $L=M_n$. Then the VB posterior $Q^*$ satisfies
\begin{align*}
E_{\theta_0}Q^*\left(\theta\in\mathbb{R}^p:\, \|\theta-\theta_0\|_2 \geq  \frac{M_n^{1/2}}{\underline\kappa\big(M_n s_0\big)} \frac{\sqrt{s_0\log p}}{\|X\| }\right)= O\big(C_{\kappa}/M_n \big)+o(1),
\end{align*}
where $C_{\kappa}=L_0\big( \frac{  \bar\kappa(L_0s_0)}{\downk((1+4L_0/A_4)s_0)^{2}}+\downk (L_0s_0)^{-1}\big)$ and $L_0=2\max\{ A_4/5,(1.1 + 4\alpha^2/\underline{\kappa}+2A_4 +\log(4+\upk (s_0)))/A_4\}$. Define the mean-squared prediction error $\|p_\theta-p_0\|_n^2=\tfrac{1}{n} \sum_{i=1}^n(\Psi(x_i^T\theta)- \Psi(x_i^T\theta_0))^2$, where we recall $P_\theta(Y=1|X=x) = \Psi(x_i^T\theta)$. Then the VB posterior $Q^*$ satisfies 
\begin{align*}
E_{\theta_0}Q^*\left(\theta\in\mathbb{R}^p:\, \|p_\theta-p_0\|_n \geq  \frac{\sqrt{M_n\overline\kappa(M_n s_0)}}{\underline\kappa\big(M_n s_0\big)} \sqrt{\frac{s_0\log p}{n} }\right)=  O\big(C_\kappa/M_n\big)+o(1).
\end{align*}
In particular, if $C_\kappa/M_n \to 0$, then the posterior concentrates around the true sparse parameter $\theta_0$ at the optimal (minimax) rate in both $\ell_2$ and mean-squared prediction loss.
\end{theorem}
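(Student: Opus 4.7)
The plan is to leverage the variational characterization of $Q^*$ together with the curvature of the logistic log-likelihood, adapting the strategy for VB in sparse linear regression \cite{rayszabo2019} to the nonlinear case via tools from \cite{atchade2017}. Since $Q^*$ minimizes $\KL(Q\|\Pi(\cdot\mid Y))$ over $\Q$, expanding yields the ELBO-type inequality
\begin{equation*}
\int [\ell_n(\theta_0)-\ell_n(\theta)]\,dQ^*(\theta) + \KL(Q^*\|\Pi) \leq \int [\ell_n(\theta_0)-\ell_n(\theta)]\,dQ(\theta) + \KL(Q\|\Pi)
\end{equation*}
for every $Q\in\Q$, with $\ell_n$ the logistic log-likelihood and $\Pi$ the prior \eqref{prior}. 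Taking $E_{\theta_0}$, Pinsker's inequality applied to each Bernoulli KL gives $E_{\theta_0}\int[\ell_n(\theta_0)-\ell_n(\theta)]\,dQ^*\geq 2n\int\|p_\theta-p_0\|_n^2\,dQ^*$, modulo a random fluctuation $\ell_n-E_{\theta_0}\ell_n$ that I would absorb on a high-probability event via subgaussian concentration of the centered bounded logistic score $\nabla\ell_n(\theta_0)$.

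For the right-hand side I would test against $Q_0 = \bigotimes_{j\in S_0} N(\theta_{0,j},\sigma^2)\otimes\bigotimes_{j\notin S_0}\delta_0 \in \Q$ with $\sigma^2\asymp 1/\|X\|^2$. Taylor expansion around $\theta_0$, using $|\Psi''|\leq 1/4$ and $E_{\theta_0}\nabla\ell_n(\theta_0)=0$, gives $E_{\theta_0}\int[\ell_n(\theta_0)-\ell_n(\theta)]\,dQ_0 \leq \tfrac{1}{8}\int\|X(\theta-\theta_0)\|_2^2\,dQ_0 \lesssim s_0$. The divergence $\KL(Q_0\|\Pi)$ decomposes into a model-selection part bounded by $\log\binom{p}{s_0}+\log\pi_p(s_0)^{-1}\lesssim s_0\log p$ using \eqref{prior_cond}, plus a slab part $\sum_{j\in S_0}\KL(N(\theta_{0,j},\sigma^2)\|\mathrm{Lap}(\lambda))\lesssim s_0\log p$ under \eqref{lambda} (re-centering the Gaussian at a truncation of $\theta_0$ if $\|\theta_0\|_\infty$ is large). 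Markov's inequality then yields $E_{\theta_0}\int\|p_\theta-p_0\|_n^2\,dQ^*\lesssim s_0\log p/n$.

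To sharpen the prediction rate and pass to an $\ell_2$ statement, I would next establish a dimension-reduction bound $E_{\theta_0}Q^*(|S_\theta|>L_0 s_0)=O(C_\kappa/M_n)$ by applying the ELBO inequality on the event $\{|S_\theta|>L_0 s_0\}$ and using that \eqref{prior_cond} together with the $\lambda|\theta_j|$ slab tails for $j\notin S_0$ force a model of size $s>s_0$ to cost at least $(p^{A_4})^{-(s-s_0)}$ in Bayes factor; summing geometrically over $s>L_0s_0$ and combining with the slab calibration \eqref{lambda} produces the $C_\kappa/M_n$ rate. Conditional on $\{|S_\theta|\leq M_n s_0\}$, the support of $\theta-\theta_0$ has size $\leq (M_n+1)s_0$, so the definitions of $\downk(\cdot)$ and $\upk(\cdot)$ give $\|\theta-\theta_0\|_2^2\leq \|W^{1/2}X(\theta-\theta_0)\|_2^2/(\|X\|^2\downk(M_ns_0))$ and, via a second Taylor step localized to a ball around $\theta_0$, $\|W^{1/2}X(\theta-\theta_0)\|_2^2 \lesssim n\,\upk(M_n s_0)\|p_\theta-p_0\|_n^2$, yielding both claimed rates.

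The main obstacle I expect is the dimension-reduction step: obtaining the declared $O(C_\kappa/M_n)$ probability rather than a mere $o(1)$ requires Laplace-transform-type bounds on the slab integrals against the random log-likelihood rather than deterministic quadratic majorants, since the logistic Hessian $-X^TD(\theta)X$ degenerates when $|x_i^T\theta|$ is large and the naive quadratic Taylor bound cannot be used globally. This forces all Taylor expansions to be peeled off onto a localization event on which $\theta$ stays within a ball around $\theta_0$, itself controlled by iterating the same complexity argument; the design condition \eqref{cond:design} enters crucially here to guarantee that $\downk((1+4L_0/A_4)s_0)$ and $\upk(L_0 s_0)$ are controlled on the localized set, closing the loop and explaining the specific shape of $C_\kappa$.
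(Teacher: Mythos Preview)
Your ELBO-based approach is genuinely different from the paper's, and as written it has a circularity that I do not see how to close.

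The paper does \emph{not} work directly with the ELBO comparison. It routes through the \emph{true} posterior via a transference lemma (Theorem~5 of \cite{rayszabo2019}): if $E_{\theta_0}\Pi(\Theta_n\mid Y)1_A\le Ce^{-\delta_n}$ then
\[
E_{\theta_0}Q^*(\Theta_n)1_A\le \tfrac{2}{\delta_n}\bigl[E_{\theta_0}\KL(Q^*\|\Pi(\cdot\mid Y))1_A+Ce^{-\delta_n/2}\bigr].
\]
The proof then (i) shows the true posterior puts exponentially small mass on $\{\|\theta-\theta_0\|_2\ge\cdot\}$ and on $\{|S_\theta|\ge\cdot\}$ using Atchad\'e's results, and (ii) bounds $\KL(Q^*\|\Pi(\cdot\mid Y))\le D_2 s_0\log p$ on a high-probability event by comparing $Q^*$ to a Gaussian on a model $S'$ \emph{selected using where the true posterior already concentrates}. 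Contraction of the true posterior is established first by standard Bayesian arguments, then transferred; this is what produces the $O(C_\kappa/M_n)$ rate with $\delta_n\asymp M_n s_0\log p$.

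Your direct ELBO route hits the following obstacle. After centering, the random part of $\int[\ell_n(\theta_0)-\ell_n(\theta)]\,dQ^*$ is exactly $-\nabla\ell_n(\theta_0)^T\int(\theta-\theta_0)\,dQ^*$, since the logistic Taylor remainder $\mL_n$ is deterministic. Controlling $\|\nabla\ell_n(\theta_0)\|_\infty\le t$ on a good event still leaves $t\int\|\theta-\theta_0\|_1\,dQ^*$, which you cannot bound without already knowing either concentration or sparsity of $Q^*$---precisely the conclusion. The same circularity blocks your dimension-reduction step: ``applying the ELBO inequality on the event $\{|S_\theta|>L_0s_0\}$'' is not a valid move, because the ELBO is a single global inequality, not a family indexed by events, and the prior Bayes-factor heuristic you sketch is a statement about $\Pi(\cdot\mid Y)$, not about $Q^*$. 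The paper breaks this loop by proving dimension reduction and $\ell_2$-contraction for $\Pi(\cdot\mid Y)$ first (where the empirical-process term is handled by peeling and the self-concordant rate function), and only then invoking the transference lemma.

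A second gap is the direction of the $\ell_2$/prediction implication. Passing from a prediction bound to an $\ell_2$ bound would require $n\|p_\theta-p_0\|_n^2\gtrsim\|W^{1/2}X(\theta-\theta_0)\|_2^2$, which fails globally because $\Psi'(x_i^T\theta)$ can be arbitrarily small for $\theta$ far from $\theta_0$; your claimed inequality is in the wrong direction. The paper obtains the $\ell_2$ rate first via the transference route and then derives the prediction rate through the easy inequality $n\|p_\theta-p_0\|_n^2\le\tfrac{1}{16}\|X(\theta-\theta_0)\|_2^2\le\tfrac{1}{16}\upk(M_ns_0)\|X\|^2\|\theta-\theta_0\|_2^2$.
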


\begin{remark}
In Theorem \ref{thm:contraction}, we keep track of the compatibility numbers in the rate. If $ \overline\kappa(L_0s_0)$, $\downk((1+4L_0/A_4)s_0)$ and $\downk (L_0s_0)$ are bounded away from zero and infinity, as is often the case, the right hand side of both displays tends to zero for any $M_n\to \infty$ growing arbitrary slowly. In this case, the rates simplify to $M_n^{1/2} \sqrt{s_0 \log p}/\|X\|$ and $M_n \sqrt{s_0 \log p/n}$, respectively. In Section \ref{sec:extra_design} of the supplement, we give sufficient conditions for this to happen.
\end{remark}

Since $\sqrt{s_0\log p}/\|X\|$ is the minimax rate, this result says that for estimating $\theta_0$, in either $\ell_2$ or prediction loss, the VB approximation behaves optimally from a theoretical frequentist perspective. In fact, since these conditions are essentially the same as were used to study the true posterior \cite{atchade2017}, our results suggest one does not lose much by using this computationally more efficient approximation, at least regarding estimation and prediction. This backs up the empirical evidence that VB can provide excellent scalable estimation.For a non-asymptotic version of Theorem \ref{thm:contraction}, see Theorem \ref{thm:contraction:nonasymp} in Section \ref{sec:suppl:proofs}.

Since the prior and variational family do not depend on unknown parameters (e.g. sparsity level $s_0=|S_{\theta_0}|$ of $\theta_0$), the procedure is \textit{adaptive}, i.e. it can recover an $s_0$-sparse truth nearly as well as if we knew the exact true sparsity level beforehand. This avoids difficult issues about tuning parameter selection. As mentioned above, posterior concentration results can be used to obtain guarantees for point estimators, such as the VB posterior mean.

%This provides a universal, objective guarantee of the VB posterior distribution ensuring the wider acceptance of this procedure. Second, the above posterior contraction rates exactly match the contraction rate of the true posterior distribution, see \cite{atchade2017}. Hence we can conclude that the variational approximation of the posterior preserves the optimal recovery property of the true posterior. Lastly, we note that one can easily construct a summary statistics $\hat\theta$ from the posterior distribution achieving the same convergence as the posterior contraction rate, see Theorem 2.5 of \cite{ghosal2000}. This also means that the above derived contraction rate is the best possible one, otherwise we would get a contradiction with the optimality of the rate $\sqrt{s_0\log p}/\|X\|$. More specifically we show below that the VB posterior mean $\hat{p}^*$ also achieves the optimal convergence rate derived in the previous theorem.

\begin{theorem}\label{thm:convergence:postmean}
Assume the conditions of Theorem \ref{thm:contraction} and that $n\geq 1.1 + 4\alpha^2/\underline{\kappa}+2A_4 +\log(4+\upk (s_0)))$. Then the VB predictive mean $\hat{p}^*(x) = \int P_\theta(Y=1|X=x)dQ^*(\theta)=\int \Psi(x^T\theta) dQ^*(\theta)$ and true prediction function $p_0(x) = P_{\theta_0}(Y=1|X=x)$ satisfy
\begin{align*}
 P_{\theta_0}\left(\|\hat{p}^*-p_0\|_n \geq  \frac{M_n^{1/2} \overline{\kappa}( \frac{2n}{A_4\log p}+ s_0)^{1/2}}{\underline{\kappa}( \frac{2n}{A_4\log p} +(1-2/A_4)s_0)}\sqrt{ \frac{s_0\log p}{n }}\right) = O(C_\kappa/M_n) + o(1).
\end{align*} 
\end{theorem}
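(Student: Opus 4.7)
The strategy is to combine Jensen's inequality with the $\ell_2$ contraction of $Q^*$ from Theorem~\ref{thm:contraction} and a separate variational dimension bound. Writing $\hat{p}^*(x) = \int \Psi(x^T\theta)\,dQ^*(\theta)$, applying Jensen coordinatewise yields the basic reduction
\begin{align*}
\|\hat p^* - p_0\|_n^2 \leq \int \|p_\theta - p_0\|_n^2\,dQ^*(\theta),
\end{align*}
so it suffices to control the $L^2(Q^*)$ average of the random-$\theta$ prediction risk. Because $|\Psi'|\leq 1/4$, this quantity is in turn dominated by $(16n)^{-1}\int \|X(\theta-\theta_0)\|_2^2\,dQ^*(\theta)$, reducing the task to a second-moment-style bound on $\|X(\theta-\theta_0)\|_2$ under $Q^*$.

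\textbf{Sparsity truncation and compatibility.} To invoke the compatibility number $\overline\kappa(\cdot)$ one needs control on $|S_\theta|$. Set $K_n := 2n/(A_4\log p)$. I would first establish a variational dimension bound of the form $E_{\theta_0}\,Q^*(|S_\theta| > K_n) = o(1)$, with remainder small enough not to spoil the final rate. On the sparse event $\{|S_\theta|\leq K_n\}$, the vector $\theta-\theta_0$ is $(K_n+s_0)$-sparse, hence by definition of $\overline\kappa$,
\begin{align*}
\|X(\theta-\theta_0)\|_2^2 \leq \|X\|^2\,\overline\kappa(K_n+s_0)\,\|\theta-\theta_0\|_2^2.
\end{align*}
Complementing this, I would rerun the $\ell_2$ argument underpinning Theorem~\ref{thm:contraction} with $M_n s_0$ replaced by the tighter effective sparsity $K_n+(1-2/A_4)s_0$ coming from the dimension bound, yielding that on an event of the required $P_{\theta_0}$-probability,
\begin{align*}
\int_{\{|S_\theta|\leq K_n\}}\|\theta-\theta_0\|_2^2\,dQ^*(\theta) \lesssim \frac{M_n\,s_0\log p}{\underline\kappa\big(K_n+(1-2/A_4)s_0\big)^{2}\,\|X\|^2}.
\end{align*}
The residual mass on $\{|S_\theta|>K_n\}$ is absorbed via the crude bound $\|p_\theta-p_0\|_n^2\leq 1$ together with the dimension bound, and assembling these pieces yields the stated rate after taking square roots.

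\textbf{Main obstacle.} The hardest step is the variational dimension bound $Q^*(|S_\theta|>K_n)=o(1)$: it cannot be read off from Theorem~\ref{thm:contraction} and requires a separate ELBO-type argument, comparing $Q^*$ to an explicit competitor in $\mathcal Q$ supported on small models. The per-step prior penalty $\pi_p(s)/\pi_p(s-1)\leq A_2 p^{-A_4}$ from~\eqref{prior_cond} produces an overall factor of order $p^{-A_4 K_n}\approx e^{-2n}$ on supports of size $>K_n$, which the likelihood cannot offset; optimising the resulting balance between the $A_4$-driven prior penalty and the $s_0$-overhead inherent to the posterior contraction argument is what produces the shifted index $K_n+(1-2/A_4)s_0$ inside $\underline\kappa$. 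A secondary technical point is upgrading the $Q^*$-probability statement of Theorem~\ref{thm:contraction} to a bona fide $L^2(Q^*)$ bound on $\|\theta-\theta_0\|_2^2$; this is handled using the Gaussian (hence sub-Gaussian) slab factors of the variational family~\eqref{VarFamily} together with truncation at the contraction radius.
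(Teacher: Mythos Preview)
Your reduction via Jensen to $\int \|p_\theta-p_0\|_n^2\,dQ^*(\theta)$ is the same as the paper's opening move, but from that point on the two arguments diverge in an essential way.

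\textbf{What the paper does.} After Jensen, the paper does \emph{not} try to bound $\int \|\theta-\theta_0\|_2^2\,dQ^*$ directly. Instead it applies the Donsker--Varadhan duality formula
\[
\int f(\theta)\,dQ^*(\theta) \le \KL(Q^*\|\Pi(\cdot|Y)) + \log \int e^{f(\theta)}\,d\Pi(\theta|Y),
\]
with $f(\theta)=cn\|p_\theta-p_0\|_n^2$. The first term is $O(s_0\log p)$ by Lemma~\ref{lem:KL:asymp}. The second term is an integral under the \emph{true posterior}, and is handled by slicing $\{\theta:|S_\theta|\le K_n\}$ into shells $\mathcal{B}_j$ according to the size of $n\|p_\theta-p_0\|_n^2$; the shells contribute a geometric series because the posterior (not $Q^*$) satisfies the \emph{exponential} bounds of Lemmas~\ref{lem: thm4(1):asymp} and~\ref{lem: combined:contraction:asymp}. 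On $\{|S_\theta|>K_n\}$ one uses the crude bound $n\|p_\theta-p_0\|_n^2\le n$ together with $E_{\theta_0}\Pi(|S_\theta|>K_n|Y)\le 2e^{-n}$. The choice $K_n=2n/(A_4\log p)$ and the index $K_n+(1-2/A_4)s_0$ fall out of matching these exponents, not from a separate ELBO argument.

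\textbf{The gap in your route.} Your plan requires an honest $L^2(Q^*)$ bound on $\|\theta-\theta_0\|_2^2$. But the only input you have from Theorem~\ref{thm:contraction} is $E_{\theta_0}Q^*(\|\theta-\theta_0\|_2\ge M_n^{1/2}r_n)=O(C_\kappa/M_n)$, a tail that decays only like $1/M_n$ in the scale parameter. Integrating such a tail against $2t\,dt$ diverges, so the probability statement cannot be upgraded to a second-moment bound without additional information. Your suggested fix via ``sub-Gaussian slab factors'' would need explicit control on the data-dependent variational means and variances $(\mu_j^*,\sigma_j^{*2})$, and the theory gives none: $Q^*$ is characterised only as a KL minimiser, not through its parameters. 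Likewise, your proposed ``separate ELBO-type argument'' for $Q^*(|S_\theta|>K_n)$ is unnecessary---the paper never proves such a bound for $Q^*$ with small enough residual; it bypasses this entirely by working with the posterior inside the $\log$ after Donsker--Varadhan.

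The missing idea is precisely the duality formula: it trades $Q^*$-moments for posterior exponential moments plus the KL gap, and the latter two are exactly what the background lemmas supply.
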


\begin{remark}
If $\|X\| \sim \sqrt{n}$, as is often the case, then the extra sample size condition in Theorem \ref{thm:convergence:postmean} is automatically implied by \eqref{cond:design}, see Section \ref{sec:design matrix}. We have again kept track of the compatibility numbers; if these are bounded away from zero and infinity, then the probability converges to zero and we recover the rate $M_n^{1/2} \sqrt{s_0 \log p/n}$ for any $M_n \to \infty$ arbitrarily slowly.
\end{remark}

We next show the VB posterior $Q^*$ does not provide overly conservative model selection in the sense that it concentrates on models of size at most a constant multiple of the true model size $s_0$. This gives some guarantees for variable selection, in particular bounding the number of false positives. It provides a first theoretical underpinning for interpretable inference when using this VB approximation.

\begin{theorem}\label{thm:selection}
Under the conditions of Theorem \ref{thm:contraction}, the VB posterior satisfies
\begin{align*}
E_{\theta_0}Q^*(\theta\in\mathbb{R}^p:\, |S_{\theta}| \geq M_ns_0 ) = O\big(C_{\kappa}/M_n \big)+o(1).
\end{align*}
\end{theorem}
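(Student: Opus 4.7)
The plan is to reduce the oversized-model event to a bound on the expected VB model size $E_{\theta_0}\int|S_\theta|\,dQ^*$ via Markov's inequality, then control this quantity through the variational (ELBO) inequality together with the prior's built-in selection penalty. Every $Q\in\Q$ makes the inclusion indicators $\indic\{\theta_j\neq 0\}$ independent Bernoulli$(\gamma_j)$, so $\int|S_\theta|\,dQ^*=\sum_{j=1}^p\gamma_j^*$ and Markov gives
\[
Q^*(|S_\theta|\geq M_n s_0)\leq\frac{1}{M_n s_0}\sum_{j=1}^p\gamma_j^*.
\]
It therefore suffices to show $E_{\theta_0}\sum_j\gamma_j^*=O(s_0 C_\kappa)$.

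To control the right-hand side I would invoke the variational optimality $\KL(Q^*\|\Pi(\cdot|Y))\leq\KL(Q\|\Pi(\cdot|Y))$ with the test measure $Q_0=\prod_{j\in S_{\theta_0}} N(\theta_{0,j},\sigma_0^2)\prod_{j\notin S_{\theta_0}}\delta_0\in\Q$ for a sufficiently small $\sigma_0^2$. Expanding $\KL(Q\|\Pi(\cdot|Y))=\KL(Q\|\Pi)-\int\ell_n\,dQ+\log p(Y)$, cancelling $\log p(Y)$, and taking expectation under $P_{\theta_0}$ gives
\[
E_{\theta_0}\KL(Q^*\|\Pi)+E_{\theta_0}\int[\ell_n(\theta_0)-\ell_n(\theta)]\,dQ^*\leq\KL(Q_0\|\Pi)+\int K(P_{\theta_0},P_\theta)\,dQ_0.
\]
Local Gaussian computations around $\theta_0$, analogous to those underlying Theorem~\ref{thm:contraction}, bound the right-hand side by a constant multiple of $s_0\log p$, with the compatibility-number dependence absorbed into a factor $\lesssim C_\kappa$.

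For the lower bound on the left, the KL divergence $\KL(Q^*\|\Pi)$ decomposes coordinate-wise once the hierarchical prior~\eqref{prior} is replaced by its product-Bernoulli envelope via~\eqref{prior_cond}. Each coordinate contributes a Bernoulli KL penalty of the form $\gamma_j^*\log(\gamma_j^*/\rho_p)$ with effective marginal inclusion probability $\rho_p\lesssim p^{-A_4}$; up to coordinates with $\gamma_j^*<\rho_p$ (which are collectively negligible), this produces
\[
\KL(Q^*\|\Pi)\geq \tfrac{A_4}{2}(\log p)\sum_{j=1}^p\gamma_j^*\,-\,O(s_0\log p).
\]
Combining the two displayed inequalities and discarding the nonnegative log-likelihood deficit on the left yields $E_{\theta_0}\sum_j\gamma_j^*\lesssim s_0 C_\kappa$, and the Markov reduction proves the theorem; the $o(1)$ term absorbs lower-order slack in the small-inclusion and local-likelihood estimates.

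The main obstacle is the passage from the non-factorized hierarchical prior to the per-coordinate selection penalty: condition~\eqref{prior_cond} must be used carefully to replace $\pi_p(s)\binom{p}{s}^{-1}$ by a product-Bernoulli surrogate inside the KL decomposition, and the slab Laplace factor has to be handled jointly with the Gaussian factors of $Q^*$ (this is where Laplace rather than Gaussian slabs matter technically). A secondary difficulty is bookkeeping the compatibility numbers $\bar\kappa,\underline{\kappa}$ through the test-distribution bounds on $\KL(Q_0\|\Pi)$ and $\int K(P_{\theta_0},P_\theta)\,dQ_0$ so that exactly the factor $C_\kappa$ reappears in the conclusion; these estimates largely reuse the calculations developed for Theorem~\ref{thm:contraction}.
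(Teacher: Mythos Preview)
Your approach is genuinely different from the paper's, but it contains a real gap. The step ``discarding the nonnegative log-likelihood deficit on the left'' is not justified: the quantity
\[
E_{\theta_0}\int\big[\ell_n(\theta_0)-\ell_n(\theta)\big]\,dQ^*(\theta)
\]
need not be nonnegative, because $Q^*$ is \emph{data-dependent}. If $Q^*$ were a fixed distribution you could swap $E_{\theta_0}$ and the $\theta$-integral and recover $\int K(P_{\theta_0},P_\theta)\,dQ^*\geq 0$, but here $Q^*=Q^*(Y)$ and the two sources of randomness are entangled. Indeed, $Q^*$ tends to place mass near high-likelihood points of the observed data, so $\int[\ell_n(\theta_0)-\ell_n(\theta)]\,dQ^*$ can be negative on typical samples. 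Without a separate lower bound on this term (of the right order $-O(s_0\log p)$ on a high-probability event), your inequality does not yield control of $\sum_j\gamma_j^*$. The secondary issue you flag---turning the hierarchical prior into a coordinate-wise penalty to lower bound $\KL(Q^*\|\Pi)$---is also nontrivial and not resolved in the proposal.

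The paper's proof avoids both difficulties entirely. It does not go through $\sum_j\gamma_j^*$ or an ELBO comparison with a test distribution. Instead it reuses the KL-transfer principle (Lemma~\ref{lem:KL_VB_thm}) exactly as in Theorem~\ref{thm:contraction}: one takes $\Theta_n=\{\theta:|S_\theta|\geq M_n s_0\}$, invokes the exponential posterior bound for model dimension (Lemma~\ref{lem: thm4(1):asymp}) to get $E_{\theta_0}\Pi(\Theta_n|Y)1_{\mA_n}\leq 2e^{-(A_4/2)M_n s_0\log p}$, and plugs in the already-established bound $\KL(Q^*\|\Pi(\cdot|Y))1_{\mA_n}\leq D_2 s_0\log p$ from Lemma~\ref{lem:KL:asymp}. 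Lemma~\ref{lem:KL_VB_thm} then gives $E_{\theta_0}Q^*(\Theta_n)1_{\mA_n}=O(C_\kappa/M_n)$, and $P_{\theta_0}(\mA_n^c)=o(1)$ supplies the remaining term. The whole argument is two lines once Theorem~\ref{thm:contraction} is in place; no new estimates are needed.
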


We have thus shown the VB posterior $Q^*$ (1) concentrates at the optimal rate around the sparse truth in both $\ell_2$ and prediction loss and (2) does not select overly large models, with the VB posterior mean sharing property (1). These provide some reassuring theoretical guarantees regarding the behaviour of this scalable and interpretable sparse VB approximation. Our results are reflected in practice, as our VB algorithm performs better empirically than commonly used sparse VB approaches, see Section \ref{sec:num:anal}.

\section{Variational algorithm}\label{sec:algorithm}

We present a coordinate-ascent variational inference (CAVI) algorithm to compute the VB posterior $Q^*$ in \eqref{VB}. Consider the prior \eqref{prior} with $\theta_j \sim^{iid} (1-w)\delta_0+w\text{Lap}(\lambda)$ and hyperprior $w\sim \text{Beta}(a_0,b_0)$. Introducing binary latent variables $(z_j)_{j=1}^p$, this spike and slab prior has hierarchical representation
\begin{equation}\label{prior_beta}
\begin{split}
w & \sim \text{Beta}(a_0,b_0), \\
z_j | w & \sim^{iid} \text{Bernoulli}(w),\\
\theta_j|z_j & \sim^{ind} (1-z_j)\delta_0+ z_j \text{Lap}(\lambda) .
\end{split}
\end{equation}
Minimizing the objective \eqref{VB} is intractable for Bayesian logistic regression, so we instead minimize a surrogate objective obtained by maximizing a lower bound on the marginal likelihood following the ideas of \cite{jaakkola2000,carbonetto2012}, see Section \ref{sec:algorithm_derivation} for details. This common approach is known to lead to improved accuracy in approximation \cite{jaakkola2000}, see also Chapter 10.6 of \cite{bishop2006}. The surrogate objective is non-convex, as is typically the case in VB, so the CAVI algorithm can be sensitive to initialization \cite{blei2017} and parameter updating order \cite{rayszabo2019}. Introducing a free parameter $\eta\in \R^n$, we can establish the upper bound
\begin{equation}\label{VB_modified}
\KL (Q_{\mu,\sigma,\gamma}||\Pi(\cdot|Y)) \leq \KL(Q_{\mu,\sigma,\gamma}||\Pi) - E_{\theta\sim Q_{\mu,\sigma,\gamma}}[f(\theta,\eta)]
\end{equation}
for a suitable function $f$ defined in \eqref{lower_bound}. We minimize the right-hand side over the variational family $Q_{\mu,\sigma,\gamma} \in \Q$, i.e. over the parameters $\mu,\sigma,\gamma$. Since we seek the tightest possible upper bound in \eqref{VB_modified}, we also minimize this over the free parameter $\eta$. In particular, CAVI alternates between updating $\eta$ for fixed $\mu,\sigma,\gamma$ and then cycling through $\mu_j$, $\sigma_j$, $\gamma_j$ and updating these while keeping all other parameters fixed. Keeping all other parameters fixed, one updates $\mu_j$ and $\sigma_j$ by minimizing
\begin{align}\label{CAVI_mu}
\mu_j \mapsto \quad & \lambda \sigma_j \sqrt{\frac{2}{\pi}} e^{-\frac{\mu_j ^2}{2\sigma_j^2}} + \lambda\mu_j \mathrm{erf}\bigg( \frac{\mu_j}{\sqrt{2}\sigma_j}\bigg) + \mu_j^2 \sum_{i=1}^n \frac{1}{4\eta_i} \tanh(\eta_i/2)  x_{ij}^2 \nonumber \\
& + \mu_j \bigg( \sum_{i=1}^n \frac{1}{2\eta_i} \tanh(\eta_i/2) x_{ij} \sum_{k\neq j} \gamma_k x_{ik} \mu_k  - \sum_{i=1}^n (y_i-1/2)x_{ij} \bigg), \\
\sigma_j \mapsto \quad & \lambda \sigma_j \sqrt{\frac{2}{\pi}} e^{-\frac{\mu_j^2}{2\sigma_j^2}} + \lambda\mu_j \mathrm{erf}\left( \frac{\mu_j}{\sqrt{2}\sigma_j}\right) -\log \sigma_j +\sigma_j^2  \sum_{i=1}^n \frac{1}{4\eta_i} \tanh(\eta_i/2)  x_{ij}^2, \nonumber
\end{align}			
respectively, where $\mathrm{erf}(x) = 2/\sqrt{\pi} \int_{0}^x e^{-t^2} dt$ is the error function. One updates $\gamma_j$ by solving
\begin{align}\label{CAVI_alpha}
 - \log \frac{\gamma_j}{1-\gamma_j} & = \log \frac{b_0}{a_0} +\lambda \sigma_j \sqrt{\frac{2}{\pi}} e^{-\frac{\mu_j^2}{2\sigma_j^2}} + \lambda\mu_j \mathrm{erf}\left( \frac{\mu_j}{\sqrt{2}\sigma_j}\right) - \mu_j \sum_{i=1}^n (y_i-1/2)x_{ij} \nonumber - \frac{1}{2} \\
& \quad + \sum_{i=1}^n \frac{1}{4\eta_i} \tanh(\eta_i/2) \bigg( x_{ij}^2 (\mu_j^2 + \sigma_j^2) + 2x_{ij} \mu_j \sum_{k\neq j} \gamma_k x_{ik} \mu_k \bigg) - \log (\lambda\sigma_j) 
\end{align}
and updates $\eta$ via
\begin{equation}\label{CAVI_eta}
\eta_i^2 =  E_{\mu,\gamma,\sigma}(x_i^T \theta)^2 =  \sum_{k = 1}^{p} \gamma_k x_{ik}^2(\mu_k^2 + \sigma_k^2) + \sum_{k = 1}^{p}\sum_{l\neq k} (\gamma_kx_{ik}\mu_k)(\gamma_l x_{il}\mu_l).
\end{equation}
A full derivation of \eqref{VB_modified}-\eqref{CAVI_eta} can be found in Section \ref{sec:algorithm_derivation}  In our implementation, we minimize \eqref{CAVI_mu} using the limited memory Broyden-Fletcher-Goldfarb-Shanno (BFGS) algorithm \cite{liu1989}. To improve scalability, we also perform the parameter updates in parallel, updating the objective function parameter values only at the end of each iteration. Details are provided in Algorithm \ref{algorithm}.

	\begin{algorithm}[H]\label{algorithm}
		\SetAlgoLined
		\KwIn{$X\in\R^{n\times p}, Y\in \{0,1\}^{p}$}
		$(\mu,\sigma,\gamma) \gets \mathtt{init\_param}()$\\
		\While{$!\mathtt{convergence}$}{
			$\eta \gets \mathtt{update\_lower\_bound}(\mu, \sigma, \gamma)$
			\tcp*{implements eq.\eqref{CAVI_eta}}
			$\texttt{obj\_fun} \gets \texttt{generate\_obj\_fun}(\mu,\sigma,\gamma,\eta, X, Y)$\\
			\For{$j\in [p]$}{
				$(\mu_j, \sigma_j) \gets \mathtt{L\_BFGS}\big(\mathtt{obj\_fun.at}(j)\big)$
				\tcp*{minimizes eq.\eqref{CAVI_mu}}
			}
			$\gamma \gets \mathtt{update\_alpha}(\mu,\sigma,\gamma,\eta, X,Y)$
			\tcp*{implements eq.\eqref{CAVI_alpha}}
		}
		\KwOut{$\mu\in \R^{p}, \sigma^2\in\R_{>0}^{p}, \gamma \in [0,1]^p$}
		\caption{\texttt{Modified CAVI for variational Bayes with Laplace slabs}}
	\end{algorithm}

\section{Numerical study}\label{sec:num:anal}	

We empirically compare the performance of our VB method \textbf{VB (Lap)} based on prior \eqref{prior_beta} with parameters $a_0=b_0=\lambda=1$, to other state-of-the-art Bayesian variable selection methods in a simple simulation study. We implemented Algorithm \ref{algorithm} in C++ using the \texttt{Rcpp} interface and used the Armadillo linear algebra library and ensmallen optimization library, see \cite{bhardwaj2018,sanderson2016}. Note that the VB objective function is highly non-convex and so the local minimum returned by Algorithm \ref{algorithm} does not necessarily equal the global minimizer to the VB optimzation problem \eqref{VB}. For comparison, we first consider the usual VB approach \textbf{VB (Gauss)}, where the same variational family \eqref{VarFamily} is used, but the Laplace slabs are replaced by standard normal distributions in the prior \eqref{prior_beta} (e.g. \cite{huang2016,logsdon2010,ormerod2017,titsias2011,zhang2019}). We also compare our approach with the \textbf{varbvs} \cite{carbonetto2012}, \textbf{SkinnyGibbs} \cite{narisetty2019}, \textbf{BinaryEMVS} \cite{mcdermott2016}, \textbf{BhGLM} \cite{yi2019} and \textbf{rstanarm} \cite{goodrich2020} packages. We provide a brief description of these methods in Section \ref{sec:method_description} in the supplement. Note that the choice of hyperparameters can affect each of these methods and performance gains are possible from using good data-driven choices. For instance, our algorithm is sensitive to the choice of $\lambda$, see Section \ref{sec:hyperparameter} in the supplement.

%choice of the hyperparameter $\lambda$ can affect the performance of our algorithm (see Section \ref{sec:hyperparameter}) and it may be helpful to use a data-driven choice, e.g. cross validation. 

Due to space constraints, we provide here only one test case and defer the remaining tests to Section \ref{sec:extra_experiments} of the supplement. We take $n=250$, $p=500$ and $X$ to be a standard Gaussian design matrix, i.e. $X_{ij}\sim^{iid} N(0,1)$, and set  the true signal $\theta_0 = (2,2,0,\dots,0)^T$ to be $s=2$ sparse. We ran the experiment 200 times for each method and report the means and standard deviations of the following performance measures: (i) true positive rate (TPR), (ii) false discovery rate (FDR), (iii) $\ell_2$-loss of the posterior mean $\hat{\theta}$, i.e. $\|\hat{\theta}-\theta_0\|_2$, (iv) mean-squared predictive error (MSPE) of the posterior mean, i.e. $(\tfrac{1}{n} \sum_{i=1}^n |\Psi(x_i^T\hat\theta)-\Psi(x_i^T\theta_0)|^2)^{1/2}$ and (v) run time in seconds. Since the BhGLM and rstanarm packages do not have explicit model selection subroutines, the TPR and FDR are not applicable. Similarly, SkinnyGibbs provides posterior model selection probabilities and not the posterior mean, hence neither the $\ell_2$-error or MSPE are applicable. The results are in Table \ref{table:sim1}.

\begin{threeparttable}
\caption{Comparing sparse Bayesian methods in high-dimensional logistic regression.}\label{table:sim1}
  \renewrobustcmd{\bfseries}{\fontseries{b}\selectfont}
  \begin{tabular}{lSSSSS}
    \toprule
    Algorithm & {TPR} & {FDR} & {$\ell_2$-error} & {MSPE} & {Time} \\
    \midrule
    VB (Lap) & \bfseries 1.00 \pm 0.00 & \bfseries 0.03 \pm 0.10 & \bfseries 0.57 \pm 0.37 & \bfseries 0.04 \pm 0.02 & 12.10 \pm 0.49 \\
    VB (Gauss) & \bfseries 1.00 \pm 0.00 & 0.84 \pm 0.04 & 3.34 \pm 0.46 & 0.25 \pm 0.03 & 0.98 \pm 0.51 \\
    varbvs & \bfseries 1.00 \pm 0.00 & 0.92 \pm 0.01 & 1.30 \pm 0.15 & 0.16 \pm 0.02 &\bfseries 0.08 \pm 0.01 \\
    SkinnyGibbs & \bfseries 1.00 \pm 0.00 & 0.90 \pm 0.01 & \textcolor{white}{00.00}\,\,\, \text{--} \textcolor{white}{00.00} &  \textcolor{white}{00.00}\,\,\, \text{--} \textcolor{white}{00.00} & 22.58 \pm 3.58 \\
    BinEMVS & \bfseries 1.00 \pm 0.00 & 0.88 \pm 0.03 & 4.25 \pm 0.56 & 0.29 \pm 0.02 & 31.04 \pm 0.94\\
    BhGLM & \textcolor{white}{00.00}\,\,\, \text{--} \textcolor{white}{00.00} & \textcolor{white}{00.00}\,\,\, \text{--} \textcolor{white}{00.00} & 2.69 \pm 0.88 & 0.21 \pm 0.16 & 1.60 \pm 0.37 \\
    rstanarm & \textcolor{white}{00.00}\,\,\, \text{--} \textcolor{white}{00.00} & \textcolor{white}{00.00}\,\,\, \text{--} \textcolor{white}{00.00} & 0.74 \pm 0.58 & 0.31 \pm 0.24 & 197.20 \pm 26.16 \\
    \bottomrule
  \end{tabular}
  \begin{tablenotes}
    \small
  \item Results based on 200 runs for an i.i.d. standard Gaussian design matrix $X\in\mathbb{R}^{250\times 500}$ with $X_{ij} \sim^{iid} N(0,1)$ and true signal $\theta_{0,1}=\theta_{0,2}=2$, $\theta_{0,j}=0$ for $3\leq j\leq 500$.
  \end{tablenotes}
\end{threeparttable}

In Table \ref{table:sim1} and the additional simulations in Section \ref{sec:suppl:sim}, we see that using Laplace slabs in the prior \eqref{prior_beta} generally outperforms the commonly used Gaussian slabs in all statistical metrics ($\ell_2$-loss, MPSE, FDR), in some cases substantially so.
This highlights the empirical advantages of using Laplace rather than Gaussian slabs for the prior underlying the VB approximation and matches the theory presented in Section \ref{sec:results}, as well as similar observations in linear regression \cite{rayszabo2019}. We also highlight the excellent FDR of VB (Lap), which warrants further investigation given its importance in Bayesian variable selection and multiple hypothesis testing.

%Due to the efficient BFGS optimization algorithm, the computational run times are also comparable, despite analytically tractable update formulas being available for Gaussian slabs but not Laplace slabs.
However, the computational run-time is substantially slower for Laplace slabs due to the absence of analytically tractable update formulas as in the Gaussian case. The optimization routines required in Algorithm \ref{algorithm} mean a naive implementation can significantly increase the run-time; we are currently working on a more efficient implementation as an R-package \texttt{sparsevb} \cite{sparsevb} that should reduce the run-time by at least an order of magnitude.

The other methods perform roughly similarly to our algorithm in terms of estimation ($\ell_2$-error) and prediction (MSPE) error, doing better in certain test cases and worse in others. It seems there is no clearly dominant Bayesian approach regarding accuracy. However, our method provides the best results concerning model selection, generally having the best FDR while maintaining a competitive TPR, as suggested by Theorem \ref{thm:selection}. The other methods all perform comparably, with varbvs having the best TPR but substantially higher FDR, meaning it identifies many coefficients to be significant, both correctly and incorrectly.

We note that the two VB methods based on Gaussian prior slabs (VB (Gauss) and varbvs), which use analytic update formulas, are all significantly faster than the other methods. All the VB methods (including ours) scale much better with larger model sizes (e.g. $p=5000$) than the other methods, which did not finish running in a reasonable amount of time when $p=5000$, see Section \ref{sec:extra_experiments}. As expected, the MCMC methods generally performed slowest, though SkinnyGibbs, which is designed to scale up MCMC to larger sparse models, is indeed an order of magnitude faster than rstanarm.

\subsection{Coverage of marginal credible intervals}

An advantage of Bayesian methods is their ability to perform uncertainty quantification via credible sets. The present mean-field VB approximation provides access to marginal credible sets for individual features, which can be more informative than just the VB posterior mean, and are often of interest to practitioners. However, VB is known to generally underestimate the posterior variance, which can lead to bad uncertainty quantification. In view of the excellent FDR control of our VB method in earlier simulations, we further investigate the performance of these marginal credible sets empirically. 

We consider 4 tests cases, consisting of the above example (Test 0) and Tests 1-3 from Section \ref{sec:extra_experiments}. In each case, we computed 95\% marginal credible intervals for the coefficients, i.e. the intervals $I_j$, $j=1,\dots,p$, of smallest length such that $Q^*(\theta_j \in I_j) \geq 0.95$. We ran each experiments 200 times and report the mean and standard deviation of the coverage and length of these credible intervals for both the (true) zero and non-zero coefficients in Table \ref{tab:coverage}.

\begin{threeparttable}
  \caption{Marginal VB credible intervals for individual features}\label{tab:coverage}
  \renewrobustcmd{\bfseries}{\fontseries{b}\selectfont}
  \begin{tabular}{lSSSS}
    \toprule
    & {Test 0} & {Test 1} & {Test 2} & {Test 3} \\
    \midrule
    \textrm{Coverage (non-zero coefficients)} & 1.00 \pm 0.00 & 0.82 \pm 0.19 & 0.94 \pm 0.11 & 0.38 \pm 0.12 \\
    \textrm{Length (non-zero coefficients)} & 2.87 \pm 0.14 & 2.56 \pm 0.20 & 2.27 \pm 0.16 & 1.33 \pm 0.32 \\
    \textrm{Coverage (zero coefficients)} & 1.00 \pm 0.00 & 1.00 \pm 0.00 & 1.00 \pm 0.00 & 0.99 \pm 0.01 \\
    \textrm{Length (zero coefficients)} & 0.00 \pm 0.00 & 0.01 \pm 0.01 & 0.00 \pm 0.00 & 0.03 \pm 0.01 \\
    \bottomrule
  \end{tabular}
  \begin{tablenotes}
    \small
\item $X\in \mathbb{R}^{250\times 500}$, $X_{ij} \sim^{iid} N(0,\sigma^2)$. (0) $\sigma=1$, $s = 2$, $\theta_{0,1:s} = 2$; (1)  $\sigma=0.25$, $s=5$, $\theta_{0,1:s} = 4$; (2) $\sigma=2$, $s=10$, $\theta_{0,1:s} = 6$; (3) $\sigma=0.5$, $s=15$, $\theta_{0,1:s}  \sim^{iid} \mathrm{Unif}(-2,2)$.
  \end{tablenotes}
\end{threeparttable}

We see that for the true zero coefficients $\theta_{0,j} = 0$, the coverage is close to one with intervals of nearly zero width, meaning the VB posterior sets $\gamma_j = Q^* (\theta_j \neq 0) < 0.05$. This matches the other evidence (Theorem \ref{thm:selection} and the good FDR) that the VB posterior $Q^*$ does not include too many spurious variables. For true non-zero coefficients $\theta_{0,j} \neq 0$, the coverage is moderate to excellent in the first 3 experiments, which matches the good TPR seen above. However, coverage is low in Test 3, which is an especially difficult test typically containing several small non-zero coefficients (here $\theta_{0,j} \sim^{iid}\text{Unif}(-2,2)$) that are hard to detect. The low coverage is not surprising, since it is known that even in the full Bayes case, coefficients below a certain size cannot be consistently covered \cite{vanderpas2017}.

The very promising results here suggest our VB approach might be effective for uncertainty quantification for individual features, however this requires further investigation. We lastly note that while it may be reasonable to consider marginal credible intervals, one should be careful about using more general VB credible sets due to the mean-field approximation.

\section{Discussion}\label{sec:discussion}

This paper investigates a scalable and interpretable mean-field variational approximation of the popular spike and slab prior with Laplace slabs in high-dimensional logistic regression. We derive theoretical guarantees for this approach, proving (1) optimal concentration rates for the VB posterior in $\ell_2$ and prediction loss around a sparse true parameter, (2) optimal convergence rates for the VB posterior predictive mean and (3) that the VB posterior does not select overly large models, thereby controlling the number of false discoveries.

We verify in a numerical study that the empirical performance of the proposed method reflects these theoretical guarantees. In particular, using Laplace slabs in the prior underlying the variational approximation can substantially outperform the same VB method with Gaussian prior slabs, as is typically used in the literature, though at the expense of slower computation. The proposed approach performs comparably with other state-of-the-art sparse high-dimensional Bayesian variable selection methods for logistic regression, but scales substantially better to high-dimensional models where other approaches based on the EM algorithm or MCMC are not computable. We are currently working on a more efficient implementation as an R-package \texttt{sparsevb} \cite{sparsevb} that should improve the run-time.

Based on the promising FDR control and coverage of our VB method in the simulations, we plan to further investigate the theoretical and empirical performance of our algorithm for multiple hypothesis testing and variable selection, see \cite{castillo2018} for promising first results for the (unscalable) original posterior. Furthermore, the results derived here are the first steps towards better understanding VB methods in sparse high-dimensional nonlinear models. It opens up several interesting future lines of research for applying scalable VB implementations of spike and slab priors in complex high-dimensional models, including Bayesian neural networks \cite{polson:2018}, (causal) graphical models \cite{li:2018} and high-dimensional Bayesian time series \cite{scott:2013}.

\textbf{Acknowledgements}: We thank 4 reviewers for their useful comments that helped improve the presentation of this work.
%\textbf{Funding}: 
Botond Szab\'o received funding from the Netherlands Organization for Scientific Research (NWO) under Project number: 639.031.654. 

\textbf{Impact statement}: Our theoretical results seek to better understand how sparse VB approximations work and thus improve their performance and reliability in practice. Since our results have no specific applications in mind, seeking rather to explain and improve an existing method, any potential broader impact will derive from improved performance in fields where such methods are already used.

\section*{Supplementary material for `Spike and slab variational Bayes for high dimensional logistic regression'}

In this supplement we present streamlined proofs for the asymptotic results (Section \ref{sec:proofs}), additional simulations (Section \ref{sec:suppl:sim}), discussion concerning the design matrix conditions (Section \ref{sec:extra_design}), full statements and proofs of the non-asymptotic results (Section \ref{sec:suppl:proofs}) and a derivation of the VB algorithm (Section \ref{sec:algorithm_derivation}).
 
\section{Proofs}\label{sec:proofs}

Our proofs use the following general result, which allows us to bound the VB probabilities of sets having exponentially small probability under the true posterior.

\begin{lemma}[Theorem 5 of \cite{rayszabo2019}]\label{lem:KL_VB_thm}
Let $\Theta_n$ be a subset of the parameter space, $A$ be an event and $Q$ be a distribution for $\theta$. If there exists $C>0$ and $\delta_n>0$ such that
\begin{equation}
E_{\theta_0} \Pi(\theta \in \Theta_n|Y)1_A \leq C e^{-\delta_n},\label{cond:KL_VB_thm}
\end{equation}
then
$$E_{\theta_0} Q(\theta\in \Theta_n)1_A \leq \tfrac{2}{\delta_n} \left[ E_{\theta_0} \KL (Q||\Pi(\cdot|Y))1_A + Ce^{-\delta_n/2} \right].$$
\end{lemma}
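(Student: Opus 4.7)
The plan is to obtain the conclusion as a direct consequence of the Donsker--Varadhan (change of measure) variational inequality applied to an indicator test function, then optimise a free scalar parameter to balance the two error terms. This is a standard trick for transferring exponentially small posterior mass bounds into bounds for any distribution that is KL-close to the posterior, and it is exactly the mechanism that lets us transport the frequentist concentration of $\Pi(\cdot\mid Y)$ over to $Q^*$.

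First, I would fix $t>0$ and apply the Donsker--Varadhan formula pointwise in $Y$ with test function $\theta\mapsto t\,\mathbf{1}_{\Theta_n}(\theta)$. This yields
\begin{equation*}
t\,Q(\Theta_n) \;\leq\; \KL\bigl(Q\,\|\,\Pi(\cdot\mid Y)\bigr) \;+\; \log\!\int e^{t\,\mathbf{1}_{\Theta_n}(\theta)}\, d\Pi(\theta\mid Y) \;=\; \KL\bigl(Q\,\|\,\Pi(\cdot\mid Y)\bigr) \;+\; \log\!\bigl(1+(e^{t}-1)\Pi(\Theta_n\mid Y)\bigr).
\end{equation*}
Using $\log(1+x)\leq x$ for $x\geq 0$ and dividing by $t$ produces the pointwise bound
\begin{equation*}
Q(\Theta_n) \;\leq\; \frac{1}{t}\KL\bigl(Q\,\|\,\Pi(\cdot\mid Y)\bigr) \;+\; \frac{e^{t}-1}{t}\,\Pi(\Theta_n\mid Y).
\end{equation*}

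Next, I would multiply by $\mathbf{1}_A$, take $E_{\theta_0}$, and invoke the hypothesis \eqref{cond:KL_VB_thm} to control the second summand by $C(e^{t}-1)e^{-\delta_n}/t$. Finally, selecting $t=\delta_n/2$ gives $(e^{t}-1)/t \leq e^{\delta_n/2}\cdot(1/t)=2e^{\delta_n/2}/\delta_n$, so $(e^{t}-1)e^{-\delta_n}/t \leq 2e^{-\delta_n/2}/\delta_n$, and altogether
\begin{equation*}
E_{\theta_0} Q(\Theta_n)\mathbf{1}_A \;\leq\; \frac{2}{\delta_n}\Bigl[E_{\theta_0}\KL\bigl(Q\,\|\,\Pi(\cdot\mid Y)\bigr)\mathbf{1}_A \;+\; Ce^{-\delta_n/2}\Bigr],
\end{equation*}
which is the claimed inequality.

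There is no real obstacle to this argument: the only subtlety is that $\Pi(\cdot\mid Y)$ is a random measure, but since the Donsker--Varadhan bound holds pointwise in $Y$ we may freely integrate over $Y$ at the end. The balance in the choice $t=\delta_n/2$ is the only genuine decision, and it is forced by wanting the $\Pi(\Theta_n\mid Y)$ term to decay at the slower rate $e^{-\delta_n/2}$ while keeping the prefactor of order $1/\delta_n$.
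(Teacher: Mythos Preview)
Your proof is correct. The paper does not give its own proof of this lemma; it is quoted verbatim as Theorem~5 of \cite{rayszabo2019}. Your argument via the Donsker--Varadhan variational formula applied to $t\mathbf{1}_{\Theta_n}$, followed by $\log(1+x)\leq x$ and the choice $t=\delta_n/2$, is exactly the standard route to this type of bound, and it matches the way the paper itself uses the duality formula elsewhere (see the proof of Theorem~\ref{thm:convergence:postmean}, where the same inequality from \cite{boucheron:2013} is invoked).
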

We must thus show that there exist events $(A_n)$ satisfying $P_{\theta_0}(A_n) \to 1$ such that on $A_n$:
\begin{enumerate}
\item the posterior puts at most exponentially small probability $Ce^{-\delta_n}$ outside $\Theta_n$,
\item the KL divergence between the VB posterior $Q^*$ and true posterior is $O(\delta_n)$.
\end{enumerate}
To aid readibility, in this section we state all intermediate results in asymptotic form, keeping track of only the leading order terms as $n,p \to \infty$. We provide full non-asymptotic statements in Section \ref{sec:suppl:proofs}, which may be skipped on first reading, though most of the technical difficulty is contained in these results.

For $t,L,M_1,M_2>0$, define the events
\begin{equation*}
\begin{split}
& \mA_{n,1} (t) = \{\|\nabla_\theta \ell_{n,\theta_0}(Y)\|_\infty \leq t\},\\
& \mA_{n,2} (L) = \{ \Pi( \theta \in \R^p: |S_\theta| \leq Ls_0 |Y) \geq 3/4\},\\
& \mA_{n,3} (M_1,M_2) = \{ \Pi(\theta\in \R^p: \|\theta - \theta_0\|_2 > M_1\sqrt{s_0\log p}/\|X\| |Y) \leq e^{-M_2s_0\log p} \}.
\end{split}
\end{equation*}
The first event bounds the score function $\nabla_\theta \ell_{n,\theta_0}(Y)$ and is needed to control the first order term in the Taylor expansion of the log-likelihood, see \eqref{Ln}. The second says the posterior concentrates on models of size at most a constant multiple of the true model dimension.
The last event says the posterior places all but exponentially small probability on an $\ell_2$-ball of the optimal radius about the truth and is used for a localization argument.

The event required to apply Lemma \ref{lem:KL_VB_thm} is defined by
\begin{equation}\label{An event}
\mA_n (t,L,M_1,M_2) = \mA_{n,1}(t) \cap \mA_{n,2}(L) \cap \mA_{n,3}(M_1,M_2).
\end{equation}
The proof has an iterative structure, where we localize the posterior based on the event $\mA_{n,i-1}$ to prove $\mA_{n,i}$ has high probability. The idea to iteratively localize the posterior during the proofs is a useful technique from Bayesian nonparametrics (e.g. \cite{nicklray2019}).

\begin{lemma}\label{lem:prob:asymp}
Suppose the prior satisfies \eqref{prior_cond} and \eqref{lambda}, and the design matrix satisfies condition \eqref{cond:design} for  $L=2\max\{ A_4/5,(1.1 + 4\alpha^2/\underline{\kappa}+2A_4 +\log(4+\upk (s_0)))/A_4\}$. Then for $p$ large enough,
\begin{align*}
P_{\theta_0}\Big( \mA_n (t,L,M_1,M_2)^c  \Big)= O(1/p),
\end{align*}
with $t=\|X\|\sqrt{\log p}$, $M_1=D_0\sqrt{L}/\downk ((1+4L/A_4)s_0)$, $D_0 =(24/\sqrt{A_4})\max(25\alpha,\frac{1+A_3}{16},\frac{3A_4+4}{32})$, and $M_2=2L$.
\end{lemma}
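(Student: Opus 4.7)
The plan is to prove the three events $\mA_{n,i}$ separately and union bound, proceeding iteratively so that each later event is established on the high-probability intersection of the earlier ones. For $\mA_{n,1}(t)$ with $t = \|X\|\sqrt{\log p}$, the $j$th score coordinate
\begin{equation*}
\nabla_j \ell_{n,\theta_0}(Y) = \sum_{i=1}^n X_{ij}\bigl(Y_i - \Psi(x_i^T\theta_0)\bigr)
\end{equation*}
is a sum of independent centered random variables under $P_{\theta_0}$, each taking values in an interval of length $|X_{ij}|$. Hoeffding's inequality combined with $\sum_i X_{ij}^2 \leq \|X\|^2$ yields $P_{\theta_0}(|\nabla_j \ell_{n,\theta_0}(Y)| > t) \leq 2\exp(-2t^2/\|X\|^2) = 2/p^2$, so a union bound over $j$ gives $P_{\theta_0}(\mA_{n,1}(t)^c) \leq 2/p$.

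For $\mA_{n,2}(L)$, I would work on $\mA_{n,1}$ and bound $\Pi(|S_\theta| > Ls_0 \mid Y) = N/D$ by a prior-mass/evidence comparison of Ghosal--van der Vaart type, adapted to the model selection prior. The numerator $N$, decomposed over dimensions $s > Ls_0$, is controlled by the volume factor $\binom{p}{s}$ against the prior's geometric decay $\pi_p(s) \leq (A_2 p^{-A_4})^{s - s_0}\pi_p(s_0)$ from \eqref{prior_cond}, combined with the integrated Laplace slab density. The denominator $D$ is lower-bounded by localizing on a small ball of radius $\sim 1/\lambda$ around $\theta_0$ on the true $s_0$-dimensional support: a second-order Taylor expansion of the log-likelihood, the score bound $\|\nabla\ell_{n,\theta_0}\|_\infty \leq t$ from $\mA_{n,1}$, and the Hessian smoothness constant $\upk(s_0)$ combine to give $D \geq \exp(-C s_0 \log p)$ with $C$ an explicit function of $\alpha, A_3, \underline{\kappa}$ and $\log(4+\upk(s_0))$; a Markov-type estimate on the evidence transfers this to a high-probability event. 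The specific threshold $L = 2\max\{A_4/5, (1.1 + 4\alpha^2/\underline{\kappa}+2A_4 +\log(4+\upk(s_0)))/A_4\}$ is engineered so that $A_4 L s_0 \log p / 2$ beats this $C s_0 \log p$ evidence bound with at least one extra $\log p$ of slack, delivering $P_{\theta_0}(\mA_{n,2}^c \cap \mA_{n,1}) = O(1/p)$.

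For $\mA_{n,3}(M_1,M_2)$, I would work on $\mA_{n,1} \cap \mA_{n,2}$ and argue by a localized quadratic expansion. Since $\mA_{n,2}$ restricts the effective posterior support to $\{|S_\theta| \leq L s_0\}$ up to posterior mass $1/4$, the union $|S_\theta \cup S_{\theta_0}| \leq (1 + L)s_0$ is sparse and the Taylor expansion
\begin{equation*}
\ell_{n,\theta}(Y) - \ell_{n,\theta_0}(Y) = (\theta-\theta_0)^T\nabla\ell_{n,\theta_0}(Y) - \tfrac{1}{2}(\theta-\theta_0)^T X^T \widetilde W X(\theta-\theta_0)
\end{equation*}
is tractable. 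The quadratic term is bounded below by $\downk((1+4L/A_4)s_0)\|X\|^2\|\theta-\theta_0\|_2^2$ via the restricted eigenvalue definition, while the linear term is bounded above by $t\sqrt{(1+L)s_0}\|\theta-\theta_0\|_2$ by Cauchy--Schwarz and $\mA_{n,1}$. For $\|\theta - \theta_0\|_2 \geq M_1\sqrt{s_0\log p}/\|X\|$ with $M_1 = D_0\sqrt{L}/\downk((1+4L/A_4)s_0)$ and $D_0$ as specified, the quadratic term dominates the linear one by a margin of $M_2 s_0 \log p = 2L s_0 \log p$, yielding $e^{\ell_{n,\theta}-\ell_{n,\theta_0}} \leq e^{-M_2 s_0 \log p}$ pointwise on the set. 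Combined with the evidence lower bound from the previous step, this gives $\Pi(\|\theta-\theta_0\|_2 > M_1\sqrt{s_0 \log p}/\|X\| \mid Y) \leq e^{-M_2 s_0 \log p}$ on the conditioning event, and another Markov bound closes out $P_{\theta_0}(\mA_{n,3}^c \cap \mA_{n,1}\cap \mA_{n,2}) = O(1/p)$.

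The principal obstacle is the bookkeeping of constants across these three steps. The explicit choices of $L$, $M_1$, $M_2$ and $D_0$ must be simultaneously calibrated so that (i) the prior's dimension penalty beats the combined evidence lower bound and the combinatorial volume factor $\binom{p}{s}$, (ii) the quadratic term in the Taylor expansion dominates the linear one on the relevant sparse perturbations of $\theta_0$, and (iii) the resulting posterior decay rate $e^{-M_2 s_0 \log p}$ is strong enough to absorb the KL budget when Lemma~\ref{lem:KL_VB_thm} is applied downstream. Extracting these explicit constants, rather than a qualitative decay, through the Taylor bound (where smoothness of $\Psi$ contributes $\upk(s_0)$-factors) and through the Laplace-slab prior-mass estimates (where the tail rate $\lambda \asymp \|X\|\sqrt{\log p}$ must be handled carefully) is the genuine technical work, and is the reason the full non-asymptotic version of this lemma is deferred to Section~\ref{sec:suppl:proofs}.
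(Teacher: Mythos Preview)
Your overall architecture — union bound over the three events, Hoeffding for $\mA_{n,1}$, a prior-mass/evidence comparison for $\mA_{n,2}$, and a likelihood-curvature argument for $\mA_{n,3}$ — matches the paper's route through Lemma~\ref{lem:prob}. The $\mA_{n,1}$ step is exactly right, and your sketch for $\mA_{n,2}$ is essentially what the paper does via Lemma~\ref{lem: thm4(1)} (itself a specialization of Atchad\'e's Theorem~3(1)).

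The genuine gap is in your treatment of $\mA_{n,3}$. You write the second-order Taylor remainder as $\tfrac{1}{2}(\theta-\theta_0)^T X^T\widetilde W X(\theta-\theta_0)$ and then claim this is bounded below by $\downk((1+4L/A_4)s_0)\|X\|^2\|\theta-\theta_0\|_2^2$ ``via the restricted eigenvalue definition.'' But $\downk(\cdot)$ is defined with the weight matrix $W_{ii}=g''(x_i^T\theta_0)$ at the \emph{true} parameter, whereas $\widetilde W_{ii}=g''(\xi_i)$ at an intermediate point. Since $g''(t)=\Psi(t)(1-\Psi(t))\to 0$ as $|t|\to\infty$, for $\theta$ far from $\theta_0$ (exactly the regime you need to control) the entries of $\widetilde W$ can be arbitrarily small and no comparison with $W$ is available. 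The paper handles this through the self-concordance-type inequality
\[
\mL_{n,\theta}(y)\ \leq\ -\sum_{i=1}^n g''(x_i^T\theta_0)\,\frac{|x_i^T(\theta-\theta_0)|^2}{2+|x_i^T(\theta-\theta_0)|},
\]
which restores the weight $W$ at $\theta_0$ at the price of a \emph{non-quadratic} rate function $r(x)=\downk((L+1)s_0)\|X\|^2 x^2/(1+\tfrac{1}{2}\|X\|_\infty\sqrt{(L+1)s_0}\,x)$. This forces the subsequent argument (Lemma~\ref{lem: thm4(2)} via Lemma~\ref{lem:thm:3}(2)) to use a peeling decomposition over shells $jM\eps\leq\|\theta-\theta_0\|_2\leq (j+1)M\eps$ together with packing-number bounds, rather than the single ``quadratic beats linear by margin $M_2 s_0\log p$'' step you propose. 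This is also where the design-matrix condition \eqref{cond:design} enters, to ensure $\phi_r(2\eta_L)$ is of the right order; your purely quadratic sketch never calls on the $\|X\|_\infty$ part of that condition. Incidentally, the paper does not condition on $\mA_{n,2}$ to prove $\mA_{n,3}$: both exponential posterior bounds are established on $\mA_{n,1}$ alone, and then Markov's inequality with $K=3L$ in Lemma~\ref{lem: combined:contraction} absorbs the factor $e^{M_2 s_0\log p}$.
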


The next two lemmas establish exponential posterior bounds on the event $\mA_{n,1}( \|X\|\sqrt{\log p}) \supset \mA_n(t,L,M_1,M_2)$. The first states that the posterior concentrates on models of size at most a constant multiple of the true model size $s_0=|S_{\theta_0}|$. The second states that the posterior concentrates on an $\ell_2$-ball of optimal radius about the true parameter $\theta_0$.

\begin{lemma}\label{lem: thm4(1):asymp}
Suppose the prior satisfies \eqref{prior_cond} and \eqref{lambda}. If for $L\geq 2(1.1 + 4\alpha^2/\underline{\kappa}+2A_4 +\log(4+\upk (s_0)))/A_4$ the design matrix satisfies condition \eqref{cond:design}, then for $p$ large enough,
\begin{align*}
E_{\theta_0}[\Pi \left( \theta\in \R^p: |S_\theta| \geq L s_0 \mid Y \right) 1_{\mA_{n,1}(\|X\|\sqrt{\log p}) }]   \leq 2 \exp \left\{- (LA_4/2)s_0 \log p  \right\}.
\end{align*}
\end{lemma}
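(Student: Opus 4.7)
The plan is to bound $\Pi(|S_\theta|\geq Ls_0\mid Y) = N/D$ with $N = \int_{|S_\theta|\geq Ls_0} (p_\theta/p_{\theta_0})\, d\Pi$ and $D = \int (p_\theta/p_{\theta_0})\,d\Pi$, combining a Fubini-based control of $E_{\theta_0}N$ with an exponential lower bound on $D$ on the event $\mA_{n,1}(\|X\|\sqrt{\log p})$.

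For the numerator, Fubini and $E_{\theta_0}[p_\theta/p_{\theta_0}] = 1$ give $E_{\theta_0}N = \Pi(|S_\theta|\geq Ls_0)$. Iterating \eqref{prior_cond} upward from $s_0$ yields $\pi_p(s) \leq (A_2 p^{-A_4})^{s-s_0}\pi_p(s_0)$ for $s\geq s_0$, and summing the resulting geometric series bounds $E_{\theta_0}N$ by $C\pi_p(s_0) A_2^{(L-1)s_0} p^{-A_4(L-1)s_0}$.

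For the denominator, I would restrict the integration to $\theta$ supported exactly on $S_0$ in a small $\ell_2$-neighbourhood of $\theta_0$ of radius $\rho \asymp \sqrt{s_0\log p}/\|X\|$. The prior factor $\pi_p(s_0)\binom{p}{s_0}^{-1}$ appears, times a Laplace small-ball mass on $\R^{s_0}$. A second-order Taylor expansion
\begin{equation*}
\log(p_\theta/p_{\theta_0}) = \nabla\ell_n(\theta_0)^T(\theta-\theta_0) - \tfrac{1}{2}(\theta-\theta_0)^T X^T W(\tilde\theta) X(\theta-\theta_0),
\end{equation*}
together with $W\preceq I/4$ and the definition of $\bar\kappa(s_0)$, bounds the quadratic term by $\tfrac{1}{8}\bar\kappa(s_0)\|X\|^2\rho^2$. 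On $\mA_{n,1}$, the linear term is controlled via $\|\nabla\ell_n(\theta_0)\|_\infty\leq \|X\|\sqrt{\log p}$ and $\|\theta-\theta_0\|_1\leq\sqrt{s_0}\|\theta-\theta_0\|_2$, giving $s_0\log p$. The Laplace small-ball mass on an axis-aligned box inside this $\ell_2$-neighbourhood is controllable to exponential order in $s_0\log p$ via the calibration \eqref{lambda}; the $\lambda\|\theta-\theta_0\|_1$-type penalty from evaluating the zero-centred Laplace density near $\theta_0$ is absorbed through a compatibility-type argument on the sparse cone $\{\|\theta_{S_0^c}\|_1\leq 7\|\theta_{S_0}\|_1\}$, which brings in $\underline\kappa$ and is the source of the $4\alpha^2/\underline\kappa$ term in the stated threshold. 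Altogether, $D \geq \pi_p(s_0)\binom{p}{s_0}^{-1} e^{-c_1 s_0\log p}$ on $\mA_{n,1}$, with $c_1$ matching the form in the hypothesis on $L$.

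Combining, on $\mA_{n,1}$ one has $\Pi(|S_\theta|\geq Ls_0\mid Y)\cdot 1_{\mA_{n,1}} \leq N/\inf_{\mA_{n,1}} D$. Taking expectations, the $\pi_p(s_0)$ factors cancel and $\binom{p}{s_0}\leq p^{s_0}$ absorbs one factor of $\log p$, so the bound becomes $C A_2^{(L-1)s_0} p^{(1+c_1-A_4(L-1))s_0}$. For $p$ large, $A_2^{(L-1)s_0}\leq p^{\epsilon s_0}$, and the hypothesis $L\geq 2(1.1 + 4\alpha^2/\underline\kappa+2A_4+\log(4+\bar\kappa(s_0)))/A_4$ forces the overall exponent to be at most $-(LA_4/2)s_0\log p$, giving the claim. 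The main technical obstacle is the denominator lower bound with the precise constant $c_1$: calibrating the Laplace small-ball contribution together with the Taylor remainder and the restricted-eigenvalue penalty is delicate, and in particular the $4\alpha^2/\underline\kappa$ contribution requires the cone-based compatibility argument to handle the $\lambda\|\theta-\theta_0\|_1$ Laplace penalty uniformly over the support-$S_0$ neighbourhood.
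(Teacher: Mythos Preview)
Your overall numerator/denominator split is the right shape, but the plain Fubini bound $E_{\theta_0}N=\Pi(|S_\theta|\geq Ls_0)$ is too crude to close the argument. The difficulty is the Laplace small-ball mass in your denominator: the slab is centred at zero, so on model $S_0$ the prior density in a $\rho$-ball around $\theta_0$ carries an unavoidable factor $e^{-\lambda\|\theta_0\|_1}$, not $e^{-\lambda\|\theta-\theta_0\|_1}$ as you write. With $\lambda\asymp\|X\|\sqrt{\log p}$ and no constraint on $\|\theta_0\|_1$ anywhere in the hypotheses, this factor can be arbitrarily small compared to $e^{-s_0\log p}$. Your numerator bound $\Pi(|S_\theta|\geq Ls_0)\lesssim\pi_p(s_0)p^{-A_4(L-1)s_0}$ contains no compensating $e^{-\lambda\|\theta_0\|_1}$, so after dividing you are left with $e^{+\lambda\|\theta_0\|_1}$ and the bound is vacuous. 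A compatibility argument on the cone $\{\|\theta_{S_0^c}\|_1\leq 7\|\theta_{S_0}\|_1\}$ cannot repair this: $\underline{\kappa}$ controls the quadratic form $\theta\mapsto\|W^{1/2}X\theta\|_2^2$, i.e.\ the likelihood curvature, and says nothing about the prior mass at $\theta_0$.

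The paper, via Atchad\'e's Theorem 3 (restated as Lemma~\ref{lem:thm:3}(1) and specialised in Lemma~\ref{lem: thm4(1)}), does \emph{not} use plain Fubini on the numerator. It bounds the posterior ratio in a way that makes the $\lambda\|\theta_0\|_1$ contributions cancel between numerator and denominator; what survives in the numerator is then controlled on $\mA_{n,1}(\lambda/2)$ by splitting according to whether $\theta-\theta_0$ lies in the cone $\mathcal{N}=\{\|\theta_{S_0^c}\|_1\leq 7\|\theta_{S_0}\|_1\}$. On the cone, a second-order expansion together with $\underline{\kappa}$ bounds $\mL_{n,\theta}$; off the cone, the Laplace penalty difference $\lambda\|\theta\|_1-\lambda\|\theta_0\|_1$ is itself large enough. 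This on-cone optimisation is the true source of the $4\alpha^2/\underline{\kappa}$ term: it arises from
\[
a=-\tfrac{1}{2}\inf_{x>0}\Big[\tfrac{\underline{\kappa}\,\|X\|^2 x^2}{1+4\sqrt{s_0}\,\|X\|_\infty x}-4\lambda\sqrt{s_0}\,x\Big]\leq \tfrac{4\lambda^2 s_0}{\underline{\kappa}\,\|X\|^2}\leq \tfrac{4\alpha^2}{\underline{\kappa}}\,s_0\log p,
\]
not from the denominator small-ball calculation.
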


\begin{lemma}\label{lem: combined:contraction:asymp}
Suppose the prior satisfies \eqref{prior_cond} and \eqref{lambda}. If for $K>\max\{ A_4,2(1.1+4\alpha^2/\downk + 2A_4 + \log(4+\upk(s_0))/A_4\}$, the design matrix satisfies \eqref{cond:design} with $L=2K/A_4$, then for $p$ large enough,
\begin{align*}
E_{\theta_0} \left[\Pi \left( \theta \in \R^p: \|\theta-\theta_0\|_2 \geq  \frac{D_1\sqrt{K}}{\underline\kappa\big((2K/A_4+1)s_0\big)} \frac{\sqrt{s_0\log p}}{\|X\| } \bigg|Y \right)1_{\mA_{n,1}( \|X\|\sqrt{\log p})} \right] \leq 8  e^{ -Ks_0\log p },
\end{align*}
where  $D_1=16 A_4^{-1/2} \max\{25\alpha,\frac{3+2A_2+A_3}{16}\}$.
\end{lemma}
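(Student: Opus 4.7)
The plan is to use the standard two-step Bayesian contraction argument, tailored to the sparse logistic regression setting as in \cite{atchade2017,castillo2015,castillo2012}. First, I would apply Lemma \ref{lem: thm4(1):asymp} with $L=2K/A_4$, which on $\mA_{n,1}(\|X\|\sqrt{\log p})$ bounds the posterior mass on $\{|S_\theta|>Ls_0\}$ by $2e^{-Ks_0\log p}$. This contributes a factor of $2$ to the final bound and reduces the problem to bounding the posterior mass of $\{\|\theta-\theta_0\|_2\geq r,\,|S_\theta|\leq Ls_0\}$ by $6e^{-Ks_0\log p}$ in expectation on $\mA_{n,1}$, where $r=D_1\sqrt{K}\sqrt{s_0\log p}/(\underline\kappa((L+1)s_0)\|X\|)$. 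Writing this restricted posterior as $N/D$, I bound $D$ from below and $N$ from above separately.

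For the denominator, I would use an evidence lower bound via the prior mass on the true support $S_{\theta_0}$. Iterating \eqref{prior_cond} gives $\pi_p(s_0)/\binom{p}{s_0}\gtrsim (A_1/e)^{s_0}p^{-(A_3+1)s_0}$. Restricting the Laplace slab integral to a small neighbourhood of $\theta_0$ on which the density is lower bounded by $(\lambda/2)^{s_0}e^{-\lambda(\|\theta_0\|_1+\delta)}$, and combining with a second-moment/Chebyshev argument for the log-likelihood ratio using the Hessian bound $\|W^{1/2}X\|_{\mathrm{op}}\le\tfrac12\|X\|$, yields $D\geq e^{-c_1(A_2,A_3,\alpha)s_0\log p}$ with $P_{\theta_0}$-probability tending to one. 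The dependence on $A_2,A_3$ here explains their appearance in the constant $D_1$.

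For the numerator I would Taylor expand the log-likelihood around $\theta_0$ in integral form,
\begin{align*}
\log\tfrac{p_\theta(Y)}{p_{\theta_0}(Y)} = (\theta-\theta_0)^T\nabla_\theta\ell_{n,\theta_0}(Y) - (\theta-\theta_0)^T X^T\widetilde W X(\theta-\theta_0),
\end{align*}
with $\widetilde W_{ii}=\int_0^1(1-t)\Psi'(x_i^T(\theta_0+t(\theta-\theta_0)))\,dt\in[0,1/8]$. On $\mA_{n,1}$, Hölder together with $|S_\theta\cup S_{\theta_0}|\leq(L+1)s_0$ and Cauchy--Schwarz bound the linear term by $\|X\|\sqrt{(L+1)s_0\log p}\,\|\theta-\theta_0\|_2$. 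After replacing $\widetilde W$ by the fixed $W$ at the truth via a self-concordance argument, the quadratic term is bounded below by $c\,\underline\kappa((L+1)s_0)\|X\|^2\|\theta-\theta_0\|_2^2$. With $u=\|\theta-\theta_0\|_2$, the log-likelihood ratio is then at most the downward parabola $\sqrt{(L+1)s_0\log p}\|X\|u-c\,\underline\kappa((L+1)s_0)\|X\|^2u^2$. Setting this equal to $-Ks_0\log p$ and using $L+1\asymp 2K/A_4$ determines the critical radius $u_*\asymp\sqrt{K}\sqrt{s_0\log p}/(\underline\kappa\|X\|)$ and identifies the constant $D_1=16A_4^{-1/2}\max\{25\alpha,(3+2A_2+A_3)/16\}$, with $25\alpha$ coming from $\lambda\leq\alpha\|X\|\sqrt{\log p}$ in the prior mass step and $(3+2A_2+A_3)$ from the evidence bound. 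Integrating against the prior and dividing by $D$ yields $N/D\leq 6e^{-Ks_0\log p}$.

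The principal technical obstacle is the Hessian-localization: $\underline\kappa((L+1)s_0)$ is defined through $W$ at $\theta_0$, whereas the Taylor remainder involves $\widetilde W$ along the segment $[\theta_0,\theta]$. I would address this either via the self-concordance inequality $\widetilde W_{ii}\gtrsim W_{ii}\,e^{-|x_i^T(\theta-\theta_0)|}$ for the logistic loss, where the exponent is controlled on the relevant region by \eqref{cond:design}, or by a two-regime decomposition of $\{\|\theta-\theta_0\|_2\geq r,\,|S_\theta|\leq Ls_0\}$ into a bounded-radius part on which the Hessians are comparable and a far-away part on which convexity of the loss and the first-order optimality of $E_{\theta_0}\ell_n$ at $\theta_0$ force an even stronger negative bound. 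This is also where the hypothesis $K>2(1.1+4\alpha^2/\underline\kappa+2A_4+\log(4+\overline\kappa(s_0)))/A_4$ is consumed: it guarantees the compatibility and localization estimates remain valid throughout, and the condition $K>A_4$ is what makes the dimension-restriction step of Lemma \ref{lem: thm4(1):asymp} exceed the exponent $Ks_0\log p$ needed for the combination.
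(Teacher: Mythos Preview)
Your first step---splitting off $\{|S_\theta|>Ls_0\}$ via Lemma~\ref{lem: thm4(1):asymp} with $L=2K/A_4$---is exactly what the paper does. The divergence is in how the restricted piece $\{\|\theta-\theta_0\|_2\geq r,\ |S_\theta|\leq Ls_0\}$ is handled.

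The paper does \emph{not} use a direct numerator/denominator argument with a uniform pointwise bound on the log-likelihood ratio. Instead, it invokes the non-asymptotic Lemma~\ref{lem: combined:contraction}, which in turn rests on Lemma~\ref{lem: thm4(2)}, itself a specialization of Atchad\'e's general Theorem~3 (Lemma~\ref{lem:thm:3}(2) here). That machinery slices the complement of the ball into dyadic shells $\{jM_0\eps<\|\theta-\theta_0\|_2\leq(j+1)M_0\eps\}$, controls each shell via its packing number $D_j$ and the rate function
\[
r(x)=\frac{\underline\kappa((L+1)s_0)\|X\|^2x^2}{1+\tfrac12\|X\|_\infty\sqrt{(L+1)s_0}\,x},
\]
and sums the resulting exponential bounds. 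The denominator bound is folded into the second sum in Lemma~\ref{lem:thm:3}(2) rather than handled separately.

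Your ``downward parabola'' is the place where the argument, as written, would fail. In logistic regression the Taylor remainder is \emph{not} uniformly quadratic: the Hessian $\widetilde W_{ii}=\int_0^1(1-t)\Psi'(x_i^T(\theta_0+t(\theta-\theta_0)))\,dt$ collapses toward zero as $|x_i^T(\theta-\theta_0)|\to\infty$, so one can only prove
\[
\mL_{n,\theta}\leq -\sum_{i=1}^n g''(x_i^T\theta_0)\,\frac{|x_i^T(\theta-\theta_0)|^2}{2+|x_i^T(\theta-\theta_0)|},
\]
which yields the rate function $r(x)\asymp x^2/(1+cx)$ above, linear rather than quadratic for large $x$. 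A bound of the form $au-bu^2$ is simply not available on the whole set $\{u\geq r\}$. The self-concordance inequality you mention gives the same $x^2/(1+cx)$ shape, not a recovery of the pure quadratic, and the two-regime sketch does not explain how the far region is controlled once the curvature has degraded to linear order---this is precisely why the paper (following \cite{atchade2017}) tracks the rate function explicitly and uses condition~\eqref{cond:design} to ensure the linear part of $r$ still dominates the score term $\|X\|\sqrt{(L+2)s_0\log p}\,u$ and the prior term $\lambda c_0 u$ shell by shell.

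In short: same decomposition, but the paper's second step goes through Atchad\'e's chaining with the correct non-quadratic rate function, while your pointwise-parabola route needs the Hessian comparison you flag as the ``principal technical obstacle'' and do not actually carry out.
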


Lemmas \ref{lem:prob:asymp}, \ref{lem: thm4(1):asymp} and \ref{lem: combined:contraction:asymp} follow immediately from their non-asymptotic counterparts Lemmas \ref{lem:prob}, \ref{lem: thm4(1)} and \ref{lem: combined:contraction}, respectively, in Section \ref{sec:suppl:proofs}. We finally control the KL divergence between the VB posterior $Q^*$ and posterior $\Pi(\cdot|Y)$ on the event $\mA_n$; see Lemma \ref{lem:KL} for the corresponding non-asymptotic result. This is the most difficult technical step in establishing our result.

\begin{lemma}\label{lem:KL:asymp}
 Consider the event $\mA_n = \mA_n(t,L,M_1,M_2)$ in Lemma \ref{lem:prob:asymp}. Then for sufficiently large $p$, the VB posterior $Q^*$ satisfies 
\begin{align*}
\KL (Q^*||\Pi(\cdot|Y))1_{\mA_n} \leq D_2 s_0\log p ,
\end{align*}
with $D_2 = L(\frac{(9D_0/4+ \alpha/2)D_0\bar\kappa(Ls_0)}{\downk((1+4L/A_4)s_0)^2} + \frac{\alpha}{2\downk (Ls_0)})$ and where $L,D_0$ are given in Lemma \ref{lem:prob:asymp}.
%\begin{align*}
%L&=\frac{2.1 + 4\alpha^2/\underline{\kappa}+2A_4 +\log(4+\upk (s_0))}{A_4},\\
%D_2&=L+\frac{9M_1^2\bar\kappa(Ls_0)}{4}+\alpha\sqrt{L}(2M_1 +\frac{4\sqrt{L}}{4\underline{\kappa}(Ls_0)}+\frac{\sqrt{L}}{\log p})-\frac{2\log(\underline{\kappa}(Ls_0))}{\log p},
%\end{align*}
%where $M_1$ is specified in Lemma \ref{lem:prob:asymp}.
\end{lemma}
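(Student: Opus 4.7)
The plan is to exploit the variational characterization $Q^* = \argmin_{Q \in \Q} \KL(Q||\Pi(\cdot|Y))$: for any test distribution $\tilde{Q} \in \Q$,
\begin{equation*}
\KL(Q^*||\Pi(\cdot|Y)) \leq \KL(\tilde{Q}||\Pi(\cdot|Y)) = \KL(\tilde{Q}||\Pi) - E_{\tilde{Q}}[\ell_n(\theta)] + \log P(Y),
\end{equation*}
where $\ell_n(\theta)$ denotes the log-likelihood and $P(Y) = \int e^{\ell_n(\theta)} d\Pi(\theta)$ the evidence. I would take $\tilde{Q}$ to concentrate on the true model $S_0$: set $\gamma_j = 1$ and $\mu_j = \theta_{0,j}$ for $j \in S_0$, $\gamma_j = 0$ for $j \notin S_0$, and $\sigma_j^2 = \sigma^2$ with $\sigma^2 \asymp 1/\lambda^2$. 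The task then reduces to bounding the three pieces on the event $\mA_n$.

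For $\KL(\tilde{Q}||\Pi)$, the model-complexity contribution $\log\binom{p}{s_0} - \log \pi_p(s_0)$ is $O(s_0 \log p)$ by \eqref{prior_cond}, and each single-coordinate KL between $N(\theta_{0,j},\sigma^2)$ and $\mathrm{Lap}(\lambda)$ yields a Gaussian-entropy piece $\tfrac12\log(\lambda^2\sigma^2) = O(1)$ plus the leading Laplace cross-entropy $\lambda E_{\tilde{Q}}|\theta_j| = \lambda|\theta_{0,j}| + O(\lambda\sigma)$. For $-E_{\tilde{Q}}[\ell_n(\theta)]$, log-concavity of the logistic likelihood and the Hessian bound $-\nabla^2\ell_n \preceq \tfrac14 X^TX$ give
\begin{equation*}
-E_{\tilde{Q}}[\ell_n(\theta)] \leq -\ell_n(\theta_0) + \tfrac{\sigma^2}{8} s_0 \|X\|^2 = -\ell_n(\theta_0) + o(s_0\log p).
\end{equation*}
For $\log P(Y)$, on $\mA_{n,2}\cap\mA_{n,3}$ one has $\Pi(B|Y)\geq 1/2$ for $B = \{\|\theta - \theta_0\|_2 \leq M_1\sqrt{s_0\log p}/\|X\|,\ |S_\theta|\leq Ls_0\}$, so $P(Y) \leq 2\int_B e^{\ell_n(\theta)}d\Pi(\theta)$, which I bound further by $2\sup_{\theta\in B} e^{\ell_n(\theta)}\cdot\Pi(B)$.

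The key technical step is the cancellation of the $\lambda\|\theta_0\|_1$ contribution. Using $\mA_{n,1}$ and concavity, $\sup_B \ell_n(\theta) \leq \ell_n(\theta_0) + M_1\sqrt{L+1}\,s_0\log p$ via $\|\theta-\theta_0\|_1\leq\sqrt{(L+1)s_0}\|\theta-\theta_0\|_2$. For $\Pi(B)$, the dominant contribution is the true model $S_0$: integrating the Laplace density over $B_{S_0}$ gives
\begin{equation*}
\Pi(B\cap\{S_\theta=S_0\}) \leq \tfrac{\pi_p(s_0)}{\binom{p}{s_0}}(\lambda/2)^{s_0} e^{-\lambda\|\theta_0\|_1 + \lambda r\sqrt{s_0}}\,\mathrm{vol}(B_{S_0}),
\end{equation*}
with $r = M_1\sqrt{s_0\log p}/\|X\|$. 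Since $\lambda r\sqrt{s_0} \leq \alpha M_1 s_0\log p$ by \eqref{lambda} and the volume term $\tfrac{s_0}{2}\log(r^2/s_0)$ is $O(s_0\log p)$, the $-\lambda\|\theta_0\|_1$ factor in $\log\Pi(B)$ exactly absorbs the $+\lambda\|\theta_0\|_1$ from $\KL(\tilde{Q}||\Pi)$. The same reasoning applied to the remaining sum over models of size $\leq Ls_0$ contributes at most an extra $Ls_0\log p$ from the count of subsets.

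After cancellation, the surviving terms are $O(s_0\log p)$ with constants fixed by the compatibility numbers. The linear-score contribution produces the $D_0\sqrt{L}$-type factors via $\mA_{n,1}$ and the $\ell_1$-to-$\ell_2$ inequality; the quadratic Taylor remainder $\tfrac12 E_{\tilde Q}[(\theta-\theta_0)^T X^T W X(\theta-\theta_0)]$ arising from the Laplace-approximation lower bound on $\int_B e^{\ell_n}\,d\Pi$ is sandwiched between $\overline{\kappa}(Ls_0)$ above and $\underline{\kappa}((1+4L/A_4)s_0)$ below by \eqref{cond:design}, producing the $\overline{\kappa}/\underline{\kappa}^2$ dependence in $D_2$; the Laplace-slab deviation $\lambda r\sqrt{s_0}$ gives the $\alpha/\underline{\kappa}$ factor. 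The main obstacle will be the simultaneous bookkeeping of the Laplace cancellation across all admissible models together with the tuning of $\sigma^2$: small enough that $\tilde{Q}$ remains tight around $\theta_0$, yet large enough that the entropy $-\tfrac{s_0}{2}\log(2\pi e\sigma^2)$ stays within the $s_0\log p$ budget, which pins down $\sigma^2\asymp 1/\lambda^2$.
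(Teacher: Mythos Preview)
Your approach is genuinely different from the paper's and is in principle workable, but the execution has gaps and the final paragraph claiming the specific constant $D_2$ is not supported by your argument.

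\textbf{How the paper argues.} The paper does \emph{not} bound the evidence $\log P(Y)$. Instead it writes the posterior as a mixture $\Pi(\cdot|Y)=\sum_S \hat w_S\,\Pi_S(\cdot|Y)$ and exploits that any $Q\in\Q$ supported on a single model $S'$ satisfies $\KL(Q\|\Pi(\cdot|Y))=\log(1/\hat w_{S'})+\KL(Q_{S'}\|\Pi_{S'}(\cdot|Y))$. On $\mA_{n,2}\cap\mA_{n,3}$ one can find a model $S'$ with $|S'|\le Ls_0$, $\|\theta_{0,S'^c}\|_2\le M_1\sqrt{s_0\log p}/\|X\|$ and posterior weight $\hat w_{S'}\ge (2e)^{-1}p^{-Ls_0}$, so $\log(1/\hat w_{S'})\le Ls_0\log p+O(1)$. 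The test distribution is then centred at the \emph{linearised MLE} $\mu_{S'}=\theta_{0,S'}+(X_{S'}^TX_{S'})^{-1}X_{S'}^T(Y-E_{\theta_0}Y)$ with diagonalised Fisher covariance, and $\KL(N_{S'}\|\Pi_{S'}(\cdot|Y))$ is bounded by inserting the intermediate Gaussian $N_{S'}(\mu_{S'},(X_{S'}^TX_{S'})^{-1})$ and controlling the ratio of normalising constants via the localisation from $\mA_{n,3}$. The $\tfrac94 M_1^2\bar\kappa(Ls_0)$ term in $D_2$ comes precisely from this normalising-constant ratio, not from any Taylor remainder of the test distribution.

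\textbf{Where your argument is incomplete.} Your evidence bound hinges on $\log\Pi(B)\le -\lambda\|\theta_0\|_1+O(s_0\log p)$. You only verify this for the single model $S_0$ and assert ``the same reasoning'' handles the rest. The missing observation is that for any $S$ with $B_S\neq\emptyset$ one has $\|\theta_{0,S^c}\|_2\le r$, hence $\|\theta_{0,S}\|_1\ge\|\theta_0\|_1-\sqrt{s_0}\,r$, which is what forces the $e^{-\lambda\|\theta_0\|_1}$ factor uniformly over models; without stating this, the cancellation is unjustified. Your claim that $S_0$ is the ``dominant contribution'' is also not correct as written: the prior weights $\pi_p(s)$ decay like $p^{-A_4 s}$, so the sum over $s\le Ls_0$ is controlled by the prior, not by singling out $s_0$.

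\textbf{The constant.} Your final paragraph does not follow from the argument you laid out. With $\tilde Q$ centred at $\theta_0$ and $\sigma^2\asymp 1/\lambda^2$, the term $E_{\tilde Q}[(\theta-\theta_0)^TX^TWX(\theta-\theta_0)]$ is $O(s_0/\log p)$, i.e.\ negligible; it cannot produce the $\bar\kappa(Ls_0)/\underline\kappa^2$ dependence you claim. The leading contribution in your route would be $M_1\sqrt{L+1}\,s_0\log p$ from $\sup_B\ell_n$ plus $\alpha M_1\sqrt{L}\,s_0\log p$ from the Laplace displacement, giving a constant linear in $M_1$, not quadratic. So either you obtain a different (possibly smaller) constant than $D_2$, or you need the paper's normalising-constant argument after all---but that argument lives inside $\KL(N_{S'}\|\Pi_{S'}(\cdot|Y))$, not inside an evidence bound.
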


%\begin{align*}
%1+\frac{9D^2\bar\kappa(Ls_0)}{4\downk((1+4L/A_4)s_0)^2}+\frac{2D\alpha}{\downk ((1+4L/A_4)s_0)} + \frac{\alpha}{4\downk (Ls_0)} + \frac{\alpha}{\sqrt{\downk(1) \log p}}  -  \frac{\log (4\downk (Ls_0))}{\log p}
%\end{align*}
%\begin{align*}
%1+\frac{9D^2\bar\kappa(Ls_0)}{4\downk((1+4L/A_4)s_0)^2}+\frac{2D\alpha}{\downk ((1+4L/A_4)s_0)} + \frac{\alpha}{2\downk (Ls_0)} -  \frac{\log (4\downk (Ls_0))}{\log p}
%\end{align*}
%\begin{align*}
%1+\frac{D\bar\kappa(Ls_0)}{\downk((1+4L/A_4)s_0)^2} \left[\frac{9D}{4} + \alpha/2\right] + \frac{\alpha}{2\downk (Ls_0)} -  \frac{\log (4\downk (Ls_0))}{\log p}
%\end{align*}
%$$\alpha/2 \leq \sqrt{A_4}D/(2.24.25) = \sqrt{A_4}D/(1200)$$
%\begin{align*}
%\frac{(9D/4+ \alpha/2)D\bar\kappa(Ls_0)}{\downk((1+4L/A_4)s_0)^2} + \frac{\alpha}{2\downk (Ls_0)}
%\end{align*}

We briefly explain the heuristic idea behind the proof of Lemma \ref{lem:KL:asymp}. Since the VB posterior $Q^*$ is the minimizer of the KL objective \eqref{VB}, $\KL (Q^*||\Pi(\cdot|Y))\leq \KL (Q||\Pi(\cdot|Y))$ for any $Q \in \Q$ in the variational family. We upper bound this quantity for some $Q$ carefully chosen according to the logistic likelihood. We first identify a model $S \subseteq \{1,\dots,p\}$ which is not too far from the true model $S_{\theta_0}$ and to which the posterior assigns sufficient, though potentially exponentially small, probability. In the low dimensional setting $p\ll n$, Taylor expanding the log-likelihood $\ell_{n,\theta}-\ell_{n,\theta_0}$ asymptotically gives a Gaussian linear regression likelihood with rescaled design matrix and data. This motivates the distribution $Q\in \Q$ which fits a normal distribution $N_S(\mu_S,D_S)$ on $S$ with mean $\mu_S$ equal to the least squares estimator solving the linearized Gaussian approximation and covariance $D_S$ (a diagonalized version of) the covariance matrix of this estimator, again in the linearized model. While the Taylor expansion is not actually valid in the sparse high-dimensional setting $p\gg n$ considered here, we can nonetheless show that the approximation is still sufficiently good to apply Lemma \ref{lem:KL_VB_thm}.

\begin{proof}[Proof of Theorem \ref{thm:contraction}]
We apply Lemma \ref{lem:KL_VB_thm} with 
\begin{align*}
\Theta_n=\Big\{\theta\in\mathbb{R}^p:\, \|\theta-\theta_0\|_2\geq \frac{M_n^{1/2}\sqrt{s_0\log p}}{\underline\kappa(M_ns_0)\|X\| }\Big\}
\end{align*}
and $A = \mA_n = \mA_n(t,L,M_1,M_2)$ the event in Lemma \ref{lem:prob:asymp}. Since $\mA_n \subset \mA_{n,1}(\|X\|\sqrt{\log p})$, Lemma \ref{lem: combined:contraction:asymp} implies that $E_{\theta_0} \Pi(\theta \in \Theta_n|Y)1_{\mA_n} \leq 8 e^{-D_1^{-2}M_n s_0\log p}$, i.e. \eqref{cond:KL_VB_thm} holds with $\delta_n=D_1^{-2}M_n s_0\log p$. Using Lemma \ref{lem:KL_VB_thm} followed by Lemma \ref{lem:KL:asymp}, since $\delta_n \to \infty$,
\begin{align*}
E_{\theta_0} Q^*(\theta\in \Theta_n)1_{\mA_n} &\leq \tfrac{2}{\delta_n}\left[ E_{\theta_0} \KL (Q^*||\Pi(\cdot|Y))1_{\mA_n} + 8e^{-\delta_n/2} \right] \leq \frac{2\alpha^2 D_2}{M_n} (1+o(1)) = O(C_\kappa/M_n).
\end{align*}
Since $P_{\theta_0}(\mA_n) \to 1$ by Lemma \ref{lem:prob:asymp}, the first result follows. Since $\|\Psi'\|_\infty \leq 1/4$,
\begin{equation}
n\|p_\theta-p_{0}\|_n^2  \leq \tfrac{1}{16}\sum_{i=1}^n |x_i^T(\theta-\theta_0)|^2 = \tfrac{1}{16}\|X(\theta-\theta_0)\|_2^2 \leq \tfrac{1}{16}\overline{\kappa}((M_n+1)s_0) \|X\|^2 \|\theta-\theta_0\|_2^2,\label{eq:help:UB:pred}
\end{equation}
where the last inequality follows from Theorem \ref{thm:selection} and the definition of $\overline{\kappa}(\cdot)$. The second statement then follows by combining the first statement of the theorem with the above display.
\end{proof}

\begin{proof}[Proof of Theorem \ref{thm:convergence:postmean}]
By the duality formula for the KL divergence (\cite{boucheron:2013}, Corollary 4.15),
\begin{align*}
\int f(\theta) dQ^*(\theta)\leq \KL(Q^*\| \Pi(\cdot|Y)  )+\log \int e^{f(\theta)}d\Pi(\theta|Y)
\end{align*}
for any measurable $f$ such that $\int e^{f(\theta)} d\Pi(\theta|Y)<\infty$. Let $\mA_n = \mA(t,L,M_1,M_2)$ be the event in Lemma \ref{lem:prob:asymp}. Applying Jensen's inequality and taking $f(\theta)=cn\|p_\theta-p_0\|_n^2$ for $c>0$,
\begin{align}
cn \|\hat{p}^*-p_0\|_n^21_{\mathcal{A}_n}&= cn\|E^{Q^*}p_\theta-p_0\|_n^21_{\mathcal{A}_n}\leq E^{Q^*}cn\|p_\theta-p_0\|_n^2 1_{\mathcal{A}_n}\nonumber\\
&  \leq  \KL(Q^*\| \Pi(\cdot|Y)  )1_{\mathcal{A}_n} +1_{\mA_n} \log \int e^{cn\|p_\theta-p_0\|_n^2}d\Pi(\theta|Y).\label{eq:help:postmean}
\end{align}
By Lemma \ref{lem:KL:asymp}, the first term is bounded by $D_2s_0 \log p$. For notational convenience, write $\overline{\kappa}^*=\overline{\kappa}( \frac{2n}{A_4\log p}+ s_0)$ and $\underline{\kappa}^*=\underline{\kappa}( \frac{2n}{A_4\log p} +(1-2/A_4)s_0)$. For $D_1$ defined in Lemma \ref{lem: combined:contraction:asymp} and $K>0$ the minimal constant satisfying the conditions of Lemma \ref{lem: combined:contraction:asymp}, set
\begin{align*}
\mathcal{B}_0&= \left\{\theta\in\mathbb{R}^p:\,|S_\theta|\leq \tfrac{2n}{A_4\log p}-1,\,  n\|p_\theta-p_0\|_n^2\leq K \tfrac{D_1^2 \bar{\kappa}^* s_0\log p }{(\underline{\kappa}^*)^2}\right\},  \\
\mathcal{B}_j&=\Big\{\theta\in\mathbb{R}^p:\, |S_\theta|\leq \tfrac{2n}{A_4\log p}-1,\,  j\tfrac{ D_1^2\bar{\kappa}^* s_0\log p }{(\underline{\kappa}^*)^2}< n\|p_\theta-p_0\|_n^2\leq(j+1) \tfrac{D_1^2\bar{\kappa}^* s_0\log p }{(\underline{\kappa}^*)^2} \Big\},\\
\bar{\mathcal{B}}&= \left\{\theta\in\mathbb{R}^p:\, |S_\theta|> \tfrac{2n}{A_4\log p}-1\right\}.
\end{align*}
Since $E[U1_\Omega] = E[U|\Omega]P(\Omega)$, conditional Jensen's inequality gives 
$$E[(\log V) 1_\Omega] \leq P(\Omega) \log E[V|\Omega]= P(\Omega) \log E[V1_\Omega] - P(\Omega) \log P(\Omega)$$
for any random variable $V$ and event $\Omega$. The $E_{\theta_0}$-expectation of the second term in \eqref{eq:help:postmean} thus equals
\begin{align*}
& E_{\theta_0} \bigg[ \log\bigg(  \int_{\mathcal{B}_0}   e^{cn\|p_\theta-p_0\|_n^2}d\Pi(\theta|Y) + \sum_{j=K}^{n/(s_0\log p)-1}\int_{\mathcal{B}_j}   e^{cn\|p_\theta-p_0\|_n^2}d\Pi(\theta|Y)  \\
& \qquad \qquad \qquad \qquad \qquad +\int_{\bar{\mathcal{B}}}   e^{cn\|p_\theta-p_0\|_n^2}d\Pi(\theta|Y) \bigg)1_{\mA_n} \bigg] \\
&\leq P_{\theta_0}(\mA_n) \log E_{\theta_0} \bigg[ e^{cK \tfrac{D_1^2 \bar{\kappa}^* s_0\log p }{(\underline{\kappa}^*)^2}} + 
\sum_{j=K}^{n/(s_0\log p)-1} e^{ c(j+1) \tfrac{ D_1^2 \bar{\kappa}^* s_0\log p }{(\underline{\kappa}^*)^2}}\Pi( \mathcal{B}_j|Y)1_{\mA_n} + e^{cn} \Pi( \bar{\mathcal{B}}|Y) 1_{\mA_n} \bigg] \\
& \qquad - P_{\theta_0}(\mA_n) \log P_{\theta_0}(\mA_n).
\end{align*}
Using \eqref{eq:help:UB:pred} and Lemma \ref{lem: combined:contraction:asymp} gives $E_{\theta_0}[\Pi( \mathcal{B}_j|Y)1_{\mA_n}] \leq 8e^{-js_0\log p}$, while Lemma \ref{lem: thm4(1):asymp} gives $E_{\theta_0}[\Pi( \bar{\mathcal{B}}|Y) 1_{\mA_n}] \leq 2e^{-n}$. Taking  $c= c_n = \frac{(\underline{\kappa}^*)^2}{2D_1^2\bar\kappa^*} \leq 1/(128)$ and using that $P_{\theta_0}(\mA_n^c) = O(1/p)$ by Lemma \ref{lem:prob:asymp}, the last display is bounded by,
\begin{align*}
& \log \bigg[ e^{(K/2) s_0\log p } + 8\sum_{j=K}^{n/(s_0\log p)-1} e^{ (\frac{j+1}{2}-j)  s_0\log p} + 2e^{-(1-c)n} \bigg] +O(1/p) \leq Ks_0\log p(1+o(1)).
\end{align*}
Plugging in the preceding bounds for the right hand side of \eqref{eq:help:postmean},
\begin{align*}
n E_{\theta_0} \|\hat{p}^*-p_0\|_n^21_{\mathcal{A}_n}\leq c_n^{-1}(K+D_2)s_0\log p (1+o(1)) = O(c_n^{-1} C_\kappa s_0 \log p),
\end{align*}
for $C_\kappa$ the constant in Theorem \ref{thm:contraction}. Applying Markov's inequality and Lemma \ref{lem:prob:asymp},
$$ P_{\theta_0}(\|\hat{p}^*-p_0\|_n \geq r) \leq r^{-2} E_{\theta_0} [\|\hat{p}^*-p_0\|_n^21_{\mathcal{A}_n}] + P_{\theta_0}(\mA_n^c)  = O(r^{-2}n^{-1} c_n^{-1} C_\kappa s_0 \log p) + o(1).$$
Taking $r=M_n^{1/2} (\upk^*)^{1/2} (\downk^*)^{-1} \sqrt{s_0\log p/n}$ gives the result.
\end{proof}

\begin{proof}[Proof of Theorem \ref{thm:selection}]
This follows the same argument as the proof of Theorem \ref{thm:contraction}, taking $\Theta_n=\{\theta\in\mathbb{R}^p:\, |S_\theta| \geq M_n s_0\}$, $\delta_n=(A_4/2)M_n s_0\log p$ and applying Lemma \ref{lem: thm4(1):asymp} instead of Lemma \ref{lem: combined:contraction:asymp}.
\end{proof}

\section{Additional numerical results}\label{sec:suppl:sim}

\subsection{Description of Bayesian methods for logistic regression}\label{sec:method_description}
In the \textbf{varbvs} package \cite{carbonetto2012}, the standard VB algorithm was implemented for the prior \eqref{prior} with Gaussian instead of Laplace slabs and an additional layer of importance sampling for computing the low-dimensional prior hyperparameters. As for our algorithm, we set $a_0=b_0=1$. In the \textbf{BinaryEMVS} package \cite{mcdermott2016}, an expectation-maximization (EM) algorithm is implemented for fitting Bayesian spike and slab regularization paths for logistic regression. More concretely, the considered spike and slab prior takes the form 
\begin{align*}
\omega&\sim \text{Beta}(a_0,b_0),\\
z_j&\sim^{iid}\text{Bern}(\omega),\\
\theta_j &\sim^{iid} (1-z_j) \mathcal{N}(0, \sigma_1^2) + z_j\mathcal{N}(0, \sigma_2^2),\qquad \text{with $\sigma_1 \ll \sigma_2$}.
\end{align*}
In the simulation study, we use the parametrization $a_0=1$, $b_0=1$, $\sigma_1=0.025$ and $\sigma_2=5$. In the Bayesian hierarchical generalized linear model (\textbf{BhGLM} package) of \cite{yi2019}, an EM algorithm is implemented for a mixture of Laplace priors:
\begin{align*}
\omega&\sim \text{Unif}[0,1],\\
z_j&\sim^{iid}\text{Bern}(\omega),\\
\theta_j &\sim^{iid} (1-z_j)\text{Lap}(\lambda_1) + z_j \text{Lap}(\lambda_2) ,\qquad \text{with $\lambda_1\gg \lambda_2$}.
\end{align*}
We also work with the default parametrization of the bmlasso function of the BhGLM package, i.e. $s_1=1/\lambda_1=0.04$ and $s_2=1/\lambda_2=0.5$. We use the implementation of \textbf{SkinnyGibbs} in the supplementary material to \cite{narisetty2019}, using the function skinnybasad with the settings provided in the example from the manual, apart from taking pr=0.5 to reflect the present prior setting $a_0=b_0=1$.
%There's no real default for SkinnyGibbs so I used the settings as they are in the example from the manual. I used B0 = rep(0,p), Z0 = rep(0,p), nsplit=10, modif=1, nburn=2000, niter=5000, printitrsep=20000, maxsize=30. The only thing I set specifically different is pr = 1/2, to reflect the case alpha = beta = 1 for the prior inclusion probability.
Finally, we consider the \textbf{rstanarm} package, which makes Bayesian regression modeling via the probabilistic programming language Stan accessible in R. In the package, Hamiltonian Monte Carlo \cite{hoffman2011} was implemented for the horseshoe global-local shrinkage prior \cite{carvalho2010}. We run the function stan\_glm, again with the default parameterization, i.e. we set the global scale to 0.01 with degree of freedom 1, and the slab-scale to 2.5 with degrees of freedom 4. The default inferential algorithm runs 4 Markov chains with 2000 iterations each.

\subsection{Additional experiments}\label{sec:extra_experiments}
We provide five further test cases in addition to the experiment considered in Section \ref{sec:num:anal}. In all cases we consider Gaussian design matrices, but vary all other parameter. In tests 1-3, we take $n=250$ and $p=500$ as in Section  \ref{sec:num:anal}, while in experiments 4 and 5 we set $n=2500$ and $p=5000$. The entries of the design matrices have independent centered Gaussian distributions with standard deviations $\sigma=0.25,\ 2,\ 0.5,\ 0.5,\ 1$, respectively. The true underlying signal has sparsity levels $s=5,\,10,\, 15,\,25,\ 25$, respectively, with the non-zero signal coefficients located at the beginning of the signal with values equal to (1) $\theta_{0,j}=4$, (2) $\theta_{0,j} = 6$, (3) $\theta_{0,j} \sim^{iid}\text{Unif}(-2,2)$, (4) $\theta_{0,j} = 2$ and (5) $\theta_{0,j} \sim^{iid}\text{Unif}(-1,1)$ for $j=1,\dots,s$.

We ran each experiment 200 times and report the means and standard deviations of the performance metrics in Table \ref{tab::test_results}.
The $\ell_2$-error ($\ell_2(\hat\theta,\theta_0)=\|\hat\theta-\theta_0\|_2$) and mean squared prediction error ($\text{MSPE}(\hat{p}) = (\tfrac{1}{n} \sum_{i=1}^n|\Psi(x_i^T\hat{\theta})- \Psi(x_i^T\theta_0)|^2)^{1/2}$) are reported with respect to the posterior mean. For the methods performing model selection, we use the standard threshold 0.5 for the marginal posterior inclusion probability $\alpha_j$, i.e. the posterior includes the $j$th coefficient in the model if $\alpha_j>0.5$. The true positive rate (TPR) and false discovery rate (FDR) are then defined as $\text{TPR}(\alpha)=s^{-1}\sum_{j:\, \theta_{0,j}\neq 0}1_{\alpha_j>0.5}$ and $\text{FDR}(\alpha)=\sum_{j:\, \alpha_j>0.5}1_{\theta_{0,j}=0}/|\{j:\, \alpha_j>0.5\}|$, respectively. The elapsed times are given for an Intel \texttt{i7-8550u} laptop processor.

As in Section \ref{sec:num:anal}, we conclude that our VB approach typically outperforms the other variational algorithms using prior Gaussian slabs, though again at the expense of greater computational times. Compared to our approach, the other four methods based on the EM algorithm or MCMC performed better in some scenarios and worse in others, but with substantially greater computational times. Unsurprisingly, the rstanarm package using MCMC is the slowest; in many cases it did not even converge after 8000 iterations. In the high-dimensional case $p=5000$ and $n=2500$, 4 algorithms (SkinnyGibbs, BhGLM, BinEMVS, rstanarm) did not finish the computations in a reasonable amount of time: the fastest required at least 100 hours to execute 200 runs and for rstanarm, even a single run required multiple hours.

\begin{threeparttable}
  \caption{Comparing sparse Bayesian methods in high-dimensional logistic regression.}\label{tab::test_results}
  \renewrobustcmd{\bfseries}{\fontseries{b}\selectfont}
  \begin{tabular}{llSSSSS}
    \toprule
    & Algorithm & {Test 1} & {Test 2} & {Test 3} & {Test 4} & {Test 5} \\
    \midrule
    \multirow{4}{*}{TPR}& VB (Lap) & 0.99 \pm  0.06 & \bfseries 1.00 \pm 0.00 & 0.51 \pm 0.11 & \bfseries 1.00 \pm 0.00 & 0.40 \pm 0.28 \\
    & VB (Gauss) & 1.00 \pm 0.01 & 1.00 \pm  0.02 & 0.54 \pm  0.11 & \bfseries 1.00 \pm 0.00 & 0.85 \pm 0.06 \\
    & varbvs & \bfseries 1.00 \pm 0.00 & \bfseries 1.00 \pm 0.00 & \bfseries 0.68 \pm 0.11 & \bfseries 1.00 \pm 0.00 & \bfseries 0.87 \pm 0.06 \\
    & SkinnyGibbs & 0.98 \pm 0.06 & 1.00 \pm 0.02 & 0.51 \pm 0.12 & \textcolor{white}{00.00}\,\,\, \text{--} \textcolor{white}{00.00} & \textcolor{white}{00.00}\,\,\, \text{--} \textcolor{white}{00.00}\\
    & BinEMVS & 0.99 \pm 0.03 & \bfseries 1.00 \pm 0.00 & 0.58 \pm 0.11 & \textcolor{white}{00.00}\,\,\, \text{--} \textcolor{white}{00.00} & \textcolor{white}{00.00}\,\,\, \text{--} \textcolor{white}{00.00}\\
    \midrule
    \multirow{4}{*}{FDR}& VB (Lap) & 0.49 \pm 0.11 & \bfseries 0.00 \pm 0.02 & \bfseries 0.41 \pm 0.14 & \bfseries 0.01 \pm 0.02 & \bfseries 0.03 \pm 0.05 \\
    & VB (Gauss) & 0.63 \pm 0.07 & 0.09 \pm 0.13 & 0.52 \pm 0.12 & 0.81 \pm 0.02 & 0.95 \pm 0.01 \\
    & varbvs & 0.93 \pm 0.01 & 0.08 \pm 0.08 & 0.83 \pm 0.03 & 0.93 \pm 0.00 & 0.91 \pm 0.01 \\
    & SkinnyGibbs & 0.80 \pm 0.03 & 0.11 \pm 0.11 & 0.71 \pm 0.07 & \textcolor{white}{00.00}\,\,\, \text{--} \textcolor{white}{00.00} & \textcolor{white}{00.00}\,\,\, \text{--} \textcolor{white}{00.00}\\
    & BinEMVS & \bfseries 0.43 \pm 0.14 & 0.19 \pm 0.10 & 0.63 \pm 0.10 & \textcolor{white}{00.00}\,\,\, \text{--} \textcolor{white}{00.00} & \textcolor{white}{00.00}\,\,\, \text{--} \textcolor{white}{00.00}\\
    \midrule
    \multirow{6}{*}{$\ell_2$-Error}& VB (Lap) & 3.97 \pm 0.85 & \bfseries 1.73 \pm 0.59 & 4.89 \pm 1.29 & \bfseries 4.23 \pm 0.72 & 2.31 \pm 0.64 \\
    & VB (Gauss) & 4.99 \pm 0.46 & 13.82 \pm 0.16 & 3.86 \pm 0.61 & 8.34 \pm 0.49 & 17.05 \pm 0.52 \\
    & varbvs & 7.29 \pm 0.24 & 16.31 \pm 0.06 & 3.35 \pm 0.44 & 7.68 \pm 0.03 & \bfseries 1.32 \pm 0.13 \\
    & BhGLM & 4.39 \pm 0.65 & 15.68 \pm 0.53 & \bfseries 2.81 \pm 0.46 &  \textcolor{white}{00.00}\,\,\, \text{--} \textcolor{white}{00.00}& \textcolor{white}{00.00}\,\,\, \text{--} \textcolor{white}{00.00}\\
    & BinEMVS & 3.84 \pm 0.97 & 14.49 \pm 0.28 & 5.82 \pm 0.89 &  \textcolor{white}{00.00}\,\,\, \text{--} \textcolor{white}{00.00}& \textcolor{white}{00.00}\,\,\, \text{--} \textcolor{white}{00.00}\\
    & rstanarm & \bfseries 2.87 \pm 1.49 & 6.74 \pm 0.86 & 3.14 \pm 0.88 & \textcolor{white}{00.00}\,\,\, \text{--} \textcolor{white}{00.00} & \textcolor{white}{00.00}\,\,\, \text{--} \textcolor{white}{00.00}\\
    \midrule
    \multirow{6}{*}{MSPE}& VB (Lap) & 0.18 \pm 0.02 & 0.07 \pm 0.02 & 0.24 \pm 0.03 & \bfseries 0.06 \pm 0.01 & 0.22 \pm 0.09 \\
    & VB (Gauss) & 0.21 \pm 0.02 & 0.08 \pm 0.02 & 0.25 \pm 0.03 & 0.21 \pm 0.01 & 0.34 \pm 0.01 \\
    & varbvs & 0.21 \pm 0.01 & 0.12 \pm 0.01 & \bfseries 0.19 \pm 0.02 & 0.18 \pm 0.00 & \bfseries 0.17 \pm 0.01 \\
    & BhGLM & 0.22 \pm 0.15 & \bfseries 0.06 \pm 0.05 & 0.20 \pm 0.16 & \textcolor{white}{00.00}\,\,\, \text{--} \textcolor{white}{00.00}& \textcolor{white}{00.00}\,\,\, \text{--} \textcolor{white}{00.00}\\
    & BinEMVS & \bfseries 0.15 \pm 0.03 & 0.09 \pm 0.02 & 0.27 \pm 0.03 &  \textcolor{white}{00.00}\,\,\, \text{--} \textcolor{white}{00.00}& \textcolor{white}{00.00}\,\,\, \text{--} \textcolor{white}{00.00}\\ 
    & rstanarm &  0.36 \pm 0.25 & 0.10 \pm 0.08 & 0.37 \pm 0.30 & \textcolor{white}{00.00}\,\,\, \text{--} \textcolor{white}{00.00} & \textcolor{white}{00.00}\,\,\, \text{--} \textcolor{white}{00.00}\\
    \midrule
    \multirow{6}{*}{Time}& VB (Lap) & 8.66 \pm  6.19 & 12.05 \pm 0.55 & 14.53 \pm 0.78 & 360.22 \pm 10.97 & 358.79 \pm 8.43 \\
    & VB (Gauss) & 0.23 \pm 0.05 & 1.45 \pm  0.66 & 1.70 \pm  1.37 & 356.76 \pm 7.22 & 359.78 \pm 1.02 \\
    & varbvs & \bfseries 0.05 \pm 0.00 & \bfseries 0.10 \pm 0.03 & \bfseries 0.03 \pm 0.01 & \bfseries 4.29 \pm 0.12 & \bfseries 8.15 \pm 0.48 \\
    & SkinnyGibbs & 19.89 \pm 0.04 & 19.77 \pm 0.09 & 20.30 \pm 0.42 & \textcolor{white}{00.00}\,\,\, \text{--} \textcolor{white}{00.00} & \textcolor{white}{00.00}\,\,\, \text{--} \textcolor{white}{00.00}\\
    & BhGLM &  1.37 \pm 0.28 & 2.24 \pm 0.71 & 1.42 \pm 0.52 & \textcolor{white}{00.00}\,\,\, \text{--} \textcolor{white}{00.00} & \textcolor{white}{00.00}\,\,\, \text{--} \textcolor{white}{00.00}\\
    & BinEMVS & 12.43 \pm 4.28 & 36.52 \pm 4.82 & 30.44 \pm 2.88 & \textcolor{white}{00.00}\,\,\, \text{--} \textcolor{white}{00.00} & \textcolor{white}{00.00}\,\,\, \text{--} \textcolor{white}{00.00}\\
    & rstanarm &  152.46 \pm 18.00 & 426.20 \pm 50.67 & 181.53 \pm 26.05 & \textcolor{white}{00.00}\,\,\, \text{--} \textcolor{white}{00.00} & \textcolor{white}{00.00}\,\,\, \text{--} \textcolor{white}{00.00}\\
    \bottomrule
  \end{tabular}
  \begin{tablenotes}
    \small \item The design matrices $X\in\mathbb{R}^{n\times p}$ are taken to be $X_{ij}\sim^{iid}N(0,\sigma^2)$. The signal vector $\theta_0$ has $s$ non-zero coefficients, all located at the beginning of the signal.
  \item (1) $X\in\R^{250\times 500}$, $\sigma = 0.25$, $s = 5$, $\theta_{0,1:s} = 4$
  \item (2) $X\in\R^{250\times 500}$, $\sigma=2$, $s = 10$, $\theta_{0,1:s} = 6$
  \item (3) $X\in\R^{250\times 500}$, $\sigma=0.5$, $s = 15$, $\theta_{0,1:s} \sim^{iid} \mathrm{Unif}(-2,2)$
  \item (4) $X\in\R^{2500\times 5000}$, $\sigma=0.5$, $s = 25$, $\theta_{0,1:s} = 2$
  \item (5) $X\in\R^{2500\times 5000}$, $\sigma=1$, $s = 10$, $\theta_{0,1:s} \sim^{iid} \mathrm{Unif}(-1,1)$
  \end{tablenotes}
\end{threeparttable}

\subsection{Comparing different choices of the hyperparameter $\lambda$}\label{sec:hyperparameter}

Theorems \ref{thm:contraction}-\ref{thm:selection} state that for the hyperparameter choice $\lambda \asymp \|X\| \sqrt{\log p}$, our VB algorithm has good asymptotic properties. In practice, however, the finite-sample performance indeed depends on $\lambda$. We ran our algorithm 200 times on the experiment considered in Section \ref{sec:num:anal} for different choices of $\lambda$ and report the results in Table \ref{tab:hyperparameter}. In this example, the performance was sensitive to the choice of $\lambda$ with large values of $\lambda$, which cause more shrinkage, performing worse.

In linear regression, where more extensive simulations have been carried out, the choice of $\lambda$ was similarly found to have an effect, though there was not clear evidence to support a particular fixed choice of $\lambda$, with larger values sometimes performing better and sometime worse \cite{rayszabo2019}. This suggests using a data-driven choice of $\lambda$ may be helpful in practice, for example using cross validation.

\begin{threeparttable}
  \caption{Varying the scale hyperparameter}\label{tab:hyperparameter}
  \renewrobustcmd{\bfseries}{\fontseries{b}\selectfont}
  \begin{tabular}{lSSSSS}
    \toprule
    & {$\lambda = \frac{1}{20}$} & {$\lambda = \frac{1}{5}$} & {$\lambda = 2$} & {$\lambda = 5$} & {$\lambda = 20$} \\
    \midrule
    TPR  & 1.00 \pm 0.00 & 1.00 \pm 0.00 & 1.00 \pm 0.00 & 1.00 \pm 0.00 & 0.81 \pm 0.28 \\
    FDR  & 0.00 \pm 0.00 & 0.02 \pm 0.08 & 0.03 \pm 0.10 & 0.09 \pm 0.16 & 0.02 \pm 0.10 \\
    $\ell_2$-error & 0.53 \pm 0.36 & 0.58 \pm 0.37 & 0.48 \pm 0.28 & 0.39 \pm 0.14 & 1.73 \pm 0.44 \\
    MSPE & 0.04 \pm 0.02 & 0.04 \pm 0.02 & 0.04 \pm 0.02 & 0.04 \pm 0.01 & 0.17 \pm 0.07 \\
    Time & 12.13 \pm 0.58 & 12.02 \pm 0.73 & 11.93 \pm 0.71 & 11.72 \pm 1.05 & 12.10 \pm 0.73 \\
    \bottomrule
  \end{tabular}
  \begin{tablenotes}
    \small
  \item $X\in\R^{250\times 500}$, $X_{ij} \sim^{iid} N(0,1)$, $s = 2$, $\theta_{0,1:s} = 2$
  \end{tablenotes}
\end{threeparttable}

\section{Further discussion of the design matrix and sparsity assumptions}\label{sec:extra_design}

For $s\in \{1,\dots,p\}$, recall the definitions
$$\underline{\kappa} = \inf \left\{ \frac{\|W^{1/2}X\theta\|_2^2}{\|X\|^2 \|\theta\|_2^2} : \|\theta_{S_0^c}\|_1 \leq 7\|\theta_{S_0}\|_1, \theta \neq 0 \right\},$$
$$\overline{\kappa}(s) = \sup \left\{ \frac{\|X\theta\|_2^2}{\|X\|^2 \|\theta\|_2^2}: 0 \neq |S_\theta| \leq s \right\}, \qquad \underline{\kappa}(s) = \inf \left\{ \frac{\|W^{1/2}X\theta\|_2^2}{\|X\|^2 \|\theta\|_2^2}: 0 \neq |S_\theta| \leq s \right\}.$$
These definitions are taken from the sparsity literature \cite{buhlmann2011} and are used in sparse logistic regression \cite{atchade2017,negahban2012,wei2020}. We reproduce some of the discussion here for convenience, but refer the reader to Chapter 6 of \cite{buhlmann2011} for further reading.

The true model $S_0$ is \textit{compatible} if $\downk >0$, which implies $\|W^{1/2}X\theta\|_2^2 \geq \downk \|X\|^2 \|\theta\|_2^2$ for all $\theta$ in the relevant set. The number 7 can be altered and is taken to match the conditions used in \cite{atchade2017,castillo2015} since we use some of their results. Compatibility $\downk >0$ involves approximate rather than exact sparsity, since the parameters $\theta$ need only have small rather than zero coordinates outside $S_0$. In contrast, $\downk(s)$ involves exactly $s$-sparse vectors. Note that if $\|X\|=1$, then $\downk(s)^{1/2}$ equals the smallest scaled singular value of a submatrix of $W^{1/2}X$ of dimension $s$. Similarly, $\upk(s)^{1/2}$ upper bounds the operator norm of $X$ when restricted to exactly $s$-sparse vectors.

Even though $W$ depends on the unknown $\theta_0$, it does not necessarily play a significant role in the above definitions. If $\|X\theta_0\|_\infty$ is bounded, then the true regression function $P_{\theta_0}(Y=1|X=x_i) =\Psi(X_i^T \theta_0)$ is bounded away from zero and one at the design points and $W$ is equivalent to the identity matrix $I_n$. One can then set $W = I_n$ in the above definitions by simply rescaling the constants. Note that estimation in classification problems is known to behave qualitatively differently near the boundary points 0 and 1, see, e.g. \cite{ray2016}.

When $W = I_n$, we recover the exact compatibility constants used in sparse linear regression \cite{castillo2015,rayszabo2019}. This is natural since when linearizing the logistic regression model, the likelihood asymptotically looks like that of a linear regression model with design matrix $W^{1/2}X$, see Section \ref{sec:proofs}. One therefore expects similar conditions with $X$ replaced by $W^{1/2}X$. For further discussion, see Chapter 6 of \cite{buhlmann2011} or Section 2.2 of \cite{castillo2015}; in particular, Lemma 1 of \cite{castillo2015} provides a concise relation between various notions of compatibility.

Another common condition in the sparsity literature is the \textit{mutual coherence} of $X$, which equals the largest correlation between its columns:
$$\mc (X) = \max_{1 \leq i\neq j\leq p} \frac{|\langle X_{\cdot i}, X_{\cdot j} \rangle|}{\|X_{\cdot i}\|_2 \|X_{\cdot j}\|_2}=\max_{1 \leq i\neq j\leq p} \frac{|(X^TX)_{ij}|}{\|X_{\cdot i}\|_2 \|X_{\cdot j}\|_2}.$$
Conditions of this nature have been used by many authors (see Section 2.2 of \cite{castillo2015} for references) and measure how far from orthogonal the matrix $X$ is. One can relate the present compatibility constants to the mutual coherence. 

\begin{lemma}
Suppose $\|X\theta_0\|_\infty\leq R$ is bounded and $\min_{1 \leq i \neq j \leq p} \frac{\|X_{\cdot i}\|_2}{\|X_{\cdot j}\|_2} \geq \eta$. Then for $C = C(R)$,
\begin{align*}
\upk (s) \leq 1+s \mc(X), \qquad \downk \geq C(R)(\eta^2 - 64s_0 \mc(X)), \qquad \downk(s) \geq C(R)(\eta^2-s \mc(X)).
\end{align*}
\end{lemma}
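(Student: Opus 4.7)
The three bounds all follow from a common expansion of $\|X\theta\|_2^2$ combined with the mutual coherence inequality $|(X^TX)_{jk}|\leq \mc(X)\|X_{\cdot j}\|_2 \|X_{\cdot k}\|_2$ for $j\neq k$. The plan is to (i) reduce the $W$-weighted compatibility constants to bounds on $\|X\theta\|_2^2$, and (ii) exploit the structure of each restricted set of $\theta$'s (exact $s$-sparsity vs.\ the compatibility constraint).

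First I would handle the $W$-weight. Since $\|X\theta_0\|_\infty\leq R$ implies $\Psi(x_i^T\theta_0)\in[\Psi(-R),\Psi(R)]$, the diagonal entries satisfy $W_{ii}=\Psi(x_i^T\theta_0)(1-\Psi(x_i^T\theta_0))\geq \Psi(-R)\Psi(R)=:C(R)>0$. Consequently $\|W^{1/2}X\theta\|_2^2\geq C(R)\|X\theta\|_2^2$, so it suffices to bound $\|X\theta\|_2^2/(\|X\|^2\|\theta\|_2^2)$ from below.

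Next, the algebraic core: for any $\theta$ with support $S$,
\begin{equation*}
\|X\theta\|_2^2=\sum_{j\in S}\theta_j^2\|X_{\cdot j}\|_2^2+\sum_{\substack{j,k\in S\\ j\neq k}}\theta_j\theta_k (X^TX)_{jk},
\end{equation*}
where the cross-term is controlled by $\mc(X)\|X\|^2\bigl(\sum_{j\in S}|\theta_j|\bigr)^2$ using $\|X_{\cdot j}\|_2\leq \|X\|$. For the upper bound $\upk(s)\leq 1+s\,\mc(X)$, I would use $\|X_{\cdot j}\|_2\leq\|X\|$ on the diagonal term and Cauchy--Schwarz $(\sum_{j\in S}|\theta_j|)^2\leq |S|\,\|\theta\|_2^2\leq s\|\theta\|_2^2$ on the cross term, yielding $\|X\theta\|_2^2\leq \|X\|^2\|\theta\|_2^2(1+s\,\mc(X))$. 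For the lower bound $\downk(s)\geq C(R)(\eta^2-s\,\mc(X))$, the column-norm ratio assumption gives $\|X_{\cdot j}\|_2\geq \eta\|X\|$ for every $j$ (take the maximizing column in the denominator), so the diagonal term dominates as $\eta^2\|X\|^2\|\theta\|_2^2$; the same Cauchy--Schwarz bound then gives $\|X\theta\|_2^2\geq \|X\|^2\|\theta\|_2^2(\eta^2-s\,\mc(X))$, and multiplying by $C(R)$ completes it.

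Finally, for $\downk\geq C(R)(\eta^2-64 s_0\,\mc(X))$, the only change is that $\theta$ is no longer exactly $s$-sparse but satisfies $\|\theta_{S_0^c}\|_1\leq 7\|\theta_{S_0}\|_1$. This gives $\|\theta\|_1\leq 8\|\theta_{S_0}\|_1\leq 8\sqrt{s_0}\,\|\theta\|_2$ by Cauchy--Schwarz on the $s_0$-dimensional vector $\theta_{S_0}$, hence $\|\theta\|_1^2\leq 64 s_0\|\theta\|_2^2$. Plugging this into the same decomposition in place of the exact-sparsity bound produces $\|X\theta\|_2^2\geq \|X\|^2\|\theta\|_2^2(\eta^2-64 s_0\,\mc(X))$, and the $W$-reduction from step (i) concludes. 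There is no real obstacle here beyond careful bookkeeping; the only subtlety worth flagging is that the $\eta^2$-lower bound on $\|X_{\cdot j}\|_2^2/\|X\|^2$ is what forces the appearance of $\eta^2$ (as opposed to $1$) in the two lower bounds, and one must make sure the mutual coherence estimate on the cross-term sum uses $\|X_{\cdot j}\|_2\leq\|X\|$ (rather than the lower bound) to get the clean $\mc(X)\|X\|^2\|\theta\|_1^2$ factor.
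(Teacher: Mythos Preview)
Your proposal is correct and follows essentially the same approach as the paper. The paper's proof handles $\upk(s)$ identically to yours, bounds $W_{ii}$ below by a constant $\delta_R$ depending only on $R$ (yours is the slightly sharper $\Psi(-R)\Psi(R)$, theirs is $e^{-R}(2-e^{-R})/4$, but the statement only asks for some $C(R)$), and then for the two lower bounds cites ``arguing as in Lemma 1 of \cite{castillo2015}'' rather than writing out the details; your direct argument via the decomposition of $\|X\theta\|_2^2$, the column-norm ratio, and the $\|\theta\|_1\leq 8\sqrt{s_0}\|\theta\|_2$ bound on the compatibility cone is exactly what that citation unpacks to.
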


\begin{proof}
For $\theta$ an $s$-sparse vector, using Cauchy-Schwarz,
\begin{align*}
\|X\theta\|_2^2 = \sum_{j=1}^p \left( (X^TX)_{jj} \theta_j^2 + \sum_{k\neq j} \theta_j (X^TX)_{jk}\theta_k \right) & \leq \|X\|^2 \|\theta\|_2^2 + \mc(X)\|X\|^2 \|\theta\|_1^2 \\
& \leq (1+s \mc(X))\|X\|^2 \|\theta\|_2^2,
\end{align*}
so that $\upk (s) \leq 1+s\mc(X)$. For $R>0$, one has $\Psi(-R)\geq e^{-R}/2$ and $\Psi(R)\leq 1-e^{-R}/2$ , so that $\Psi(x_i^T\theta_0) \in [e^{-R}/2,1-e^{-R}/2]$. Using the definition \eqref{W matrix}, all diagonal entries of $W$ satisfy $W_{ii} \in [\delta_R,1/4]$ for $\delta_R = e^{-R}(2-e^{-R})/4>0$, so that $\|W^{1/2}X\theta\|_2^2 \geq \delta_R \|X\theta\|_2^2$. It thus suffices to prove the result with $W = I_n$ at the expense of the factor $\delta_R$. With $W = I_n$, arguing as in Lemma 1 of \cite{castillo2015} gives $\downk \geq \eta^2 - 64s_0 \mc(X)$ and $\downk (s) \geq \eta^2 - s \mc(X)$.
\end{proof}

If $\|X\theta_0\|_\infty$ is bounded and
\begin{equation}\label{mc}
s_0 = o(1/\mc(X)), \qquad \qquad \min_{1 \leq i \neq j \leq p} \frac{\|X_{\cdot i}\|_2}{\|X_{\cdot j}\|_2} \geq \eta > 0,
\end{equation}
namely the truth is sufficiently sparse and the column norms of $X$ are comparable, then $\upk(Ls_0) \leq 1+o(1)$, $\downk \gtrsim \eta^2-o(1)$ and $\downk (Ls_0) \gtrsim \eta^2 - o(1)$ for any $L>0$, as required for the results in this paper. Condition \eqref{mc} has been considered in \cite{rayszabo2019}, and thus the various examples in \cite{rayszabo2019} are also covered by our results, including:
\begin{itemize}
\item (Orthogonal design). If $X$ is an orthogonal matrix with $\langle X_{\cdot i}, X_{\cdot j} \rangle = 0$ for $i\neq j$ with suitably normalized column lengths $\|X_{\cdot i}\|_2 = \sqrt{n}$.
\item (IID responses). Suppose the original matrix entries are i.i.d. random variables $W_{ij}$ and set $X_{ij} = \sqrt{n}W_{ij} /\|W_{\cdot j}\|_2$, so that the columns are normalized to have length $\sqrt{n}$ . If $|W_{ij}| \leq C$ almost surely and $\log p = o(n)$, then \eqref{mc} holds for sparsity levels $s_0 = o(\sqrt{n/\log p})$. Similarly, if $Ee^{t|W_{ij}|^\gamma}<\infty$ for some $\gamma,t>0$ and $\log p = o(n^{\gamma/(4+\gamma)})$, then \eqref{mc} again holds for $s_0 = o(\sqrt{n/\log p})$. This covers the standard Gaussian random design $W_{ij}\sim^{iid}N(0,1)$ if $\log p = o(n^{1/3})$. See \cite{rayszabo2019} for details.
\item Rescale the columns as in the IID response model so that $\|X_{\cdot i}\|_2 = \sqrt{n}$ for all $i$. Then the $p\times p$ matrix $C=X^TX/n$ takes values one on its diagonal and $C_{ij}$, $i\neq j$, equals the correlation between columns $i$ and $j$. If either $C_{ij}=r$ for a constant $0<r<(1+cm)^{-1}$ and all $i\neq j$, or $|C_{ij}| \leq \tfrac{c}{2m-1}$ for every $i\neq j$, then $\mc(X) = \max_{i\neq j}C_{ij} = O(1/m)$ and so $\eqref{mc}$ holds for sparsity level $s_0 = o(m)$. Such matrices are studied in Zhao and Yu \cite{zhao2006}, who show that models up to dimension $m$ satisfy the `strong irrepresentability condition'.
\end{itemize}
For further details of why these satisfy \eqref{mc}, and hence our conditions, see Section 2.2 of \cite{rayszabo2019}.

\section{Non-asymptotic results and proofs}\label{sec:suppl:proofs}

This section contains the non-asymptotic formulations of all the technical results used in this paper, together with their proofs. These results imply the more digestible asymptotic formulations presented in Section \ref{sec:results}. To begin, recall the log-likelihood in model \eqref{model} based on data $Y = (Y_1,\dots,Y_n)$ is
\begin{equation}\label{likelihood}
\ell_{n,\theta}(Y) = \sum_{i=1}^n Y_i x_i^T \theta - g(x_i^T \theta) = \sum_{i=1}^n Y_i (X\theta)_i - g((X\theta)_i),
\end{equation}
where $g(t) = \log (1+e^t)$, $t\in \R$. We use the following notation for the first-order remainder of the Taylor expansion of the log-likelihood:
\begin{equation}\label{Ln}
\mL_{n,\theta}(y) := \ell_{n,\theta}(y) - \ell_{n,\theta_0}(y) - \nabla_\theta \ell_{n,\theta_0}(y)^T(\theta-\theta_0).
\end{equation}

\subsection{The Kullback-Leibler divergence between $Q^*$ and $\Pi(\cdot|Y)$}

To apply Lemma \ref{lem:KL_VB_thm}, we must bound $\KL(Q^*||\Pi(\cdot|Y))$ on the event $\mA_n$ given in \eqref{An event}. We do this by bounding the KL divergence between the posterior and a carefully selected element of the variational family. We choose a spike and slab distribution whose slab is centered at the least squares estimator of the linearized logistic likelihood approximation and whose covariance equals the (diagonalized) covariance of this estimator. This builds on ideas in \cite{rayszabo2019}, extending them to the nonlinear logistic regression model.

For a given model $S \subseteq \{1,\dots,p\}$, we write $X_S$ for the $n\times |S|$-submatrix of $X$ keeping only the columns $X_{\cdot i}$, $i\in S$, and $\theta_S\in \R^{|S|}$ for the vector $(\theta_i:i\in S)$.

\begin{lemma}\label{lem:KL}
Consider the event $\mA_n = \mA_n(t,L,M_1,M_2)$ in \eqref{An event} with $M_2 > L$. If $(4e)^{1/(M_2-L)} \leq p^{s_0}$, then the variational Bayes posterior $Q^*$ satisfies 
\begin{align*}
\KL (Q^*||\Pi(\cdot|Y))1_{\mA_n} \leq \zeta_n,
\end{align*}
where
\begin{align*}
\zeta_n & = s_0 \log p\left( L + \frac{9}{4}M_1^2 \upk (Ls_0) + \frac{\lambda L^{1/2} }{\|X\| \sqrt{\log p}} \left( 2M_1 + \frac{t L^{1/2} }{4\downk (Ls_0) \|X\| \sqrt{\log p}} + \frac{L^{1/2} }{\sqrt{\downk (1) \log p}} \right) \right)\\
& \qquad  + Ls_0\log \frac{1}{4\downk (Ls_0)} + \log (2e).
\end{align*}
\end{lemma}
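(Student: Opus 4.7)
The strategy is to exploit the variational property $Q^* = \argmin_{Q\in\Q}\KL(Q\|\Pi(\cdot|Y))$: it suffices to exhibit a single $Q\in\Q$ whose KL divergence is bounded by $\zeta_n$ on $\mA_n$. With evidence $Z(Y) = \int e^{\ell_{n,\theta}(Y)}d\Pi(\theta)$, the posterior density relative to the prior is $e^{\ell_{n,\theta}}/Z(Y)$ and
\begin{equation*}
\KL(Q\|\Pi(\cdot|Y)) = \KL(Q\|\Pi) + \log Z(Y) - \int \ell_{n,\theta}(Y)\,dQ(\theta),
\end{equation*}
so the task splits into (a) computing $\KL(Q\|\Pi)$ in closed form, (b) upper bounding $\log Z(Y)$, and (c) lower bounding $\int \ell_{n,\theta}\,dQ$. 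The Taylor expansion \eqref{Ln} couples (b) and (c) through $\ell_{n,\theta_0}(Y)$, which cancels from their difference.

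For the candidate, I would take $Q = \prod_{j\in S_{\theta_0}}N(\mu_j,\sigma_j^2)\prod_{j\notin S_{\theta_0}}\delta_0$ supported on the true model, with means $\mu_j$ at $\ell_2$-distance $\lesssim M_1\sqrt{s_0\log p}/\|X\|$ from $\theta_{0,j}$ (morally a single Newton step from the linearized likelihood) and common variance $\sigma^2 \asymp 1/(\|X\|^2\downk(Ls_0))$ mimicking the inverse-Hessian scale on $(Ls_0)$-sparse vectors. Then $\KL(Q\|\Pi)$ decomposes into a model-selection cost $-\log(\pi_p(s_0)/\binom{p}{s_0}) \lesssim s_0\log p$ (by \eqref{prior_cond}) and $s_0$ univariate Gaussian--Laplace KLs
\begin{equation*}
\KL(N(\mu_j,\sigma^2)\|\mathrm{Lap}(\lambda)) = \log(2/\lambda) + \lambda\int|\theta_j|\,dN - \tfrac{1}{2}\log(2\pi e \sigma^2).
\end{equation*}
The bound $\lambda\int|\theta_j|\,dN \leq \lambda(|\mu_j|+\sigma\sqrt{2/\pi})$ produces the two $\lambda$-dependent summands in $\zeta_n$ (via $\lambda\|\mu\|_1$ and $\lambda s_0\sigma$), and $-\tfrac{1}{2}\log\sigma^2$ accounts for the $Ls_0\log(1/(4\downk(Ls_0)))$ term.

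For the evidence, the events $\mA_{n,2}$ and $\mA_{n,3}$, together with the hypothesis $(4e)^{1/(M_2-L)}\leq p^{s_0}$, imply that the set $B = \{|S_\theta|\leq Ls_0,\ \|\theta-\theta_0\|_2\leq M_1\sqrt{s_0\log p}/\|X\|\}$ carries posterior mass $\geq 1/2$, so $Z(Y) \leq 2\int_B e^{\ell_{n,\theta}}d\Pi$. Concavity of $\ell_{n,\theta}$ (i.e.\ $\mL_{n,\theta}\leq 0$) and the score bound on $\mA_{n,1}$ then yield, for $\theta\in B$,
\begin{equation*}
\ell_{n,\theta}-\ell_{n,\theta_0} \leq \nabla_\theta\ell_{n,\theta_0}^T(\theta-\theta_0) \leq t\|\theta-\theta_0\|_1 \leq tM_1\sqrt{(L+1)s_0}\cdot\sqrt{s_0\log p}/\|X\|,
\end{equation*}
which produces the $M_1 L^{1/2}s_0\log p$ contribution (recall $t=\|X\|\sqrt{\log p}$). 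Conversely, choosing $\int\theta\,dQ = \theta_0$ gives $\int\ell_{n,\theta}\,dQ - \ell_{n,\theta_0} = \int \mL_{n,\theta}\,dQ$, and the quadratic upper bound $-\mL_{n,\theta}\leq \tfrac{1}{8}\upk(Ls_0)\|X\|^2\|\theta-\theta_0\|_2^2$ produces $-\int\mL_{n,\theta}\,dQ \lesssim \upk(Ls_0)\|X\|^2(\|\mu-\theta_0\|_2^2 + s_0\sigma^2) \lesssim M_1^2\upk(Ls_0)s_0\log p$, matching the $\tfrac{9}{4}M_1^2\upk(Ls_0)s_0\log p$ term.

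The main obstacle will be the delicate calibration of $\sigma$ and $\mu$. Shrinking $\sigma^2$ reduces the quadratic remainder but inflates the $-\tfrac{1}{2}\log\sigma^2$ and $\lambda\sigma$ contributions to the Gaussian--Laplace KL; the scale $\sigma^2\asymp 1/(\|X\|^2\downk(Ls_0))$ is what precisely balances them. A subtler point is that $\|\theta-\theta_0\|_1$ on $B$ must be estimated by Cauchy--Schwarz against the random support size $|S_\theta\cup S_0|\leq(L+1)s_0$, so the sparsity restriction supplied by $\mA_{n,2}$ is essential; without it one would obtain a worse $L$-dependence than $L^{1/2}$ in the $\lambda$-term. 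Finally, tracking constants through the posterior-localization step so that the factor $2$ surfaces as the explicit $\log(2e)$ offset in $\zeta_n$ requires some care, but once all pieces are assembled the stated bound follows.
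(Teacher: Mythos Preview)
Your overall strategy (exhibit one $Q\in\Q$ and bound its KL) matches the paper's, but your decomposition and candidate differ substantially from the paper's, and there is a genuine gap.

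The paper does \emph{not} use the evidence decomposition $\KL(Q\|\Pi(\cdot|Y)) = \KL(Q\|\Pi)+\log Z(Y)-\int\ell_{n,\theta}\,dQ$. Instead it exploits the posterior mixture form $\Pi(\cdot|Y)=\sum_S\hat{w}_S\,\Pi_S(\cdot|Y)\otimes\delta_{S^c}$: it selects a \emph{data-dependent} model $S'$ (not the true $S_{\theta_0}$) of size $\le Ls_0$ with posterior weight $\hat{w}_{S'}\ge(2e)^{-1}p^{-Ls_0}$ and with $\|\theta_{0,S'^c}\|_2\le M_1\sqrt{s_0\log p}/\|X\|$; such $S'$ exists on $\mA_{n,2}\cap\mA_{n,3}$ by pigeonhole over the $\le ep^{Ls_0}$ candidate models. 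It takes $Q'=N_{S'}(\mu_{S'},D_{S'})\otimes\delta_{S'^c}$ with $\mu_{S'}=\theta_{0,S'}+(X_{S'}^TX_{S'})^{-1}X_{S'}^T(Y-E_{\theta_0}Y)$ and $(D_{S'})_{jj}=1/(X_{S'}^TX_{S'})_{jj}$, and writes $\KL(Q'\|\Pi(\cdot|Y))=\log(1/\hat{w}_{S'})+\KL(N_{S'}(\mu_{S'},D_{S'})\|\Pi_{S'}(\cdot|Y))$. The restricted-model KL is then routed through an intermediate \emph{non-diagonal} Gaussian $N_{S'}(\mu_{S'},\Sigma_{S'})$, $\Sigma_{S'}=(X_{S'}^TX_{S'})^{-1}$, and the remaining density ratio is arranged (by inserting $e^{\pm\lambda\|\theta_{0,S'}\|_1}$ into both normalizing constants) so that the Laplace penalty appears only as the \emph{difference} $\lambda\, E_{\mu_{S'},D_{S'}}(\|\theta_{S'}\|_1-\|\theta_{0,S'}\|_1)$. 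The hypothesis $(4e)^{1/(M_2-L)}\le p^{s_0}$ is used not to get $\Pi(B|Y)\ge 1/2$ but to localize the \emph{restricted} posterior $\Pi_{S'}(\cdot|Y)$ after paying the $\hat{w}_{S'}^{-1}$ factor.

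This last point is exactly where your plan has a gap. In your Gaussian--Laplace KL you obtain $\lambda\|\mu\|_1$ and claim it ``produces'' the $2M_1$ summand in $\zeta_n$. But since your $\mu$ lies within $M_1\sqrt{s_0\log p}/\|X\|$ of $\theta_0$, you have $\lambda\|\mu\|_1\approx\lambda\|\theta_0\|_1$, which is not controlled by anything in $\zeta_n$ and can be arbitrarily large; the $2M_1$ summand is of order $\lambda\sqrt{Ls_0}\cdot M_1\sqrt{s_0\log p}/\|X\|$ and corresponds to $\lambda\|\mu-\theta_0\|_1$, not $\lambda\|\mu\|_1$. For your route to close, a compensating $-\lambda\|\theta_0\|_1$ would have to be extracted from $\log Z(Y)$ via the Laplace prior density on the localization set $B$ (using $\|\theta\|_1\ge\|\theta_0\|_1-\|\theta-\theta_0\|_1$), together with a volume and model-count bound for $\Pi(B)$ summed over all $|S|\le Ls_0$. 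You have not identified this cancellation, and without it the bound blows up. The paper's posterior-weight route avoids the issue entirely by never separating $\KL(Q\|\Pi)$ from $\log Z(Y)$, so the Laplace term enters from the outset as a controllable difference.
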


\begin{proof}
Since the VB posterior $Q^*$ minimizes the KL objective \eqref{VB}, we have $\KL (Q^*||\Pi(\cdot|Y)) \leq \KL(Q||\Pi(\cdot|Y))$ for all $Q \in \Q$. It thus suffices to bound this last KL divergence for a suitably chosen element $Q \in \Q$, which may (and will) depend on the true unknown parameter $\theta_0$, on the event $\mA_n = \mA_n(t,L,M_1,M_2)$.

Recall that the posterior distribution is a mixture over all possible submodels and can thus be written
\begin{equation}\label{posterior}
\Pi(\cdot|Y) = \sum_{S \subseteq \{1,\dots,p\}} \hat{w}_S \Pi_S(\cdot|Y) \otimes \delta_{S^c},
\end{equation}
where the posterior model weights satisfy $0\leq \hat{w}_S \leq 1$ and $\sum_S \hat{w}_S = 1$ and $\Pi_S(\cdot|Y)$ denotes the posterior for $\theta_S \in \R^{|S|}$ in the restricted model with $Y_i \sim \text{Bin}(1,\Psi ((X_S\theta_S)_i))$, i.e. the logistic regression model \eqref{model} with $\theta_{S^c} = 0$.

\textbf{Choosing $Q'\in \Q$.} Recall that $E_{\theta_0}Y_i = \Psi(x_i^T\theta_0)$ and for a model $S \subseteq \{1,\dots,p\}$ set
\begin{equation}\label{VB_mean}
\mu_{S} = \theta_{0,S} + (X_{S}^T X_{S})^{-1} X_{S}^T(Y-E_{\theta_0}Y), \qquad \qquad \Sigma_{S} = (X_{S}^T X_{S})^{-1},
\end{equation}
and define the $|S| \times |S|$ diagonal matrix $D_{S}$ by
\begin{equation}\label{VB_cov}
(D_{S})_{jj} = \frac{1}{(\Sigma_{S}^{-1})_{jj}} = \frac{1}{(X_{S}^T X_{S})_{jj}},
\end{equation}
$j=1,\dots,|S|$. We choose as element of our variational family the distribution
\begin{align*}
Q'(\theta) = N_{S'}(\mu_{S'},D_{S'})(\theta_{S'}) \times \delta_{S'^c}(\theta_{S'^c}) = \prod_{j\in S'} N\left(\mu_{S',j},\tfrac{1}{(X_{S'}^T X_{S'})_{jj}}\right)(\theta_i) \prod_{j\in S'^c} \delta_0(\theta_i),
\end{align*}
for a model $S'\subseteq \{1,\dots,p\}$ satisfying the following three properties:
\begin{equation}\label{S' properties}
|S'| \leq L s_0, \qquad \|\theta_{0,S'^c}\|_2 \leq M_1\sqrt{s_0\log p}/\|X\|, \qquad \hat{w}_{S'} \geq (2e)^{-1} p^{-Ls_0},
\end{equation}
where $L,M_1$ are the constants in $\mA_n(t,L,M_1,M_2)$. Note that $Q'$ is indeed an element of the mean-field variational family $\Q$ in \eqref{VarFamily} with $\gamma_j = 1$ if $i\in S'$ and $\gamma_j = 0$ otherwise.

\textbf{Existence of $S'$ satisfying \eqref{S' properties}.} We first show that on the event $\mA_n$, there exists a subset $S'$ satisfying \eqref{S' properties}, so that our choice of $Q'$ is indeed valid. On $\mA_n$,
\begin{align*}
\Pi(\theta: \|\theta_{0,S_\theta^c}\|_2 > M_1\sqrt{s_0\log p}/\|X\||Y) & \leq \Pi(\theta: \|\theta-\theta_0\|_2 > M_1\sqrt{s_0\log p}/\|X\||Y) \\
& \leq e^{-M_2 s_0\log p} \to 0,
\end{align*}
so that the posterior model weights satisfy
$$\sum_{\substack{S: |S| \leq L s_0 \\ \|\theta_{0,S^c}\|_2 \leq M_1\sqrt{s_0\log p}/\|X\| }} \hat{w}_S \geq 3/4 - e^{-M_2 s_0\log p} \geq 1/2$$
on $\mA_n$, since $e^{-M_2 s_0\log p} \leq (4e)^{-M_2/(M_2-L)} \leq (4e)^{-1} \leq 1/4$ by assumption. Using ${p\choose s} \leq p^s/s!$, the number of elements in the last sum is bounded by
$$\sum_{S:|S| \leq L s_0} 1 \leq \sum_{s=0}^{L s_0} {p \choose s} \leq \sum_{s=0}^{Ls_0} \frac{p^s}{s!} \leq e p^{Ls_0},$$
which implies that on $\mA_n$ there exists a set $S'\subseteq \{1,\dots,p\}$ of size $|S'| \leq L s_0$ with $\|\theta_{0,S'^c}\|_2 \leq M_1\sqrt{s_0\log p}/\|X\|$ and with posterior probability $\hat{w}_{S'} \geq (2e)^{-1} p^{-Ls_0}$, i.e. satisfying \eqref{S' properties}.

\textbf{Reduction to the non-diagonal covariance case.}
Since $Q'$ is only absolutely continuous with respect to the $\hat{w}_{S'} \Pi_{S'}(\cdot|Y) \otimes \delta_{S'^c}$ term of the posterior \eqref{posterior},
\begin{align*}
\KL (Q' || \Pi(\cdot|Y)) &= E_{\theta \sim N_{S'}(\mu_{S'},D_{S'}) \otimes \delta_{S'^c}} \log \frac{dN_{S'}(\mu_{S'},D_{S'}) \otimes \delta_{S'^c}}{\hat{w}_{S'} d\Pi_{S'}(\cdot|Y)\otimes \delta_{S'^c} }(\theta) \\
& = \log(1/\hat{w}_{S'})+ \KL(  N_{S'}(\mu_{S'},D_{S'}) || \Pi_{S'}(\cdot|Y)),
\end{align*}
where the last KL divergence is over distributions in $\R^{|S'|}$. On $\mA_n$, $\log(1/\hat{w}_{S'}) \leq \log (2ep^{L s_0}) = \log(2e) + L s_0 \log p$ by \eqref{S' properties}. Writing $E_{\mu_{S'},D_{S'}}$ for the expectation under the law $\theta_{S'} \sim N_{S'}(\mu_{S'},D_{S'})$,
\begin{equation}\label{KL_S}
\KL(  N_{S'}(\mu_{S'},D_{S'}) || \Pi_{S'}(\cdot|Y)) = E_{\mu_{S'},D_{S'}} \left[ \log \frac{dN_{S'}(\mu_{S'},D_{S'})}{dN_{S'}(\mu_{S'},\Sigma_{S'})}(\theta_S) + \log \frac{dN_{S'}(\mu_{S'},\Sigma_{S'}) }{d\Pi_{S'}(\cdot|Y)}(\theta_S) \right],
\end{equation}
where again $\Sigma_{S'}= (X_{S'}^T X_{S'})^{-1}$. Using the formula for the KL divergence between two multivariate Gaussian distributions, the first term in the last display equals
$$\KL(N_{S'}(\mu_{S'},D_{S'})||N_{S'}(\mu_{S'},\Sigma_{S'})) = \frac{1}{2} \left( \log \frac{\det \Sigma_{S'}}{\det D_{S'}} - |S'| + \tr (\Sigma_{S'}^{-1} D_{S'}) \right).$$
Using the definitions \eqref{VB_mean}-\eqref{VB_cov} gives $\tr (\Sigma_{S'}^{-1} D_{S'}) = |S'|$. Turning to the determinants,
$$\det D_{S'}^{-1} = \prod_{j=1}^{|S'|} (X_{S'}^T X_{S'})_{jj} = \prod_{j\in S'} \|X_{\cdot j}\|_2^2 \leq \|X\|^{2|S'|}.$$
Let $\Lambda_\text{max}(A)$ and $\Lambda_\text{min}(A)$ denote the largest and smallest eigenvalues, respectively, of a square matrix $A$. Recall the diagonal matrix $W$ from \eqref{W matrix}, whose entries satisfy $W_{ii} \in (0,1/4]$. Using the variational characterization of the minimal eigenvalue of a symmetric matrix (\cite{horn2013}, p234),
\begin{equation}\label{min_eigenvalue}
\Lambda_\text{min} (X_{S'}^T X_{S'}) = \min_{u\in \R^{|S'|}:u\neq 0} \frac{u^T X_{S'}^T X_{S'} u}{\|u\|_2^2} \geq 4\min_{v\in \R^p: v\neq 0, v_{S'^c =0}} \frac{v^T X^T W Xv}{\|v\|_2^2} \geq 4\downk (|S'|)\|X\|^2.
\end{equation}
Since $\Sigma_{S'} = (X_{S'}^T X_{S'})^{-1}$ is positive definite, the last display implies
$$\det \Sigma_{S'} \leq \Lambda_\text{max}((X_{S'}^T X_{S'})^{-1})^{|S'|}  = (1/\Lambda_\text{min}(X_{S'}^T X_{S'}))^{|S'|} \leq \frac{1}{(4\downk (|S'|)\|X\|^2)^{|S'|}}.$$
Combining these bounds,
\begin{align*}
\KL(  N_{S'}(\mu_{S'},D_{S'}) ||N_{S'}(\mu_{S'},\Sigma_{S'})) & = \tfrac{1}{2} \log (\det \Sigma_{S'} \det D_{S'}^{-1}) \\
& \leq |S'| \log (1/(4\downk (|S'|)) \\
& \leq L s_0 \log (1/(4\downk (L s_0)).
\end{align*}
It thus remains to bound the second term in \eqref{KL_S}, which has non-diagonal covariance matrix $\Sigma_{S'}$.

\textbf{Bounding the non-diagonal covariance case.} One can check that $-\tfrac{1}{2} (\theta_{S'} - \mu_{S'})^T \Sigma_{S'}^{-1} (\theta_{S'}-\mu_{S'})$ equals
\begin{align*}
& -\tfrac{1}{2} (\theta_{S'}-\theta_{0,S'})^T X_{S'}^T X_{S'} (\theta_{S'}-\theta_{0,S'}) + (Y-E_{\theta_0}Y)^T X_{S'}(\theta_{S'} - \theta_{0,S'}) + C_{S'}(X,Y),
\end{align*}
where $C_{S'}(X,Y)$ does not depend on $\theta$. Let $\bar{\theta}_{S'}$ denote the extension of a vector $\theta_{S'} \in \R^{|S'|}$ to $\R^p$ with $\bar{\theta}_{S',j} = \theta_{S,j}$ for $j\in S'$ and $\bar{\theta}_{S',j} = 0$ for $j\not\in S'$. Since $(Y-E_{\theta_0}Y)^T X_{S'}(\theta_{S'} - \theta_{0,S'})=\nabla_\theta \ell_{n,\theta_0}(Y)^T(\bar{\theta}_{S'} - \bar{\theta}_{0,S'})$, the density function of the $N_{S'}(\mu_{S'},\Sigma_{S'})$ distribution is thus proportional to $e^{-\tfrac{1}{2}\|X_{S'}(\theta_{S'}-\theta_{0,S'})\|_2^2 +\nabla_\theta \ell_{n,\theta_0}(Y)^T(\bar{\theta}_{S'} - \bar{\theta}_{0,S'}) }$, $\theta_{S'} \in \R^{|S'|}$. Using Bayes formula and the Taylor expansion \eqref{Ln}, $\Pi_{S'}(\cdot|Y)$ has density proportional to
\begin{align*}
& \exp\left( \ell_{n,\bar{\theta}_{S'}}(Y) - \ell_{n,\theta_0}(Y) - \lambda\|\theta_{S'}\|_1\right) \\
& \propto \exp \left( \nabla_\theta \ell_{n,\theta_0}(Y)^T(\bar{\theta}_{S'} - \bar{\theta}_{0,S'}) + \mL_{n,\bar{\theta}_{S'}}(Y)-\lambda\|\theta_{S'}\|_1 \right).
\end{align*}
Using these representations of the two densities, the second term in \eqref{KL_S} can be rewritten as
\begin{equation*}
\begin{split}
&  E_{\mu_{S'},D_{S'}} \left[ \log \frac{ D_\Pi e^{-\frac{1}{2}\|X_{S'}(\theta_{S'}-\theta_{0,S'})\|_2^2 +\nabla_\theta \ell_{n,\theta_0}(Y)^T (\bar{\theta}_{S'} - \bar{\theta}_{0,S'}) -\lambda \|\theta_{0,S'}\|_1} }{ D_N e^{\nabla_\theta \ell_{n,\theta_0}(Y)^T(\bar{\theta}_{S'} - \bar{\theta}_{0,S'}) + \mL_{n,\bar{\theta}_{S'}}(Y) -\lambda\|\theta_{S'}\|_1} } \right] \\
& \quad = E_{\mu_{S'},D_{S'}} \left[ -\tfrac{1}{2}\|X_{S'}(\theta_{S'}-\theta_{0,S'})\|_2^2 - \mL_{n,\bar{\theta}_S}(Y) \right]\\
& \qquad + \lambda E_{\mu_{S'},D_{S'}} (\|\theta_{S'}\|_1 - \|\theta_{0,S'}\|_1) +\log (D_\Pi/D_N) \\
& \quad =: (I) + (II) + (III),
\end{split}
\end{equation*}
where the normalizing constants are $D_\Pi = \int_{\R^{|S'|}} e^{\nabla_\theta \ell_{n,\theta_0}(Y)^T(\bar{\theta}_{S'} - \bar{\theta}_{0,S'}) + \mL_{n,\bar{\theta}_{S'}}(Y) -\lambda\|\theta_{S'}\|_1} d\theta_{S'}$ and $D_N = \int_{\R^{|S'|}} e^{-\frac{1}{2}\|X_{S'}(\theta_{S'}-\theta_{0,S'})\|_2^2 +\nabla_\theta \ell_{n,\theta_0}(Y)^T (\bar{\theta}_{S'} - \bar{\theta}_{0,S'}) -\lambda \|\theta_{0,S'}\|_1} d\theta_{S'}$. We now bound $(I)-(III)$ in turn.

$(I)$: Using the likelihood \eqref{likelihood} and the mean-value form of the remainder in the Taylor expansion \eqref{Ln}, for $\xi_i$ between $x_i^T\bar{\theta}_{S'}$ and $x_i^T \theta_0$,
\begin{equation}\label{Ln_LB}
\begin{split}
\mL_{n,\bar{\theta}_{S'}} & = -\frac{1}{2}\sum_{i=1}^n g''(\xi_i)|x_i^T(\bar{\theta}_{S'}-\theta_0)|^2 \\
& \geq - \frac{1}{8} \sum_{i=1}^n 2|x_i^T(\bar{\theta}_{S'}-\bar{\theta}_{0,S'})|^2 + 2|x_i^T(\bar{\theta}_{0,S'}-\theta_0)|^2 \\
& = -\frac{1}{4}\|X_{S'}(\theta_{S'}-\theta_{0,S'})\|_2^2 - \frac{1}{4} \|X_{S'^c} \theta_{0,S'^c}\|_2^2,
\end{split}
\end{equation}
so that $(I)$ is bounded by $\|X_{S'^c} \theta_{0,S'^c}\|_2^2/4$. On $\mA_n$, using \eqref{S' properties},
$$\|X_{S'^c}\theta_{0,S'^c}\|_2^2 = \|X \bar{\theta}_{0,S'^c}\|_2^2 \leq \upk (| S_0\cap S'^{c}|) \|X\|^2 \|\theta_{0,S'^c}\|_2^2 \leq \upk (s_0) M_1^2 s_0 \log p,$$
so that $(I) \leq \upk (s_0) M_1^2 s_0 \log p/4$. 

$(II)$: Under the expectation $E_{\mu_{S'},D_{S'}}$, we have the equality in distribution $\theta_{S'} - \theta_{0,S'} =^d (X_{S'}^T X_{S'})^{-1} X_{S'}^T(Y-E_{\theta_0}Y) + Z$, where $Z\sim N_{S'}(0,D_{S'})$. Applying the triangle inequality and Cauchy-Schwarz,
\begin{align*}
(II) & \leq \lambda \|(X_{S'}^T X_{S'})^{-1} X_{S'}^T(Y-E_{\theta_0}Y) \|_1 + \lambda E_{\mu_{S'},D_{S'}}\|Z\|_1 \\
& \leq \lambda |S'|^{1/2} (\|(X_{S'}^T X_{S'})^{-1} X_{S'}^T(Y-E_{\theta_0}Y) \|_2 + \tr (D_{S'})^{1/2}),
\end{align*}
since by Jensen's inequality $E\|Z\|_2\leq  (E\|Z\|_2^2)^{1/2} = \tr(D_{S'})^{1/2}$. Using the definition \eqref{VB_cov}, for $e_j$ the $j^{th}$ unit vector in $\R^p$,
$$\tr(D_{S'}) = \sum_{j=1}^{|S'|} \frac{1}{(X_{S'}^T X_{S'})_{jj}} = \sum_{j\in S'} \frac{1}{\|Xe_j\|_2^2} \leq \sum_{j\in S'} \frac{1}{\downk (1)\|X\|^2} = \frac{|S'|}{\downk (1) \|X\|^2}.$$
The matrix operator norm (from $\R^{|S'|}$ to $\R^{|S'|}$) of $(X_{S'}^T X_{S'})^{-1}$ equals its largest eigenvalue, which is bounded by $1/(4\downk (|S'|)\|X\|^2)$ using \eqref{min_eigenvalue}. Recalling that $\nabla_\theta \ell_{n,\theta_0}(Y) = X^T (Y-E_{\theta_0}Y)$, on the event $\mA_n$ the first term in the second to last display is therefore bounded by
\begin{align*}
\frac{\lambda |S'|^{1/2}}{4\downk (|S'|) \|X\|^2} \|X_{S'}^T (Y - E_{\theta_0}Y) \|_2 \leq \frac{\lambda |S'|}{4\downk (|S'|) \|X\|^2} \|X^T (Y - E_{\theta_0}Y) \|_\infty \leq  \frac{\lambda Ls_0 t}{4\downk (Ls_0) \|X\|^2}.
\end{align*}
We have thus shown that 
$$(II) \leq \frac{\lambda Ls_0}{\|X\|} \left( \frac{t}{4\downk (Ls_0)\|X\|} + \frac{1}{\downk(1)^{1/2}}\right).$$

$(III)$: It remains to control the ratio of normalizing constants $\log (D_\Pi/D_N)$. Define
$$B_{S'}  = \{ \theta_{S'} \in \R^{|S'|}: \|\theta_{S'} - \theta_{0,S'}\|_2 \leq 2M_1 \sqrt{s_0\log p}/\|X\| \}.$$
On $\mA_{n}$, using \eqref{posterior} and \eqref{S' properties},
\begin{align*}
\Pi_{S'} (B_{S'}^c|Y) &\leq \frac{\hat{w}_{S'}}{\hat{w}_{S'}} \Pi_{S'} (\theta_{S'}\in \R^{|S'|}: \| \bar{\theta}_{S'}-\theta_0\|_2 >2M_1\sqrt{s_0\log p}/\|X\| -\|\theta_{0,S'^c}\|_2 |Y)\\
&\leq  \hat{w}_{S'}^{-1}\Pi (\theta \in \R^p: \|\theta-\theta_0\|_2 > M_1\sqrt{s_0\log p}/\|X\| |Y)\\
&\leq 2ep^{Ls_0} e^{-M_2 s_0\log p}= 2e^{1-(M_2-L)s_0\log p} \leq 1/2,
\end{align*}
where the last inequality follows from rearranging the assumption $(4e)^{1/(M_2-L)} \leq p^{s_0}$. Using Bayes formula, this gives
$$\Pi_{S'} (B_{S'}|Y) 1_{\mA_n} = \frac{\int_{B_{S'}} e^{\ell_{n,\bar{\theta}_{S'}}(Y) - \ell_{n,\theta_0}(Y)-\lambda \|\theta_{S'}\|_1} d\theta_{S'}}{\int_{\R^{|S'|}} e^{\ell_{n,\bar{\theta}_{S'}}(Y) - \ell_{n,\theta_0}(Y)-\lambda \|\theta_{S'}\|_1} d\theta_{S'} } 1_{\mA_n} \geq \frac{1}{2} 1_{\mA_n}.$$
By \eqref{Ln} the denominator in the last display equals $e^{\nabla_\theta \ell_{n,\theta_0}(Y)^T(\bar{\theta}_{0,S'} - \theta_0)}D_\Pi$, which implies that on $\mA_n$, $D_\Pi \leq 2 \int_{B_{S'}} e^{\nabla_\theta \ell_{n,\theta_0}(Y)^T(\bar{\theta}_{S'} - \bar{\theta}_{0,S'}) + \mL_{n,\bar{\theta}_{S'}}(Y) -\lambda\|\theta_{S'}\|_1} d\theta_{S'}.$ Thus on $\mA_n$,
\begin{align*}
\log \frac{D_\Pi}{D_N} & \leq \log \frac{2 \int_{B_{S'}} e^{\nabla_\theta \ell_{n,\theta_0}(Y)^T(\bar{\theta}_{S'} - \bar{\theta}_{0,S'}) + \mL_{n,\bar{\theta}_{S'}}(Y) -\lambda\|\theta_{S'}\|_1} d\theta_{S'}}{\int_{B_{S'}} e^{-\frac{1}{2}\|X_{S'}(\theta_{S'}-\theta_{0,S'})\|_2^2 +\nabla_\theta \ell_{n,\theta_0} (\bar{\theta}_{S'} - \bar{\theta}_{0,S'}) -\lambda \|\theta_{0,S'}\|_1} d\theta_{S'}} \\
& \leq \log \left( \sup_{\theta_{S'} \in B_{S'}} e^{\mL_{n,\bar{\theta}_{S'}}(Y)+\frac{1}{2}\|X_{S'}(\theta_{S'}-\theta_{0,S'})\|_2^2 + \lambda \|\theta_{0,S'}\|_1 - \lambda\|\theta_{S'}\|_1} \right) + \log 2 \\
& \leq \sup_{\theta_{S'} \in B_{S'}} \mL_{n,\bar{\theta}_{S'}}(Y)+\tfrac{1}{2}\|X_{S'}(\theta_{S'}-\theta_{0,S'})\|_2^2 + \lambda \|\theta_{S'}-\theta_{0,S'}\|_1 + \log 2.
\end{align*}
Now $\mL_{n,\bar{\theta}_{S'}}(Y) <0$ by \eqref{Ln_LB} since $g'' > 0$. Using Cauchy-Schwarz and the definition of $B_{S'}$, on $\mA_n$,
\begin{align*}
(III) = \log \frac{D_\Pi}{D_N} & \leq \sup_{\theta_{S'} \in B_{S'}} \tfrac{1}{2} \upk (|S'|)\|X\|^2 \|\theta_{S'}-\theta_{0,S'}\|_2^2 + \lambda |S'|^{1/2} \|\theta_{S'}-\theta_{0,S'}\|_2 + \log 2 \\
& \leq 2\upk (Ls_0) M_1^2 s_0 \log p + \frac{2M_1 \lambda L^{1/2} s_0 \sqrt{\log p}}{\|X\|} + \log 2. 
\end{align*}
Combining all of the above bounds gives the result.
\end{proof}

\subsection{Contraction results}

The second part to applying Lemma \ref{lem:KL_VB_thm} is showing that on an event, the desired sets have all but exponentially small posterior probability. This involves using results on dimension selection and posterior contraction from high-dimensional Bayesian statistics, especially Atchad\'e \cite{atchade2017} and Castillo et al. \cite{castillo2015}. The following results follow closely the proofs in \cite{atchade2017}, but we reproduce them here for convenience, since in that paper they are not stated or proved in the exponential form needed to apply Lemma \ref{lem:KL_VB_thm}. We are also able to simplify certain technical conditions and streamline some proofs. Note that his results, including the definitions of the compatibility constants, match when $\|X\| \sim \sqrt{n}$.

We next introduce some notation from \cite{atchade2017}, used throughout this section. A continuous function $r:[0,\infty) \to [0,\infty)$ is called a \textit{rate function} if it is strictly increasing, $r(0) = 0$ and $\lim_{x\downarrow 0} r(x)/x = 0$. For a rate function $r$ and $a \geq 0$, define
\begin{equation}\label{eq:phi}
\phi_r(a)=\inf\{ x>0: r(z)\geq az,\, \text{for all $z\geq x$} \},
\end{equation}
with the convention $\inf \emptyset = \infty$.
Let $B(\Theta, M)=\{\theta\in \theta_0+\Theta:\,\|\theta-\theta_0\|_2\leq M \}$ denote the $\ell_2$-ball of radius $M>0$ centered at $\theta_0$ with elements in $\theta_0+\Theta$. For $\eps>0$, we denote by $D(\eps,B(\Theta, M))$ the $\eps$-packing number of $B(\Theta, M)$, namely the maximal number of points in $B(\Theta,M)$ such that the $\ell_2$ distance between any two points is at least $\eps$.

The following result bounds the posterior probability of selecting a model of size larger than a multiple of the true model size.

\begin{lemma}[Theorem 4(1) of \cite{atchade2017}]\label{lem: thm4(1)}
Suppose the prior satisfies \eqref{prior_cond} and \eqref{lambda}, $p^{A_4}\geq 8A_2$, and that $\|X\| \geq (64/3) \alpha s_0\sqrt{\log p} /\underline{\kappa}$. Then for any $L>1$,
\begin{align*}
E_{\theta_0}[\Pi \left( \theta\in \R^p: |S_\theta| \geq L s_0 \mid Y \right) 1_{\mA_{n,1}(\lambda/2) }]   \leq 2 \exp \left(-s_0 \log p \left[ L\Big(A_4-\frac{\log(4A_2)}{\log p}\Big)-C \right]  \right),
\end{align*}
where $C=1 + \frac{4\alpha^2}{\underline{\kappa}} +  \log(4+\overline{\kappa}(s_0)/\log p)+(1+\frac{1}{s_0})(A_4-\frac{\log(4A_2)}{\log p})$.
\end{lemma}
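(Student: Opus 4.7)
\textbf{Proof plan for Lemma \ref{lem: thm4(1)}.} The plan is to bound the posterior ratio $\Pi(|S_\theta| \geq Ls_0 \mid Y) = N/D$ directly on the event $\mA_{n,1}(\lambda/2)$, where
\begin{equation*}
N = \sum_{s \geq Ls_0} \pi_p(s) \binom{p}{s}^{-1}\!\! \sum_{|S|=s} \int e^{\ell_{n,\bar\theta_S}(Y) - \ell_{n,\theta_0}(Y)} \prod_{j\in S} \tfrac{\lambda}{2} e^{-\lambda|\theta_j|} d\theta_S,
\end{equation*}
and $D$ is the same integral over all models. Since $\Pi(\cdot\mid Y)\leq 1$, the pointwise bound on $\mA_{n,1}(\lambda/2)$ will then give the desired bound on the expectation directly.

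\textbf{Numerator upper bound.} The log-likelihood $\ell_{n,\theta}$ is concave in $\theta$ because $g(t)=\log(1+e^t)$ is convex, so $\ell_{n,\theta}(Y)-\ell_{n,\theta_0}(Y) \leq \nabla\ell_{n,\theta_0}(Y)^T(\theta-\theta_0) \leq (\lambda/2)\|\theta-\theta_0\|_1$ on $\mA_{n,1}(\lambda/2)$. The resulting integral factorizes coordinate-wise. Writing $|\theta_j-\theta_{0,j}|\leq|\theta_j|+|\theta_{0,j}|$, each coordinate in $S\cap S_0$ contributes at most $2e^{(\lambda/2)|\theta_{0,j}|}$, each in $S\setminus S_0$ contributes $2$, and each in $S_0\setminus S$ contributes $e^{(\lambda/2)|\theta_{0,j}|}$. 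Summing over models of size $s$ and cancelling the $\binom{p}{s}$ terms yields $N \leq \sum_{s\geq Ls_0}\pi_p(s)\,2^s e^{(\lambda/2)\|\theta_0\|_1}$. Iterating \eqref{prior_cond} gives $\pi_p(s)\leq (A_2 p^{-A_4})^{s-s_0}\pi_p(s_0)$ for $s\geq s_0$, and the geometric series converges since $p^{A_4}\geq 8A_2$ makes $2A_2 p^{-A_4}\leq 1/4$. The upshot is
\begin{equation*}
N \leq 2\,\pi_p(s_0)\, 2^{s_0} e^{(\lambda/2)\|\theta_0\|_1}(2A_2 p^{-A_4})^{(L-1)s_0}.
\end{equation*}

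\textbf{Denominator lower bound.} Restrict the integration to the true model $S_0$ and to the ball $B_0=\{\theta_{S_0}:\|\theta_{S_0}-\theta_{0,S_0}\|_2 \leq r\}$ for some small $r$ to be chosen. Using the mean-value Taylor remainder as in \eqref{Ln_LB} together with $g''\leq 1/4$, and the sparse operator-norm bound $\|X(\theta-\theta_0)\|_2^2 \leq \overline\kappa(s_0)\|X\|^2\|\theta-\theta_0\|_2^2$ valid for $s_0$-sparse $\theta-\theta_0$, I get on $\mA_{n,1}(\lambda/2)$
\begin{equation*}
\ell_{n,\theta}(Y)-\ell_{n,\theta_0}(Y)\geq -\tfrac{\lambda}{2}\sqrt{s_0}\|\theta-\theta_0\|_2 - \tfrac{\overline\kappa(s_0)\|X\|^2}{8}\|\theta-\theta_0\|_2^2,
\end{equation*}
while $\|\theta_{S_0}\|_1 \leq \|\theta_0\|_1 + \sqrt{s_0}\|\theta-\theta_0\|_2$. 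The ball radius will be chosen as $r\asymp (\alpha\underline\kappa^{-1})\sqrt{s_0\log p}/\|X\|$, so that both error terms are $O(s_0\log p)$; the design condition $\|X\|\geq (64/3)\alpha s_0\sqrt{\log p}/\underline\kappa$ ensures the quadratic term is controlled. Lower bounding the integrand by its minimum on $B_0$ times the volume $\mathrm{Vol}(B_0)\asymp (r/\sqrt{s_0})^{s_0}$ produces
\begin{equation*}
D \geq \pi_p(s_0)\binom{p}{s_0}^{-1}(\lambda/2)^{s_0}e^{-\lambda\|\theta_0\|_1}\exp\bigl(-c_1 s_0 - c_2 s_0 \log(4+\overline\kappa(s_0)/\log p)\bigr),
\end{equation*}
for explicit constants controlled by $\alpha^2/\underline\kappa$.

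\textbf{Assembling the ratio.} Taking $N/D$, the factors $\pi_p(s_0)$ and $e^{\pm\lambda\|\theta_0\|_1/2}$ cancel up to the residual $e^{(\lambda/2)\|\theta_0\|_1}\cdot e^{\lambda\|\theta_0\|_1}$ which, divided by the remaining $(\lambda/2)^{s_0}$ and $\binom{p}{s_0}^{-1}\leq s_0!/p^{s_0}$, produces after regrouping a leading factor $(4A_2)^{Ls_0}p^{-L s_0 A_4}$ together with $p^{s_0}\cdot(\text{polynomial in }\lambda,\overline\kappa(s_0),\alpha,\underline\kappa)^{s_0}$. Taking logs and using $\lambda\leq \alpha\|X\|\sqrt{\log p}$ to absorb $\lambda$-dependence into the $\alpha^2/\underline\kappa$ term yields precisely the exponent in the stated bound, with $C=1+4\alpha^2/\underline\kappa+\log(4+\overline\kappa(s_0)/\log p)+(1+1/s_0)(A_4-\log(4A_2)/\log p)$.

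\textbf{Main obstacle.} The tricky step is the denominator lower bound: the ball radius $r$ must be chosen small enough that the quadratic Hessian contribution $\overline\kappa(s_0)\|X\|^2 r^2$ is of order $s_0\log p$, yet large enough that the volume $(r/\sqrt{s_0})^{s_0}$, after the $\binom{p}{s_0}^{-1}\geq s_0!/p^{s_0}$ factor, only contributes $O(s_0\log p)$ in the exponent. This balance is exactly where the design-matrix condition $\|X\|\geq (64/3)\alpha s_0\sqrt{\log p}/\underline\kappa$ enters, and is what produces the $4\alpha^2/\underline\kappa$ piece of the constant $C$. Keeping the bookkeeping of constants honest --- rather than absorbing them into generic $\lesssim$ --- is the most error-prone portion of the argument.
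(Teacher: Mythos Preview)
Your numerator--denominator split introduces a factor that cannot be controlled. The crude concavity bound $\ell_{n,\theta}-\ell_{n,\theta_0}\le(\lambda/2)\|\theta-\theta_0\|_1$ leaves $e^{(\lambda/2)\|\theta_0\|_1}$ in the numerator, while the Laplace prior evaluated near $\theta_0$ contributes $e^{-\lambda\|\theta_0\|_1}$ to the denominator lower bound. In the ratio $N/D$ this leaves the residual $e^{(3\lambda/2)\|\theta_0\|_1}$ that you yourself flag in the ``assembling'' step, and it does \emph{not} get absorbed: the lemma is stated for arbitrary $\theta_0\in\R^p$, so $\|\theta_0\|_1$ is unbounded, and $\lambda\asymp\|X\|\sqrt{\log p}$ makes this term potentially enormous. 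Your claimed final bound is free of $\|\theta_0\|_1$, so something has gone wrong in the bookkeeping, not just in the constants.

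The paper avoids this by invoking Atchad\'e's Theorem~3 (packaged here as Lemma~\ref{lem:thm:3}(1)), which does not bound numerator and denominator separately against $\theta_0$-centred quantities. Instead it uses the quadratic decrease of the log-likelihood on the cone $\mathcal N=\{\theta:\|\theta_{S_0^c}\|_1\le 7\|\theta_{S_0}\|_1\}$, encoded in the rate function $r(x)=\downk\|X\|^2x^2/(1+4s_0^{1/2}\|X\|_\infty x)$, to compare large models against small ones directly; the $\|\theta_0\|_1$ dependence cancels in those model-to-model ratios. The constant $a=-\tfrac12\inf_{x>0}[r(x)-4\lambda\sqrt{s_0}x]$ that emerges from this curvature argument is what yields the $4\alpha^2/\underline\kappa$ term after applying the elementary inequality $\inf_{x>0}[\tau x^2/(1+bx)-cx]\ge -c^2/(2\tau)$ under $\tau\ge 4bc/3$ (this is precisely where the design condition $\|X\|\ge(64/3)\alpha s_0\sqrt{\log p}/\underline\kappa$ enters). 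So your identification of the \emph{source} of the $4\alpha^2/\underline\kappa$ piece is also off: it comes from the numerator-side curvature on the cone, not from balancing a ball radius in the denominator.
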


\begin{proof}
By Lemma \ref{lem:thm:3}(1), for any $k\geq 0$,
\begin{equation*}
E_{\theta_0} [\Pi(\theta: |S_\theta| \geq s_0 + k|Y)1_{\mA_{n,1} (\lambda/2)  }] \leq 2e^{a} \left( 4+\frac{\overline{\kappa}(s_0)\|X\|^2}{\lambda^2}  \right)^{s_0} {p \choose s_0} \left( \frac{4A_2}{p^{A_4}}\right)^k,
\end{equation*}
where $a = -\tfrac{1}{2} \inf_{x>0}[\tfrac{\downk \|X\|^2 x^2}{1+4s_0^{1/2}\|X\|_\infty x} -4\lambda s_0^{1/2} x]$. It remains to simplify the right-hand side.

One can check that for $\tau,b,c>0$, $\inf_{x>0}[\tfrac{\tau x^2}{1+bx} - cx] \geq -\tfrac{c^2}{4\tau^{1/2}(\tau-cb)^{1/2}} \geq - \tfrac{c^2}{2\tau}$ if $\tau \geq 4bc/3$. In our setting, this condition equals $\|X\|^2 \underline{\kappa} \geq (64/3) \lambda s_0\|X\|_\infty $, which holds by assumption. This yields $a \leq \tfrac{4\lambda^2 s_0}{\|X\|^2 \underline{\kappa}}$. Using  the upper bound ${p \choose s_0} \leq p^{s_0}$, setting $k = \lfloor (L-1)s_0 \rfloor$ and using that $(L-1)s_0 - 1 \leq \lfloor (L-1)s_0 \rfloor \leq (L-1)s_0$ gives the  result.
\end{proof}

The next result is the analogous version of Theorem 4(2) in \cite{atchade2017} with the exponential bounds we require here. It provides a contraction rate for posterior models of a given size.

\begin{lemma}\label{lem: thm4(2)}
Suppose the prior satisfies \eqref{prior_cond} and \eqref{lambda}, $p^{A_4}\geq 8A_2$, and that $\|X\| \geq 50\alpha(L+2)s_0\sqrt{\log p}\|X\|_\infty/\underline{\kappa}((L+1)s_0)$ for some $L>0$. Then for any $\theta_0\in \R^p$, and $M\geq \max( 25\alpha,(1+A_3)/16)$,
\begin{align*}
& E_{\theta_0} \Big[\Pi(\theta \in \R^p: |S_\theta| \leq Ls_0, \|\theta-\theta_0\|_2 \geq  \frac{8M \sqrt{(L+2)s_0\log p}}{\underline{\kappa}((L+1)s_0)\|X\| } |Y)1_{\mA_{n,1}(\|X\|\sqrt{\log p})} \Big] \\
& \qquad \qquad \leq 6  e^{ -s_0\log p (8LM-C_L)},
\end{align*}
where $C_L=\max(L(1+\tfrac{\log(24)}{\log p}),\tilde{C}_p)$ and $\tilde{C}_p=\frac{\log A_1+ \log \left( 1+4\alpha^2 \log p\right)}{\log p}$. 
\end{lemma}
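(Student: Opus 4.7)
The plan is to follow the standard Bayesian prior-mass-versus-testing strategy used by Atchad\'e \cite{atchade2017}, refining the bounds so that the dependence on $L$ and $M$ appears explicitly in the exponent (as is needed to feed into Lemma \ref{lem:KL_VB_thm}). By Bayes' formula, the posterior probability equals a ratio whose numerator integrates the likelihood ratio $e^{\ell_{n,\theta}-\ell_{n,\theta_0}}$ against the prior over $B_n := \{\theta : |S_\theta|\leq Ls_0,\ \|\theta-\theta_0\|_2 \geq r_n\}$, where $r_n = 8M\sqrt{(L+2)s_0\log p}/(\downk((L+1)s_0)\|X\|)$, and whose denominator is the same integral over all of $\R^p$. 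I would lower bound the denominator by restricting integration to $\theta$ supported on $S_0$ and lying in a small $\ell_2$-ball around $\theta_0$ of radius $\delta_n \asymp \sqrt{s_0\log p}/\|X\|$. On $\mA_{n,1}(\|X\|\sqrt{\log p})$, the Taylor expansion \eqref{Ln} controls the log-likelihood ratio: the linear term is bounded by $\|\nabla_\theta\ell_{n,\theta_0}(Y)\|_\infty \|\theta-\theta_0\|_1 \lesssim \|X\|\sqrt{\log p}\cdot s_0^{1/2}\delta_n$ and the quadratic remainder by $\tfrac{1}{8}\upk(s_0)\|X\|^2\delta_n^2$. Combining with the Laplace slab density and the dimension weight $\pi_p(s_0)/\binom{p}{s_0} \geq A_1^{s_0}p^{-(A_3+1)s_0}$ from \eqref{prior_cond} yields an evidence lower bound of the form $e^{-c s_0\log p}$ on $\mA_{n,1}$.

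For the numerator, I would decompose the bad set $B_n$ according to the model $S$ with $|S|\leq Ls_0$. Fixing such an $S$, the set $S\cup S_0$ has cardinality at most $(L+1)s_0$, so for $\theta$ supported on $S$ the difference $\theta - \theta_0$ is $(L+1)s_0$-sparse. The quadratic part of \eqref{Ln} can then be upper bounded via the compatibility constant:
\[
\mL_{n,\theta}(Y) \leq -c\, \|W^{1/2}X(\theta-\theta_0)\|_2^2 \leq -c\,\downk((L+1)s_0)\|X\|^2 \|\theta-\theta_0\|_2^2,
\]
while the linear term is bounded on $\mA_{n,1}(\|X\|\sqrt{\log p})$ by $\|X\|\sqrt{\log p}\cdot((L+1)s_0)^{1/2}\|\theta-\theta_0\|_2$. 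Completing the square in the exponent and integrating this Gaussian-like tail against the Laplace slab for $\|\theta-\theta_0\|_2 \geq r_n$ produces a contribution of order $\exp(-c'\downk((L+1)s_0)\|X\|^2 r_n^2) = \exp(-8LM\,s_0\log p\, (1+o(1)))$ by the choice of $r_n$. Summing over models using $\binom{p}{s} \leq p^s$ and the prior decay $\pi_p(s)/\pi_p(s_0) \leq (A_2 p^{-A_4})^{s-s_0}$ from \eqref{prior_cond} keeps the combinatorial overhead polynomial in $p^{s_0}$, and the Laplace integration contributes only $(\lambda/\|X\|)$-factors, which by \eqref{lambda} are polynomial in $\log p$. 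The assumption $M\geq \max(25\alpha,(1+A_3)/16)$ ensures the quadratic gain dominates these polynomial losses, producing the stated exponent $8LM - C_L$ after dividing through by the evidence lower bound.

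The main obstacle will be the lower bound on the Taylor remainder via the compatibility constant $\downk((L+1)s_0)$: writing $\mL_{n,\theta}(Y) = -\tfrac{1}{2}\sum_i g''(\xi_i)|x_i^T(\theta-\theta_0)|^2$ with $\xi_i$ between $x_i^T\theta$ and $x_i^T\theta_0$, one has $g''(\xi)=\Psi(\xi)(1-\Psi(\xi))$ decaying exponentially as $|\xi|\to\infty$, so $g''(\xi_i)$ is \emph{not} automatically comparable to $W_{ii}=g''(x_i^T\theta_0)$. To justify replacing $g''(\xi_i)$ by a uniform multiple of $W_{ii}$ throughout $B_n$, one needs $\|X(\theta-\theta_0)\|_\infty$ to remain bounded, which via $\|X(\theta-\theta_0)\|_\infty \leq \|X\|_\infty \|\theta-\theta_0\|_1 \leq \|X\|_\infty((L+1)s_0)^{1/2}\|\theta-\theta_0\|_2$ reduces to controlling $\|X\|_\infty\sqrt{(L+1)s_0}\,r_n$. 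This is precisely what the sample-size condition $\|X\| \geq 50\alpha(L+2)s_0\sqrt{\log p}\,\|X\|_\infty/\downk((L+1)s_0)$ buys. Executing this uniform comparison carefully and tracking the polynomial-in-$p$ overhead through the combinatorial sum over $s\leq Ls_0$ is the technical crux.
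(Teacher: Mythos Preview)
Your proposal has a genuine gap in the control of the Taylor remainder $\mL_{n,\theta}$. You correctly identify that $g''(\xi_i)$ must be compared to $W_{ii}=g''(x_i^T\theta_0)$, but your fix does not work: you argue that the design condition bounds $\|X\|_\infty\sqrt{(L+1)s_0}\,r_n$, yet the set you must integrate over is $B_n=\{\|\theta-\theta_0\|_2\geq r_n\}$, on which $\|\theta-\theta_0\|_2$ is unbounded. The design condition at best controls $\|X(\theta-\theta_0)\|_\infty$ at the \emph{inner boundary} of $B_n$, not throughout it, so the pure quadratic bound $\mL_{n,\theta}\leq -c\,\downk((L+1)s_0)\|X\|^2\|\theta-\theta_0\|_2^2$ fails far from $\theta_0$ and the ``Gaussian-like tail'' you wish to integrate does not exist there. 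Your direct numerator integration therefore breaks down.

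The paper avoids this by using a self-concordance inequality (from Atchad\'e) of the form $\mL_{n,\theta}\leq -\sum_i W_{ii}\,|x_i^T(\theta-\theta_0)|^2/(2+|x_i^T(\theta-\theta_0)|)$, valid for \emph{all} $\theta$, which yields the rate function $r(x)=\downk((L+1)s_0)\|X\|^2 x^2/(1+\tfrac{1}{2}\|X\|_\infty\sqrt{(L+1)s_0}\,x)$, quadratic near zero but only linear at infinity. The posterior mass outside $\{\|\theta-\theta_0\|_2\leq M\eps\}$ is then controlled by invoking Lemma \ref{lem:thm:3}(2), which peels over shells $jM\eps<\|\theta-\theta_0\|_2\leq (j+1)M\eps$ and combines packing-number-based tests $D_j$ with the evidence lower bound. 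In this framework, the role of the design condition is not to make the remainder globally quadratic but to ensure that $\phi_r(2\eta_L)$ is finite and that the linear tail of $r$ still beats the Laplace prior term $e^{3\lambda c_0 jM\eps}$ in the resulting geometric series over shells.
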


While $\tilde{C}_p$ is not a true constant since it depends on $p$, we write it as such since it is asymptotically negligible. As $p\to\infty$, we have $\tilde{C}_p \to 0$ and $C_L \to L$.

\begin{proof}
We write $\downk_L = \downk((L+1)s_0)$ during this proof to ease notation. By Lemma \ref{lem:thm:3}(2), for any $M >2$,
\begin{equation}\label{eq_contrac_bound}
\begin{split}
& E_{\theta_0} \Pi(\theta\in \theta_0+\bar\Theta_L:\, \|\theta-\theta_0\|_2>M \eps|Y)1_{\mA_{n,1} (\|X\|\sqrt{\log p}) } \\
& \quad \leq \sum_{j\geq 1}D_je^{-r(\frac{j M\eps}{2})/8}
+2 {p \choose s_0} \Big(\frac{p^{A_3}}{A_1}\Big)^{s_0}\Big(1+\frac{4\lambda^2}{\bar{\kappa}(s_0) \|X\|^2}\Big)^{s_0}\sum_{j\geq 1}e^{-r(\frac{j M\eps}{2})/8}e^{3\lambda c_0j M\eps},
\end{split}
\end{equation}
%where $D_j = D(jM\eps/2, B_d(\bar{\Theta},(j+1)M\eps))$, $c_0 = \sup_{u,v\in \bar{\Theta}} |\langle \text{sign}(u),v\rangle|/\|v\|_2$ and $\eps = \phi_r(2\bar{\lambda}) = \inf\{ x>0: r(z) \geq 2\bar{\lambda}z \text{ for all }z \geq x\}$.
where the quantities in \eqref{eq_contrac_bound} are defined in that lemma. Note that for $\theta$ satisfying $|S_\theta| \leq Ls_0$, then $|S_{\theta-2\theta_0}| \leq (L+1)s_0$, so that $\theta - \theta_0 \in \bar{\Theta}_L$. We may thus further restrict the set in the last display to $\{\theta: |S_\theta| \leq Ls_0\}$, as in the posterior probability in the lemma. We first compute $\eps$ and then simplify the right-hand side of \eqref{eq_contrac_bound}.

Recall that we may take as rate any $\eps \geq \phi_r(2\eta_L)$ for $r$ the rate function in Lemma \ref{lem:thm:3}(2). For a rate function $r(x) = \tfrac{\tau x^2}{1+bx}$, $\tau,b>0$, the inequality $r(x) \geq ax$ is equivalent to $x((\tau-ab)x-a) \geq 0$. Using the definition \eqref{eq:phi} thus gives $\phi_r(a) = \tfrac{a}{\tau-ab}$. Setting $\tau = \downk_L \|X\|^2 $ and $b = \|X\|_\infty \sqrt{(L+1)s_0}/2$ as in Lemma \ref{lem:thm:3}(2), and using our assumption $\tfrac{1}{2}\|X\|^2 \downk_L \geq 25\alpha (L+2)s_0 \|X\| \sqrt{\log p} \|X\|_\infty\geq \eta_L \sqrt{(L+1)s_0}\|X\|_\infty$, we get
\begin{align*}
\phi_r(2\eta_L) & = \frac{2\eta_L}{\downk_L \|X\|^2 - \eta_L \|X\|_\infty \sqrt{(L+1)s_0}} \leq \frac{4\eta_L}{\downk_L \|X\|^2} = \frac{8\sqrt{(L+2)s_0\log p}}{\downk_L \|X\|} =: \eps
\end{align*}

Turning to the right-hand side of \eqref{eq_contrac_bound}, note that $c_0 = \sup_{v\in \bar{\Theta}_L} \|v\|_1 /\|v\|_2 \leq \sqrt{(L+2)s_0}$. Arguing as on p. 29-30 of \cite{atchade2017} gives $\sum_{j\geq 1} D_j e^{-r(jM\eps/2)/8} \leq 2\exp\big( (L+2)s_0\log p[1+\tfrac{\log(24)}{\log p} -8M]\big)$. Similarly, setting $x=jM\eps/2$,
\begin{align*}
3\lambda \sqrt{(L+2)s_0} jM \eps - \frac{1}{8}r(jM\eps/2) & = -\frac{x}{8} \left( \frac{\|X\|^2 \downk_L x}{1+\tfrac{1}{2}\sqrt{(L+1)s_0}\|X\|_\infty x}  - 48\lambda \sqrt{(L+2)s_0}\right) \\
& \leq -\frac{x}{8} \left( \frac{\|X\|^2 \downk_L \tfrac{M\eps}{2}}{1+\tfrac{1}{2}\sqrt{(L+1)s_0}\|X\|_\infty \tfrac{M\eps}{2}}  - 48\lambda \sqrt{(L+2)s_0}\right) \\
& \leq -\frac{\lambda\sqrt{(L+2)s_0}x}{4}
\end{align*}
as long as
$$\frac{\|X\|^2 \downk_L \tfrac{M\eps}{2}}{1+\tfrac{1}{2}\sqrt{(L+1)s_0}\|X\|_\infty \tfrac{M\eps}{2}}  \geq 50 \lambda \sqrt{(L+2)s_0}.$$
We show the last display holds under the present assumptions. Since $\sqrt{(L+1)s_0}\|X\|_\infty\eps \leq \tfrac{8 (L+2)s_0\sqrt{\log p}\|X\|_\infty}{\|X\| \downk_L}\leq 8/(50\alpha)$ by assumption, the left-hand side is lower bounded by $\tfrac{\|X\|^2 \downk_L M \eps}{2+4M/(50\alpha)} \geq (50\alpha/8)\|X\|^2 \downk_L \eps$ for $M\geq 25\alpha$. Since $\lambda \leq \alpha \|X\| \sqrt{\log p}$ by assumption, the last display holds following from
$$\frac{(50\alpha/8)\|X\|^2 \downk_L \eps}{50 \sqrt{(L+2)s_0}}  = \alpha \|X\| \sqrt{\log p}.$$
This implies
$$\sum_{j\geq 1} e^{-\frac{1}{8}r(jM\eps/2)} e^{3\lambda c_0 jM \eps} \leq \sum_{j\geq 1} e^{-\frac{jM\lambda \sqrt{(L+2)s_0}\eps}{8}} \leq 2e^{-8M (L+2)s_0\log p},$$
where the last inequality again follows by the same argument on p. 29-30 of \cite{atchade2017}.

Summing up these bounds and using ${p \choose s_0} \leq p^{s_0}$ and $\bar{\kappa}(s_0)\geq \bar{\kappa}(1)=1$, the right-hand side of \eqref{eq_contrac_bound} is bounded by
%\begin{align*}
%& 2\exp \left( (L+2)s_0\log p\left[1+\tfrac{\log(24)}{\log p} -8M \right]\right) \\
%& \quad + 4\exp \left( (1+A_3)s_0 \log p + s_0 \log \left( 1+4\alpha^2 \log p/\downk \right)-s_0\log A_1-8M(L+2)s_0\log p  \right)\\
%&\leq 6\exp\Big(-s_0\log p\Big[8LM-  [L(1+\tfrac{\log(24)}{\log p})]\vee [(1+A_3)-\frac{\log A_1}{\log p}+ \log \left( 1+\frac{\|X\|^2 \overline{\kappa}(s_0)}{4\lambda^2} \right)/\log p]\Big]  \Big).
%\end{align*}
\begin{align*}
& 2\exp \left( (L+2)s_0\log p\left[1+\tfrac{\log(24)}{\log p} -8M \right]\right) \\
& \quad + 4\exp \left( (1+A_3)s_0 \log p + s_0 \log \left( 1+4\alpha^2 \log p \right)-s_0\log A_1-8M(L+2)s_0\log p  \right)\\
&\leq 6\exp\Big(-s_0\log p\Big[8LM- \max(L(1+\tfrac{\log(24)}{\log p}),\tilde{C}_p)) \Big]  \Big).
\end{align*}
\end{proof}

Combining the last two lemmas yields the contraction rate result with exponential bounds.

\begin{lemma}\label{lem: combined:contraction}
Suppose the prior satisfies \eqref{prior_cond} and \eqref{lambda}, and $p^{A_4}\geq 8A_2$. If for $K>0$, the design matrix satisfies condition \eqref{cond:design} with $L=L_K=\frac{K+C}{A_4-\log(4A_2)/\log p}$, with $C$ the constant in Lemma \ref{lem: thm4(1)}, then for any $\theta_0\in \R^p$,
\begin{align*}
E_{\theta_0} \left[\Pi \left( \theta\in \R^p:\|\theta-\theta_0\|_2 \geq  C_K \frac{\sqrt{s_0\log p}}{\|X\| } \bigg|Y \right)1_{\mA_{n,1}(\|X\|\sqrt{\log p})} \right] \leq 8  e^{ -Ks_0\log p },
\end{align*}
where $C_K= \frac{8M_K\sqrt{L_K+2}}{\underline\kappa((L_K+1)s_0)}$, $M_K= \max(25\alpha, \frac{1+A_3}{16}, \frac{K+\tilde{C}_p}{8L_K}+\frac{1}{8}+\frac{\log (24)}{8\log p})$ and $\tilde{C}_p$ is given in Lemma \ref{lem: thm4(2)}.
\end{lemma}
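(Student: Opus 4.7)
The strategy is to decompose the event $\{\|\theta-\theta_0\|_2\ge C_K\sqrt{s_0\log p}/\|X\|\}$ according to the model dimension of $\theta$ and control each piece with one of the two preceding lemmas. Writing $L=L_K$, on any realisation
\[
\Pi\big(\|\theta-\theta_0\|_2\ge C_K\tfrac{\sqrt{s_0\log p}}{\|X\|}\,\big|\,Y\big)\le \Pi(|S_\theta|>Ls_0\,|\,Y)+\Pi\big(|S_\theta|\le Ls_0,\,\|\theta-\theta_0\|_2\ge C_K\tfrac{\sqrt{s_0\log p}}{\|X\|}\,\big|\,Y\big).
\]
Integrating against $P_{\theta_0}$ on $\mA_{n,1}(\|X\|\sqrt{\log p})\subseteq\mA_{n,1}(\lambda/2)$ (the inclusion uses $\lambda\ge 2\|X\|\sqrt{\log p}$ from \eqref{lambda}), the two pieces can be treated separately.

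For the first piece, I would invoke Lemma~\ref{lem: thm4(1)} with dimension multiplier $L=L_K$. Its design hypothesis $\|X\|\ge (64/3)\alpha s_0\sqrt{\log p}/\underline\kappa$ is part of \eqref{cond:design}. The resulting exponent is $s_0\log p\,[L_K(A_4-\log(4A_2)/\log p)-C]$, and the choice $L_K=(K+C)/(A_4-\log(4A_2)/\log p)$ is tailored precisely so that this equals $Ks_0\log p$, delivering a bound of $2e^{-Ks_0\log p}$.

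For the second piece, I would apply Lemma~\ref{lem: thm4(2)} with the same $L=L_K$ and the choice $M=M_K$. Its design hypothesis $\|X\|\ge 50\alpha(L+2)s_0\sqrt{\log p}\|X\|_\infty/\underline\kappa((L+1)s_0)$ is the other half of \eqref{cond:design}, and $M_K\ge\max(25\alpha,(1+A_3)/16)$ fulfils the remaining admissibility conditions. Substituting $M=M_K$ in the radius $8M\sqrt{(L+2)s_0\log p}/(\underline\kappa((L+1)s_0)\|X\|)$ reproduces $C_K\sqrt{s_0\log p}/\|X\|$. The resulting exponent is $s_0\log p\,(8L_KM_K-C_{L_K})$, where $C_{L_K}=\max(L_K(1+\log(24)/\log p),\tilde C_p)$; the definition
\[
M_K\ge \tfrac{K+\tilde C_p}{8L_K}+\tfrac{1}{8}+\tfrac{\log(24)}{8\log p}
\]
yields $8L_KM_K\ge K+\tilde C_p+L_K(1+\log(24)/\log p)$, which dominates $K+C_{L_K}$ regardless of which term in the max defining $C_{L_K}$ is active. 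Hence this piece is bounded by $6e^{-Ks_0\log p}$.

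Summing the two bounds $2e^{-Ks_0\log p}+6e^{-Ks_0\log p}=8e^{-Ks_0\log p}$ gives the claim. The only care required, and essentially the only non-mechanical step, is the bookkeeping around $C_{L_K}$ and $M_K$: one must verify that the definition of $M_K$ uniformly dominates both cases of the maximum appearing in $C_{L_K}$, which is what forces the somewhat unusual $\tilde C_p$ term inside $M_K$. Everything else is a direct substitution into the two preceding lemmas.
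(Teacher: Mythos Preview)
Your proposal is correct and follows exactly the approach the paper intends: the paper states only that ``combining the last two lemmas yields the contraction rate result with exponential bounds,'' and your decomposition into $\{|S_\theta|>L_Ks_0\}$ (handled by Lemma~\ref{lem: thm4(1)}) and $\{|S_\theta|\le L_Ks_0,\|\theta-\theta_0\|_2\ge C_K\sqrt{s_0\log p}/\|X\|\}$ (handled by Lemma~\ref{lem: thm4(2)}) is precisely this combination, with the constant bookkeeping filled in. Your verification that the definitions of $L_K$ and $M_K$ are tailored to make both exponents equal $Ks_0\log p$ is the entire content of the lemma.
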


If all the compatibility constants are bounded away from zero and infinity, the constants in Lemma \ref{lem: combined:contraction} scale like $L_K \sim K$, $M_K \sim \sqrt{K}$ and $C_K \sim K$ as $K\to \infty$. We now have the required event to apply Lemma \ref{lem:KL_VB_thm}.

\begin{lemma}\label{lem:prob}
Suppose the prior satisfies \eqref{prior_cond} and \eqref{lambda}, and $p^{A_4}\geq 8A_2$. Set $L=\frac{1+C}{A_4-\log(4A_2)/\log p}$, where $C$ is the constant in Lemma \ref{lem: thm4(1)}, and assume the design matrix satisfies \eqref{cond:design} for this $L$. Then for $t=\|X\|\sqrt{\log p}$, $M_2 = 2L$ and $M_1 = C_{3L}$, where $C_{3L}$ is the constant in Lemma \ref{lem: combined:contraction} with $K=3L$, 
%$M_1=\frac1{\sqrt{3L}}{\underline\kappa(3Ls_0)}\big\{ 400\alpha\vee \big( \frac{1+\tilde{C}}{3L}+ 1+\frac{\log 24}{\log p} \big)\big\}$,
and any $\theta_0\in \R^p$,
\begin{align*}
P_{\theta_0}\Big( \mA_n (t,L,M_1,M_2)^c  \Big)\leq 2/p + (8/3)p^{-s_0} + 8p^{-Ls_0},
\end{align*}
where $\mA_n(t,L,M_1,M_2)$ is defined in \eqref{An event}.
\end{lemma}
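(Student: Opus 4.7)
The plan is to decompose the failure probability via a union bound
$$P_{\theta_0}(\mA_n^c) \leq P_{\theta_0}(\mA_{n,1}(t)^c) + P_{\theta_0}(\mA_{n,1}(t)\cap \mA_{n,2}(L)^c) + P_{\theta_0}(\mA_{n,1}(t)\cap \mA_{n,3}(M_1,M_2)^c),$$
and then treat each term, respectively, by (i) concentration of the score function, (ii) Markov's inequality combined with Lemma \ref{lem: thm4(1)}, and (iii) Markov's inequality combined with Lemma \ref{lem: combined:contraction}. The constants $L$, $M_1=C_{3L}$ and $M_2=2L$ should be tuned so that the resulting three rates match the bounds $2/p$, $O(p^{-s_0})$ and $8p^{-Ls_0}$ in the lemma statement.

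For the score bound, note that $\nabla_\theta \ell_{n,\theta_0}(Y)=X^T(Y-E_{\theta_0}Y)$ and each coordinate $\sum_{i=1}^n X_{ij}(Y_i-E_{\theta_0}Y_i)$ is a sum of independent mean-zero random variables with ranges bounded by $|X_{ij}|$. Hoeffding's inequality yields $P_{\theta_0}(|[X^T(Y-E_{\theta_0}Y)]_j|>s) \leq 2\exp(-2s^2/\|X_{\cdot j}\|_2^2)$, and using $\|X_{\cdot j}\|_2 \leq \|X\|$ together with $s = t = \|X\|\sqrt{\log p}$, a union bound over the $p$ coordinates gives $P_{\theta_0}(\mA_{n,1}(t)^c) \leq 2p\cdot e^{-2\log p} = 2/p$.

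For the dimension term, since $\lambda/2 \geq \|X\|\sqrt{\log p}=t$ by \eqref{lambda}, we have $\mA_{n,1}(t) \subset \mA_{n,1}(\lambda/2)$, so Lemma \ref{lem: thm4(1)} applies on $\mA_{n,1}(t)$. On $\mA_{n,2}(L)^c$ one has $\Pi(|S_\theta|>Ls_0|Y) > 1/4$, so Markov's inequality (using that $\Pi(\cdot|Y) \leq 1$, possibly with a slight sharpening to produce the $8/3$ constant rather than $4\cdot 2 = 8$) gives $P_{\theta_0}(\mA_{n,1}(t)\cap\mA_{n,2}(L)^c) \leq 4\, E_{\theta_0}[\Pi(|S_\theta|>Ls_0|Y)\,1_{\mA_{n,1}(t)}]$. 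The choice $L = (1+C)/(A_4-\log(4A_2)/\log p)$ is calibrated precisely so that the exponent in Lemma \ref{lem: thm4(1)} equals $-s_0\log p$, producing a bound of order $p^{-s_0}$. For the contraction term, Markov again gives $P_{\theta_0}(\mA_{n,1}(t)\cap\mA_{n,3}(M_1,M_2)^c) \leq e^{M_2 s_0\log p}\, E_{\theta_0}[\Pi(\|\theta-\theta_0\|_2 > M_1\sqrt{s_0\log p}/\|X\||Y)1_{\mA_{n,1}(t)}]$. Invoking Lemma \ref{lem: combined:contraction} with $K=3L$ yields a posterior expectation bounded by $8 e^{-3L s_0\log p}$, and combined with $M_2 = 2L$ this produces $8 p^{-Ls_0}$.

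The main technical obstacle is bookkeeping rather than insight: one must verify that the design matrix condition \eqref{cond:design} assumed in Lemma \ref{lem:prob} at the stated $L$ is strong enough to support the invocation of Lemma \ref{lem: combined:contraction} at the larger parameter $K=3L$, whose hypothesis calls for condition \eqref{cond:design} at $L_{3L} = (3L+C)/(A_4-\log(4A_2)/\log p) > L$. This must be absorbed either by interpreting \eqref{cond:design} as monotone in $L$ (so that it holds for all sufficiently large $L$ once it holds for the given one), or by reading the hypothesis as implicitly applying at the largest $L$ that appears in the subsequent arguments. The probabilistic content, however, is straightforward: once the three individual bounds are established and summed, the claim $P_{\theta_0}(\mA_n^c) \leq 2/p + (8/3)p^{-s_0} + 8p^{-Ls_0}$ follows directly.
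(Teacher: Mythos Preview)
Your proposal is correct and follows essentially the same approach as the paper's proof: the same union bound decomposition, Hoeffding's inequality for $\mA_{n,1}$, and Markov's inequality combined with Lemmas \ref{lem: thm4(1)} and \ref{lem: combined:contraction} for $\mA_{n,2}$ and $\mA_{n,3}$, respectively. Your two caveats are well-placed: the paper writes the Markov factor for the $\mA_{n,2}$ term as $4/3$ (producing the stated $(8/3)p^{-s_0}$) without further justification, and it likewise does not explicitly reconcile the design condition at $L$ with the invocation of Lemma \ref{lem: combined:contraction} at $K=3L$.
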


\begin{proof}
Using a union bound and the definition \eqref{An event},
\begin{align*}
P_{\theta_0}\Big( \mA_n (t,L,M_1,M_2)^c  \Big)&\leq P_{\theta_0}( \mA_{n,1} (\|X\|\sqrt{\log p}) ^c )+ P_{\theta_0}( \mA_{n,2} (L)^c\cap \mA_{n,1} (\|X\|\sqrt{\log p})  )\\
&\qquad\quad+ P_{\theta_0}(\mA_{n,3} (M_1,M_2)^c\cap \mA_{n,1} (\|X\|\sqrt{\log p})).
\end{align*}
Since $\tfrac{\partial}{\partial \theta_j} \ell_{n,\theta_0}(Y) = \sum_{i=1}^n (Y_i-g'(x_i^T\theta_0)) X_{ij}$, by Hoeffding's inequality,
\begin{align*}
P_{\theta_0} (\mA_{n,1}( t)^c) & = P_{\theta_0} \left( \max_{1\leq j \leq p} \left| \sum_{i=1}^n (Y_i - g'(x_i^T\theta_0))X_{ij} \right| >t \right) \\
& \leq 2 \sum_{j=1}^p e^{-\frac{2t^2}{\|X_{\cdot j}\|_2^2 }} \leq 2pe^{-\frac{2t^2}{\|X\|^2}} = \frac{2}{p}.
\end{align*}
Applying Markov's inequality and Lemma \ref{lem: thm4(1)} with the present choice of $L$, the second term is bounded by
$$(4/3) E_{\theta_0} \left[ \Pi ( \theta \in \R^p: |S_\theta| > Ls_0 |Y) 1_{\mA_{n,1}(\|X\|\sqrt{\log p})} \right] \leq  (8/3)e^{-s_0\log p}.$$
Similarly, using Markov's inequality and Lemma \ref{lem: combined:contraction} with $K=3L$, the third term is bounded by
$$e^{M_2s_0\log p } E_{\theta_0} \left[ \Pi( \theta \in \R^p: \|\theta-\theta_0\|_2 \geq   M_1\sqrt{s_0\log p}/\|X\|  |Y) 1_{\mA_{n,1}(\|X\|\sqrt{\log p})} \right] \leq  8e^{- Ls_0\log p }.$$
\end{proof}

The following is a simplified version of Theorem 3 of Atchad\'e \cite{atchade2017}, which applies to general settings, tailored to the sparse high-dimensional logistic regression model. It gives high level technical conditions under which one can control (1) the posterior model dimension and (2) the posterior $\ell_2$ norm for models of restricted dimension.

\begin{lemma}\label{lem:thm:3}
Suppose the prior satisfies \eqref{prior_cond} and $p^{A_4}\geq 8A_2$.
\begin{itemize}
\item[(1)] For any integer $k\geq 0$,
\begin{align*}
E_{\theta_0} \Pi\Big( \theta\in\mathbb{R}^p:\, |S_\theta|\geq s_0+k  |Y \Big)1_{ \mA_{n,1} (\lambda/2)}\leq 2e^a \Big(4+\frac{\overline{\kappa}(s_0)\|X\|^2}{\lambda^2} \Big)^{s_0} {p \choose s_0}\Big( \frac{4A_2}{p^{A_4}}\Big)^k,
\end{align*}
where
$$a=-\frac{1}{2}\inf_{x>0} \left[ \frac{\downk \|X\|^2 x^2}{1+4s_0^{1/2}\|X\|_\infty x}-4\lambda \sqrt{s_0}x \right].$$
\item[(2)] For $L>0$, set $\bar{\Theta}_L = \{\theta \in \R^p: |S_{\theta - \theta_0}| \leq (L+1)s_0\}$ and define the rate function $r(x) = \tfrac{\downk ((L+1)s_0)\|X\|^2 x^2}{1+\|X\|_\infty \sqrt{(L+1)s_0}x/2}$. Further set $\eta_L = 2\sqrt{(L+2)s_0}\|X\|\sqrt{\log p}$ and $\eps=\phi_r(2\eta_L)$, where $\phi_r$ uses the same rate function $r$ and is defined in \eqref{eq:phi}. Then for any $M_0>2$,
\begin{align*}
& E_{\theta_0} \Pi(\theta\in \theta_0+\bar\Theta_L:\, \|\theta-\theta_0\|_2>M_0 \eps|Y)1_{\mA_{n,1} (\|X\|\sqrt{\log p}) } \\
& \quad \leq \sum_{j\geq 1}D_je^{-r(\frac{j M_0\eps}{2})/8}
+2 {p \choose s_0} \Big(\frac{p^{A_3}}{A_1}\Big)^{s_0}\Big(1+\frac{4\lambda^2}{\overline{\kappa}(s_0) \|X\|^2}\Big)^{s_0}\sum_{j\geq 1}e^{-r(\frac{j M_0\eps}{2})/8}e^{3\lambda c_0j M_0\eps},
\end{align*}
where $c_0=\sup_{u\in\bar\Theta_L}\sup_{v\in\bar\Theta_L,\, \|v\|_2=1}|\langle \text{sign}(u),v\rangle|$ and $D_j= D\big(\frac{jM_0\eps}{2},B(\bar{\Theta}_L,(j+1)M_0\eps)\big)$.
\end{itemize}
\end{lemma}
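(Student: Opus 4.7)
This lemma is the sparse-logistic specialization of Theorem~3 of Atchad\'e~\cite{atchade2017}, and the plan is to adapt that prior-mass/testing argument while keeping careful track of constants. For either part, I would start from Bayes' formula,
\begin{equation*}
\Pi(\theta\in B|Y) = \frac{\sum_S \pi_p(|S|){p\choose |S|}^{-1} \int_{B_S} e^{\ell_{n,\theta}(Y)-\ell_{n,\theta_0}(Y)}\,dG_S(\theta)}{\sum_S \pi_p(|S|){p\choose |S|}^{-1}\int e^{\ell_{n,\theta}(Y)-\ell_{n,\theta_0}(Y)}\,dG_S(\theta)},
\end{equation*}
where $G_S=\text{Lap}(\lambda)^{\otimes|S|}\otimes \delta_0^{\otimes|S^c|}$ and $B_S=B\cap \text{supp}(G_S)$. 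Both parts share a common lower bound for the denominator: keep only the $S=S_{\theta_0}$ term and expand $\ell_{n,\theta}-\ell_{n,\theta_0}$ via \eqref{Ln}. On $\mA_{n,1}(\lambda/2)$ the linear part is bounded below by $-\tfrac{\lambda}{2}\sqrt{s_0}\|\theta-\theta_0\|_2$, and using $\|g'''\|_\infty\leq 1$ the integral remainder $\mL_{n,\theta}(Y)$ admits a Bernstein-type lower bound of the form $-\tfrac{1}{2}\downk\|X\|^2\|\theta-\theta_0\|_2^2/(1+4\sqrt{s_0}\|X\|_\infty\|\theta-\theta_0\|_2)$. Integrating the resulting exponent against the Laplace prior on $\R^{s_0}$ yields the constant $a$ from the infimum defining it, together with the factor $(4+\upk(s_0)\|X\|^2/\lambda^2)^{s_0}$ coming from the Gaussian volume relative to the Laplace normalization.

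\emph{Part (1) numerator.} For each $S$ with $|S|=s_0+j$ and $j\geq k$, iterate \eqref{prior_cond} to get $\pi_p(|S|)\leq A_2^j p^{-A_4 j}\pi_p(s_0)$. Taking $E_{\theta_0}$ inside via Fubini, the marginal likelihood factor satisfies $E_{\theta_0}\int e^{\ell_{n,\theta}-\ell_{n,\theta_0}}dG_S\leq 1$. Summing ${p\choose s_0+j}/{p\choose s_0}\cdot (4A_2 p^{-A_4})^j$ over $j\geq k$ gives a geometric series of ratio $\leq 1/2$ provided $p^{A_4}\geq 8A_2$, which collapses to the advertised leading factor $(4A_2 p^{-A_4})^k$.

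\emph{Part (2) numerator.} Decompose $\{\theta\in\theta_0+\bar\Theta_L:\|\theta-\theta_0\|_2>M_0\eps\}$ into shells $\{jM_0\eps\leq\|\theta-\theta_0\|_2<(j+1)M_0\eps\}$ and cover each shell by $D_j$ Euclidean balls of radius $jM_0\eps/2$. Around each center construct a likelihood-ratio test with type-I error $\leq e^{-r(jM_0\eps/2)/8}$ obtained by Bernstein's inequality, where the rate function $r$ arises from the local curvature $\downk((L+1)s_0)\|X\|^2$ in its numerator and the bound $\|X(\theta-\theta_0)\|_\infty\leq \|X\|_\infty\sqrt{(L+1)s_0}\|\theta-\theta_0\|_2$ in its denominator. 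Summing over tests within a shell yields the factor $D_j$, and summing over shells produces the first sum. The second sum handles support shifts $S\neq S_{\theta_0}$: bound $\pi_p(|S|)/\pi_p(s_0)$ via \eqref{prior_cond} to get $(p^{A_3}/A_1)^{s_0}$, use $\|\theta-\theta'\|_1\leq c_0\|\theta-\theta'\|_2$ on $\bar\Theta_L$ to control the Laplace density ratio (giving the $e^{3\lambda c_0 jM_0\eps}$ shift), and integrate the Laplace slab against the quadratic upper bound for $\mL_{n,\theta}$ to obtain $(1+4\lambda^2/(\upk(s_0)\|X\|^2))^{s_0}$. The main obstacle is constructing these tests with precisely the rate function $r$: the Bernstein variance term and the bounded-increments term must balance against the Laplace penalty shift $3\lambda c_0$ incurred when comparing different supports, while simultaneously tracking every non-asymptotic constant so that the downstream applications in Lemmas~\ref{lem: thm4(1)} and~\ref{lem: thm4(2)} inherit sharp exponents.
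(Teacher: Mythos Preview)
Your plan and the paper's proof take different routes. The paper does not reprove Atchad\'e's Theorem~3; it treats that result as a black box and simply verifies that its abstract hypotheses hold deterministically in the logistic model. Concretely: (i) the self-concordance bound $\mL_{n,\theta}(y)\leq -\sum_i g''(x_i^T\theta_0)\,|x_i^T(\theta-\theta_0)|^2/(2+|x_i^T(\theta-\theta_0)|)$ makes the event $\check{\mE}_{n,1}(\mathcal{N},r)=\{0,1\}^n$ for the rate function $r(t)=\downk\|X\|^2 t^2/(1+4\sqrt{s_0}\|X\|_\infty t)$; (ii) $g''\leq 1/4$ makes $\hat{\mE}_{n,1}(\Theta_0,\bar L)=\{0,1\}^n$ with $\bar L=\upk(s_0)\|X\|^2/4$; (iii) the event $\mA_{n,1}$ contains the relevant $\mE_{n,0}$ events. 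Part~(2) repeats (i)--(iii) with $\mathcal{N}$ replaced by $\bar\Theta_L$. The conclusions are then read off Atchad\'e's statement directly. Your approach---rebuilding the prior-mass/testing machinery from Bayes' formula---would be self-contained but substantially longer, and the architecture you sketch (restrict the denominator to $S_{\theta_0}$, shell decomposition with packing numbers, Bernstein-type tests) is broadly correct.

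One detail in your sketch is off, however: the Bernstein-type inequality involving $\downk$ is an \emph{upper} bound on $\mL_{n,\theta}$ (equivalently a lower bound on $|\mL_{n,\theta}|$, since $\mL_{n,\theta}<0$), not a lower bound as you write. It is this upper bound that controls the numerator and, after balancing against the linear score term, produces the constant $a$. The denominator lower bound instead comes from $\mL_{n,\theta}\geq -\tfrac18\upk(s_0)\|X\|^2\|\theta-\theta_0\|_2^2$ (via $g''\leq 1/4$), which is the origin of the factor $(4+\upk(s_0)\|X\|^2/\lambda^2)^{s_0}$. If you pursue your route you will need to sort out which curvature bound feeds into which piece and then reproduce the test construction with all constants; the paper sidesteps both issues by citation.
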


\begin{proof}
This is a combination of Theorems 3 and 4 in \cite{atchade2017}. In particular, we verify that certain technical assumptions of that result hold automatically in the logistic regression model, giving the simpler result above. Firstly note that Assumptions H1-H3 of \cite{atchade2017} are satisfied (H1) by definition, (H2) since $\theta \mapsto \ell_{n,\theta}$ in \eqref{likelihood} is concave and differentiable and (H3) by \eqref{prior_cond}.

For $y\in \{0,1\}^n$ data in model \eqref{model}, $\mL_{n,\theta}$ defined in \eqref{Ln}, $\Theta_0 = \{\theta: S_\theta \subseteq S_{\theta_0}\}$ and $r$ some rate function, define
\begin{align*}
\mathcal{N}&=\left\{ \theta\in\mathbb{R}^p:\, \theta\neq0,\,\text{and}\, \sum_{i\in S_0^c}|\theta_i|\leq 7\|\theta_{S_0}\|_1  \right\}, \nonumber\\
\check{\mE}_{n,1}(\mathcal{N},r)&=\left\{y\in \{0,1\}^n:\ \forall \theta\in\theta_0+\mathcal{N}: \mL_{n,\theta}(y) \leq-\tfrac{1}{2}r(\|\theta-\theta_0\|_2)  \right\},\nonumber\\
\hat{\mE}_{n,1}(\Theta_0,\bar{L})&= \left\{y\in \{0,1\}^n: \forall\theta\in \theta_0+\Theta_0,\,
  \mL_{n,\theta}(y)\geq-\tfrac{\bar{L}}{2}\|\theta-\theta_0\|_2^2\right\},\nonumber\\
\mE_{n,0}(\Theta,\lambda)&=\left\{ y\in\{0,1\}^n:\, \sup_{u\in\Theta, \|u\|_2=1} |\langle\nabla_\theta \log \ell_{n,\theta_0}(y),u\rangle|\leq \tfrac{\lambda}{2} \right\},\label{eq: notations:atchade2}
\end{align*}
where $\bar{L}>0$ and $\lambda>0$ is the regularization parameter in the prior \eqref{prior}. This matches the notation in \cite{atchade2017} (except note his $\rho$ is our $\lambda$), where it is shown the theorem's two conclusions hold under various choices of parameters in the last displays.

Part (1): \cite{atchade2017} considers the event $\mE_{n,0}(\R^p,\lambda) \cap \hat{\mE}_{n,1}(\Theta_0,\bar{L}) \cap \check{\mE}_n(\mathcal{N},r)$, which we now simplify. Arguing as on p27 of \cite{atchade2017} yields
$$\mL_{n,\theta}(y) \leq -\sum_{i=1}^n g''(x_i^T\theta_0) \frac{|x_i^T(\theta-\theta_0)|^2}{2+|x_i^T(\theta-\theta_0)|},$$
For $\theta - \theta_0\in \cal N$, 
$$|x_i^T(\theta-\theta_0)| \leq \|X\|_\infty\|\theta-\theta_0\|_1 \leq 8\|X\|_\infty s_0^{1/2} \|\theta-\theta_0\|_2,$$
which gives
\begin{align*}
\mL_{n,\theta}(y) & \leq -\frac{1}{2+\max_i |x_i^T(\theta-\theta_0)|} (\theta-\theta_0)^TX^T W X(\theta-\theta_0)\\
& \leq -\frac{\underline{\kappa} \|X\|^2\|\theta-\theta_0\|_2^2}{2+8 s_0^{1/2}\|X\|_\infty \|\theta-\theta_0\|_2} =: -\frac{1}{2} r(\|\theta-\theta_0\|_2)
\end{align*}
for the rate function $r(t) = \downk \|X\|^2 t^2 /(1+4s_0^{1/2}\|X\|_\infty t)$. Thus the event $\check{\mE}_{n,1}(\mathcal{N},r)$ holds deterministically true for any $y\in \{0,1\}^n$ and this choice of $r$. Furthermore, since $g''(t) \leq 1/4$, by considering the remainder in the Taylor expansion of $\ell_{n,\theta}(y)-\ell_{n,\theta_0}(y)$, for $\theta-\theta_0 \in \Theta_0 = \{\theta': S_{\theta'} \subseteq S_{\theta_0}\}$,
$$\mL_{n,\theta}(y) \geq -\tfrac{1}{8}(\theta-\theta_0)^T X^T X(\theta-\theta_0) \geq -\tfrac{1}{8}\overline{\kappa}(s_0)\|X\|^2 \|\theta-\theta_0\|_2^2.$$
Thus $\hat{\mE}_{n,1}(\Theta_0,\bar{L})=\{0,1\}^n$ for $\bar{L}=\overline{\kappa}(s_0)\|X\|^2/4$. Inspection of the proof of Theorem 3 of \cite{atchade2017} shows that he actually only requires the larger event $\mA_{n,1}(\lambda/2) = \{\|\nabla_\theta \ell_{n,\theta_0}(Y)\|_\infty \leq \lambda /2\} \supsetneq \mE_{n,0}(\R^p,\lambda)$ to hold rather than $\mE_{n,0}(\R^p,\lambda)$ (see p. 23  of \cite{atchade2017} - they are incorrectly stated as being equal). In our setting, we may thus replace the event of Theorem 3(1) of \cite{atchade2017} by
$$\mA_{n,1} \big(\lambda/2\big)\cap\check{\mE}_{n,1}(\mathcal{N},r) \cap \hat{\mE}_{n,1}(\Theta_0,\bar{L}) = \mA_{n,1} \big(\lambda/2\big)$$
for $r,\bar{L}$ as above, from which the result follows.

Part (2): \cite{atchade2017} considers the event $\mE_{n,0}(\bar{\Theta}_L,\eta_L) \cap \hat{\mE}_{n,1}(\Theta_0,\bar{L}) \cap \check{\mE}_n(\bar{\Theta}_L,r)$, which we again simplify. From Part (1), we again take $\hat{\mE}_{n,1}(\Theta_0,\bar{L})=\{0,1\}^n$ for $\bar{L}=\overline{\kappa}(s_0)\|X\|^2/4$. Arguing as in Part (1), we get $\check{\mE}_{n,1}(\bar\Theta_L,r) = \{0,1\}^n$ for rate function $r(x) = \tfrac{\downk ((L+1)s_0)\|X\|^2 x^2}{1+\|X\|_\infty \sqrt{(L+1)s_0}x/2}$ using that for $\theta \in \bar\Theta_L$, $|x_i^T(\theta-\theta_0)| \leq \|X\|_\infty \|\theta-\theta_0\|_1 \leq \|X\|_\infty \sqrt{(L+1)s_0}\|\theta-\theta_0\|_2$. For any $\theta \in \theta_0 +  \bar{\Theta}_L$, by Cauchy-Schwarz,
$$|\langle \nabla_\theta \ell_{n,\theta_0}(y),\theta-\theta_0 \rangle| \leq \|\nabla_\theta \ell_{n,\theta_0}(y)\|_\infty \|\theta-\theta_0\|_1 \leq \|\nabla_\theta \ell_{n,\theta_0}(y)\|_\infty \sqrt{(L+2)s_0}\|\theta-\theta_0\|_2,$$
so that $\mE_{n,0}(\bar{\Theta}_L,\eta_L) \supset  \mA_{n,1} \big(\|X\|\sqrt{\log p}\big)$. We hence conclude that
$$\mE_{n,0}(\bar{\Theta}_L,\eta_L) \cap\check{\mE}_{n,1}(\bar\Theta_L,r) \cap \hat{\mE}_{n,1}(\Theta_0,\bar{L})=\mE_{n,0}(\bar{\Theta}_L,\eta_L) \supset  \mA_{n,1} \big(\|X\|\sqrt{\log p}\big)$$
for the above choices of $\bar{L}$, $r$ and $\eta_L$. Applying Theorem 3(2) of \cite{atchade2017} then gives the result.
\end{proof}

The following is the non-asymptotic analogue of Theorem \ref{thm:contraction} in Section \ref{sec:results}.

\begin{theorem}\label{thm:contraction:nonasymp}
Suppose the prior satisfies \eqref{prior_cond} and \eqref{lambda}, and $p^{A_4}\geq 8A_2$. If for $K>0$, the design matrix satisfies condition \eqref{cond:design} with $L=L_K=\frac{K+C}{A_4-\log(4A_2)/\log p}$ and $C$ the constant in Lemma \ref{lem: thm4(1)}, then for any $\theta_0\in \R^p$,
\begin{align*}
E_{\theta_0} Q^*\left( \theta\in \R^p:\|\theta-\theta_0\|_2 \geq  C_K \frac{\sqrt{s_0\log p}}{\|X\| } \right) \leq   \frac{\zeta_n+8e^{ -(K/2)s_0\log p }}{(K/2)s_0\log p } + \frac{2}{p} + \frac{8}{3}p^{-s_0} + 8p^{-Ls_0},
\end{align*}
where  $\zeta_n$ is given in Lemma \ref{lem:KL}, $C_K= \frac{8M_K\sqrt{L_K+2}}{\underline\kappa((L_K+1)s_0)}$, and $M_K= \max(25\alpha, \frac{1+A_3}{16}, \frac{K+\tilde{C}_p}{8L_K}+\frac{1}{8}+\frac{\log (24)}{8\log p})$ with $\tilde{C}_p$  given in Lemma \ref{lem: thm4(2)}.

Furthermore, the mean-squared prediction error $\|p_\theta-p_0\|_n^2=\tfrac{1}{n} \sum_{i=1}^n(\Psi(x_i^T\theta)- \Psi(x_i^T\theta_0))^2$ of the VB posterior $Q^*$ satisfies 
\begin{align*}
E_{\theta_0}Q^*\Big(\theta\in\mathbb{R}^p:\, &\|p_{\theta}-p_0\|_n^2 \geq   \frac{C_K\sqrt{\bar{\kappa}((L_K+1)s_0)}}{4} \sqrt{\frac{s_0\log p}{n} }\Big)\\
&\leq \frac{\zeta_n+8e^{ -(K/2)s_0\log p }}{(K/4)s_0\log p } + 2/p + (8/3)p^{-s_0} + 8p^{-Ls_0}.
\end{align*}
\end{theorem}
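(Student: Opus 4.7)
The proof is the non-asymptotic counterpart of the proof of Theorem~\ref{thm:contraction}: combine the general VB-to-posterior transfer principle of Lemma~\ref{lem:KL_VB_thm} with three already-established ingredients, namely the $\ell_2$ posterior contraction of Lemma~\ref{lem: combined:contraction}, the dimension control of Lemma~\ref{lem: thm4(1)}, the KL divergence bound of Lemma~\ref{lem:KL}, and the high-probability event of Lemma~\ref{lem:prob}. All constants are carried through explicitly instead of being absorbed into $o(1)$ terms.

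For the $\ell_2$ assertion, set $\Theta_n = \{\theta \in \R^p:\,\|\theta-\theta_0\|_2\geq C_K\sqrt{s_0\log p}/\|X\|\}$ and take $A = \mA_n(t,L_K,M_1,M_2)$ with the parameters of Lemma~\ref{lem:prob} evaluated at the present $L=L_K$. Since $A\subset \mA_{n,1}(\|X\|\sqrt{\log p})$, Lemma~\ref{lem: combined:contraction} (applied with the present $K$) verifies hypothesis \eqref{cond:KL_VB_thm} with $C=8$ and $\delta_n = Ks_0\log p$, and Lemma~\ref{lem:KL} yields $E_{\theta_0}\KL(Q^*\|\Pi(\cdot\mid Y))1_A\leq \zeta_n$. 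Plugging both into Lemma~\ref{lem:KL_VB_thm} gives
\[
E_{\theta_0}Q^*(\Theta_n)1_A \;\leq\; \frac{2}{Ks_0\log p}\bigl[\zeta_n + 8e^{-(K/2)s_0\log p}\bigr] \;=\; \frac{\zeta_n + 8e^{-(K/2)s_0\log p}}{(K/2)s_0\log p}.
\]
Adding $P_{\theta_0}(A^c)$, bounded by the three trailing terms via (the proof of) Lemma~\ref{lem:prob}, completes the first assertion.

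For the prediction assertion, note that $\|\Psi'\|_\infty\leq 1/4$ combined with the definition of $\upk$ gives $n\|p_\theta-p_0\|_n^2\leq \tfrac{1}{16}\upk((L_K+1)s_0)\|X\|^2\|\theta-\theta_0\|_2^2$ whenever $|S_\theta|\leq L_Ks_0$. Consequently
\[
\Bigl\{\|p_\theta-p_0\|_n \geq \tfrac{C_K\sqrt{\upk((L_K+1)s_0)}}{4}\sqrt{s_0\log p/n}\Bigr\} \;\subset\; \Theta_n\,\cup\,\{|S_\theta|>L_Ks_0\},
\]
so I would apply Lemma~\ref{lem:KL_VB_thm} separately to each piece: to $\Theta_n$ exactly as in the preceding paragraph, and to $\{|S_\theta|>L_Ks_0\}$ using Lemma~\ref{lem: thm4(1)}, which by the choice $L_K=(K+C)/(A_4-\log(4A_2)/\log p)$ also returns posterior mass bounded by $2e^{-Ks_0\log p}$ on $\mA_{n,1}$, verifying \eqref{cond:KL_VB_thm} with the same $\delta_n$. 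Summing the two contributions doubles the numerator and hence halves the denominator, producing the stated $(K/4)s_0\log p$; a final addition of $P_{\theta_0}(A^c)$ finishes the proof.

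The technical heart of the argument is already contained in Lemmas~\ref{lem:KL}, \ref{lem: combined:contraction}, \ref{lem: thm4(1)} and \ref{lem:prob}, so no genuinely new estimate is needed and the principal obstacle has been resolved elsewhere. The only bookkeeping subtlety is ensuring the parameter $K$ is used consistently across the two posterior-mass bounds so that both share the common exponent $-Ks_0\log p$ and can feed a single $\delta_n$ into Lemma~\ref{lem:KL_VB_thm}, and keeping the compatibility-number-dependent constants $C_K$, $M_K$, $L_K$ aligned with those appearing in the invoked lemmas.
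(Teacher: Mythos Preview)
Your proposal is correct and follows essentially the same approach as the paper's own proof: apply Lemma~\ref{lem:KL_VB_thm} on the event $\mA_n$ from Lemma~\ref{lem:prob}, using Lemma~\ref{lem: combined:contraction} for the $\ell_2$ piece and Lemma~\ref{lem: thm4(1)} for the dimension piece, then bound $\KL(Q^*\|\Pi(\cdot\mid Y))1_{\mA_n}$ via Lemma~\ref{lem:KL} and add $P_{\theta_0}(\mA_n^c)$. Your treatment of the prediction statement via the union bound $\Theta_n\cup\{|S_\theta|>L_Ks_0\}$ and two separate applications of Lemma~\ref{lem:KL_VB_thm} is exactly what the paper does (the paper miscites Lemma~\ref{lem: combined:contraction} where it means Lemma~\ref{lem: thm4(1)} for the dimension set, which you identify correctly), and your accounting for the halved denominator $(K/4)s_0\log p$ is right.
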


\begin{proof}
We first apply Lemma \ref{lem:KL_VB_thm} with 
\begin{align*}
\Theta_n=\Big\{\theta\in\mathbb{R}^p:\, \|\theta-\theta_0\|_2\geq C_K \frac{\sqrt{s_0\log p}}{\|X\| } \Big\},
\end{align*}
$A = \mA_n = \mA_n(t,L,M_1,M_2)$ the event in Lemma \ref{lem:prob}, $\delta_n=K s_0\log p$, and $C=8$. Since $\mA_n \subset \mA_{n,1}(\|X\|\sqrt{\log p})$, Lemma \ref{lem: combined:contraction} implies that the condition \eqref{cond:KL_VB_thm} holds. Using Lemma \ref{lem:KL_VB_thm} followed by Lemma \ref{lem:KL},
\begin{align*}
E_{\theta_0} Q^*\Big(\theta\in\mathbb{R}^p:\,\|\theta-\theta_0\|_2\geq C_K \frac{\sqrt{s_0\log p}}{\|X\| } \Big)1_{\mA_n} &\leq  \frac{E_{\theta_0} \KL (Q^*||\Pi(\cdot|Y))1_{\mA_n} + 8e^{ -\delta_n/2 } }{\delta_n/2}\\
&\leq  \frac{\zeta_n + 8\exp( -(K/2)s_0\log p ) }{(K/2) s_0\log p}.
\end{align*}
By Lemma \ref{lem:prob},
\begin{align*}
E_{\theta_0} Q^*\Big(\theta\in\mathbb{R}^p:\,\|\theta-\theta_0\|_2\geq C_K \frac{\sqrt{s_0\log p}}{\|X\|  }\Big)1_{\mA_n^c}\leq P_{\theta_0}(\mA_n^c)\leq 2/p + (8/3)p^{-s_0} + 8p^{-Ls_0}.
\end{align*}
The first statement follows by combining the above two displays.

Turning to the second statement, Lemma \ref{lem: combined:contraction} implies the condition \eqref{cond:KL_VB_thm} holds with
\begin{align*}
\Theta_n=\{\theta\in\mathbb{R}^p:\, |S_{\theta}|\geq L_Ks_0\},
\end{align*}
$\delta_n=Ks_0\log p$ and $C=2$. Therefore, similarly as above,
\begin{align*}
E_{\theta_0} Q^*\Big( \theta\in\mathbb{R}^p:\,|S_{\theta}|\geq L_Ks_0\Big)1_{\mA_n} \leq  \frac{\zeta_n + 2\exp( -(K/2)s_0\log p ) }{(K/2) s_0\log p}.
\end{align*}
For any $\theta$ in the set in the last display, since $\|\Psi'\|_{\infty}\leq 1/4$,
\begin{align*}
n\|p_\theta-p_{0}\|_n^2\leq\frac{1}{16}\sum_{i=1}^n|x_i^T(\theta-\theta_0)|^2
\leq \frac{1}{16}\|X(\theta-\theta_0)\|_2^2 \leq \frac{1}{16} \bar{\kappa}((L_K+1)s_0)\|X\|^2\|\theta-\theta_0\|_2^2,
\end{align*}
where the last inequality follows from the definition of $\bar\kappa(\cdot)$. The second statement then follows by combining the first statement of the theorem and the last two displays. 
\end{proof}

	\section{Deriving the variational algorithm}\label{sec:algorithm_derivation}

	\subsection{Coordinate ascent equations}

Since the VB minimization problem \eqref{VB} is intractable for Bayesian logistic regression, we instead minimize a surrogate objective obtained by lower bounding the likelihood \cite{bishop2006,jaakkola2000}. This is a standard approach, but we include full details for completeness. For the log-likelihood $\ell_{n,\theta}$ defined in \eqref{likelihood}, it holds that
\begin{equation}\label{lower_bound}
	\begin{split}
	\ell_{n,\theta}(x,y) \geq \sum_{i = 1}^{n} \log \Psi(\eta_i) -\dfrac{\eta_i}{2} + (y_i -\tfrac{1}{2})x_i^T\theta - \dfrac{1}{4\eta_i}\tanh(\eta_i/2)\big((x_i^T\theta)^2 - \eta_i^2\big)=:f(\theta, \eta)
	\end{split}
	\end{equation}
for any $\eta =(\eta_1,\dots,\eta_n)\in \R^n$, see Section \ref{sec:lower bound} for a proof. Hence for any distribution $Q$ for $\theta$,
	 \begin{equation}\label{lb_objective}
	\begin{split}
	\KL (Q||\Pi(\cdot|Y)) &= \int \log\left(\dfrac{dQ(\theta)}{e^{\ell_{n,\theta}(x,y)} d\Pi(\theta)}\right)\ dQ(\theta) + C\\
	&\leq \int \log \dfrac{dQ}{d\Pi}(\theta) - f(\theta, \eta)\ dQ(\theta) + C\\
	&= \KL (Q||\Pi) - E^Q [f(\theta,\eta)] + C,
	\end{split}
	\end{equation}
	where $C$ is independent of $Q$. We minimize the right-hand side over the variational family $Q_{\mu,\sigma,\gamma} \in \Q$, i.e. over the parameters $\mu,\sigma,\gamma$. Since we seek the tightest possible upper bound in \eqref{lb_objective}, we also minimize this over the free parameter $\eta$. In particular, the coordinate ascent variational inference (CAVI) algorithm alternates between updating $\eta$ for fixed $\mu,\sigma,\gamma$ and then cycling through $\mu_j$, $\sigma_j$, $\gamma_j$ and updating these given all other parameters are fixed.
	
Write $E_{\mu,\sigma,\gamma}$ for the expectation when $\theta \sim Q_{\mu,\sigma,\gamma}$. For fixed $\mu,\sigma,\gamma$, update $\eta = (\eta_1,\dots,\eta_n)$ by
\begin{equation}\label{eta}
\eta_i^2 =  E_{\mu,\gamma,\sigma}(x_i^T \theta)^2 =  \sum_{k = 1}^{p} \gamma_k x_{ik}^2(\mu_k^2 + \sigma_k^2) + \sum_{k = 1}^{p}\sum_{l\neq k} (\gamma_kx_{ik}\mu_k)(\gamma_l x_{il}\mu_l),
\end{equation}
see Section \ref{sec:lower bound} for a proof. We now derive the coordinate update equations for $\mu_j, \sigma_j, \gamma_j$ keeping all other parameters, including $\eta$, fixed. For completeness, we allow the Laplace slab to have non-zero mean $\nu$ if desired.
	
\begin{proposition}[Coordinate updates with Laplace prior]
Consider the prior \eqref{prior_beta} with Laplace slab density $g(x) = \tfrac{\lambda}{2}e^{-\lambda|x - \nu|}$, where $\nu\in\R$, $\lambda>0$. Given all other parameters are fixed, the values $\mu_j$ and $\sigma_j$ that minimize \eqref{lb_objective} with $Q = Q_{\mu,\sigma,\gamma}\in \Q$ are the minimizers of the objective functions:
\begin{align*}
\mu_j \mapsto \quad & \lambda \sigma_j \sqrt{\frac{2}{\pi}} e^{-\frac{(\mu_j - \nu)^2}{2\sigma_j^2}} + \lambda(\mu_j-\nu) \mathrm{erf}\bigg( \frac{\mu_j-\nu}{\sqrt{2}\sigma_j}\bigg) + \mu_j^2 \sum_{i=1}^n \frac{1}{4\eta_i} \tanh(\eta_i/2)  x_{ij}^2 \\
& + \mu_j \bigg( \sum_{i=1}^n \frac{1}{2\eta_i} \tanh(\eta_i/2) x_{ij} \sum_{k\neq j} \gamma_k x_{ik} \mu_k  - \sum_{i=1}^n (y_i-1/2)x_{ij} \bigg), \\
\sigma_j \mapsto \quad & \lambda \sigma_j \sqrt{\frac{2}{\pi}} e^{-\frac{(\mu_j - \nu)^2}{2\sigma_j^2}} + \lambda(\mu_j-\nu) \mathrm{erf}\left( \frac{\mu_j-\nu}{\sqrt{2}\sigma_j}\right) -\log \sigma_j +\sigma_j^2  \sum_{i=1}^n \frac{1}{4\eta_i} \tanh(\eta_i/2)  x_{ij}^2,
\end{align*}			
respectively, where $\mathrm{erf}(x) = 2/\sqrt{\pi} \int_{0}^x e^{-t^2} dt$ is the error function. The value $\gamma_j$ that minimizes \eqref{lb_objective} given all other parameters are fixed, is the solution to
\begin{align*}
 - \log \frac{\gamma_j}{1-\gamma_j} & = \log \frac{b_0}{a_0} - \log (\lambda \sigma_j) + \lambda \sigma_j \sqrt{\frac{2}{\pi}} e^{-\frac{(\mu_j - \nu)^2}{2\sigma_j^2}} + \lambda(\mu_j-\nu) \mathrm{erf}\left( \frac{\mu_j-\nu}{\sqrt{2}\sigma_j}\right) - \frac{1}{2} \\
& \quad  - \mu_j \sum_{i=1}^n (y_i-1/2)x_{ij} 
 + \sum_{i=1}^n \frac{1}{4\eta_i} \tanh(\eta_i/2) \bigg( x_{ij}^2 (\mu_j^2 + \sigma_j^2) + 2x_{ij} \mu_j \sum_{k\neq j} \gamma_k x_{ik} \mu_k \bigg).
\end{align*}
\end{proposition}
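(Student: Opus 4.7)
The plan is to expand the surrogate objective $\KL(Q_{\mu,\sigma,\gamma}\|\Pi) - E^Q[f(\theta,\eta)]$ from \eqref{lb_objective} as a function of $(\mu_j,\sigma_j,\gamma_j)$ with all other parameters held fixed, and read off the minimizers coordinatewise. Since $Q_{\mu,\sigma,\gamma}$ factorizes over coordinates and the hierarchical prior \eqref{prior_beta} factorizes once the hyperprior on $w$ is dealt with (either by replacing $w$ with its prior mean $a_0/(a_0+b_0)$, or by adjoining a variational factor on $w$ whose contribution is independent of $(\mu_j,\sigma_j,\gamma_j)$), $\KL(Q_{\mu,\sigma,\gamma}\|\Pi) = \sum_{j=1}^p \KL(Q_j\|\Pi_j)$ up to constants, where $Q_j = \gamma_j N(\mu_j,\sigma_j^2) + (1-\gamma_j)\delta_0$ and $\Pi_j = w\,\mathrm{Lap}(\lambda,\nu) + (1-w)\delta_0$. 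Splitting at the atom at $0$ and the absolutely continuous part gives
\[\KL(Q_j\|\Pi_j) = (1-\gamma_j)\log\tfrac{1-\gamma_j}{1-w} + \gamma_j\log\tfrac{\gamma_j}{w} + \gamma_j\,\KL\bigl(N(\mu_j,\sigma_j^2)\,\big\|\,\mathrm{Lap}(\lambda,\nu)\bigr).\]
The Gaussian--Laplace cross entropy reduces to the absolute moment $E_{N(\mu_j,\sigma_j^2)}|\theta-\nu|$, which by a standard half-Gaussian calculation equals $\sigma_j\sqrt{2/\pi}\,e^{-(\mu_j-\nu)^2/(2\sigma_j^2)} + (\mu_j-\nu)\,\mathrm{erf}\!\bigl(\tfrac{\mu_j-\nu}{\sqrt{2}\sigma_j}\bigr)$; together with the Gaussian entropy $\tfrac{1}{2}\log(2\pi e\sigma_j^2)$ and the constant $-\log(\lambda/2)$ from the Laplace log-density, this gives $\KL(N(\mu_j,\sigma_j^2)\|\mathrm{Lap}(\lambda,\nu))$ in closed form and produces the two $\lambda$-weighted expressions appearing throughout the proposition.

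\textbf{Expanding the expectation.} Because $Q$ factorizes and $E^Q\theta_j = \gamma_j\mu_j$, $E^Q\theta_j^2 = \gamma_j(\mu_j^2+\sigma_j^2)$, one has $E^Q[x_i^T\theta] = \sum_k \gamma_k x_{ik}\mu_k$ and
\[E^Q(x_i^T\theta)^2 = \sum_k \gamma_k x_{ik}^2(\mu_k^2+\sigma_k^2) + \sum_{k\neq l}\gamma_k\gamma_l x_{ik}x_{il}\mu_k\mu_l.\]
Substituting into $f$ from \eqref{lower_bound} turns $-E^Q[f(\theta,\eta)]$ into an explicit quadratic-plus-linear function of each $\mu_k$, with coefficients determined by $\tanh(\eta_i/2)/(4\eta_i)$, the shifted responses $y_i-1/2$, and the other fixed parameters.

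\textbf{Collecting and solving.} Fix all parameters except $(\mu_j,\sigma_j)$. The only $(\mu_j,\sigma_j)$-dependent contributions are the $\gamma_j$-weighted Gaussian--Laplace KL and the portions of $-E^Q[f]$ involving $\mu_j$ or $\sigma_j^2$; since $\gamma_j$ is fixed one can divide through by $\gamma_j$, and the remaining expression is exactly the stated objective for $\mu_j$ (respectively $\sigma_j$), with the $-\log\sigma_j$ coming from the Gaussian entropy and the cross-term $\sum_{k\neq j}\gamma_k x_{ik}\mu_k$ coming from the off-diagonal terms in $E^Q(x_i^T\theta)^2$. For $\gamma_j$, all $\gamma_j$-dependent contributions collect into
\[F(\gamma_j) = \gamma_j\log\tfrac{\gamma_j}{w} + (1-\gamma_j)\log\tfrac{1-\gamma_j}{1-w} + \gamma_j\bigl(K_j + C_j\bigr),\]
where $K_j := \KL(N(\mu_j,\sigma_j^2)\|\mathrm{Lap}(\lambda,\nu))$ and $C_j$ collects the terms linear in $\gamma_j$ from $-E^Q[f]$. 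Differentiating and setting $F'(\gamma_j)=0$ gives $\log\tfrac{\gamma_j}{1-\gamma_j} = \log\tfrac{w}{1-w} - K_j - C_j$, i.e.\ $-\log\tfrac{\gamma_j}{1-\gamma_j} = \log\tfrac{1-w}{w} + K_j + C_j$; substituting the closed form for $K_j$ and using $w = a_0/(a_0+b_0)$ so that $\log\tfrac{1-w}{w} = \log(b_0/a_0)$ reproduces the stated logit equation.

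\textbf{Main obstacle.} None of the analytical steps is deep: the half-Gaussian absolute moment, the Gaussian entropy, and the first-order condition in $\gamma_j$ are standard. The real care is bookkeeping, specifically (i) separating the Gaussian--Laplace KL into pieces that multiply $\gamma_j$ versus absolute constants, (ii) matching the $\mu_j$- and $\sigma_j^2$-coefficients in $E^Q[f]$ to the $\tanh(\eta_i/2)x_{ij}^2/(4\eta_i)$ weights after expanding $-\tfrac{1}{4\eta_i}\tanh(\eta_i/2)\bigl((x_i^T\theta)^2-\eta_i^2\bigr)$ and isolating which cross-terms survive when only $\mu_j$ or $\gamma_j$ vary, and (iii) treating the $w$-hyperprior consistently so that the constant in the $\gamma_j$ update ends up as $\log(b_0/a_0)$ with the correct sign.
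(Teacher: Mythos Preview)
Your proposal is correct and follows essentially the same route as the paper: factorize the KL over coordinates, split the $j$th mixture at the atom to isolate the Gaussian--Laplace KL (computed via the folded-Gaussian absolute moment), expand $E^Q[f(\theta,\eta)]$ using the mean-field moments, and then differentiate in $\gamma_j$ to obtain the logit equation with $\bar w = a_0/(a_0+b_0)$. The only cosmetic difference is that the paper conditions on $z_j=1$ when deriving the $\mu_j,\sigma_j$ updates rather than dividing through by $\gamma_j$, which amounts to the same thing.
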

	
\begin{proof}
Throughout this proof we fix the parameter $\eta\in\R^n$ and let $C$ denote any term constant with respect to the parameters currently being optimized, possibly different on each line. We first compute the update equations for $\mu_j$ and $\sigma_j$ based on \eqref{lb_objective}. We compute $\KL(Q_{\mu,\sigma,\gamma|z_j=1}||\Pi)$, which considers the distribution $Q_{\mu,\sigma,\gamma}$ conditional on $z_j=1$, as a function of $(\mu_j,\sigma_j)$, holding all other parameters fixed. Using that $Q_{\mu,\sigma,\gamma}$ is a factorizable distribution and that conditional on $z_j=1$ the variational distribution of $\theta_j$ is singular to the Dirac measure $\delta_0$, we can simplify $\tfrac{dQ_{\mu,\sigma,\gamma|z_j=1}}{d\Pi} = C \tfrac{dQ_{\mu_j,\sigma_j|z_j=1}}{d\Pi_j^c}$, where $\Pi_j^c$ is the continuous part of the prior distribution for $\theta_j$ and $C$ does not depend on $\mu_j$ or $\sigma_j$. Recall that $\Pi_j^c = \int_0^1 w \text{Lap}(\nu,\lambda) dw = a_0/(a_0+b_0) \text{Lap}(\nu,\lambda)$. Thus for $\phi_{\mu,\sigma}$ the density of a $N(\mu,\sigma^2)$ distribution and $\overline{w}=a_0/(a_0+b_0)$,
$$\log \frac{dQ_{\mu,\sigma,\gamma|z_j=1}}{d\Pi}(\theta) = \log \frac{dQ_{\mu_j,\sigma_j|z_j=1}}{d\Pi_j^c}(\theta_j) + C = \log \frac{\phi_{\mu_j,\sigma_j}(\theta_j)}{\overline{w}g_j(\theta_j)} + C.$$
Taking expectations with respect to $Q_{\mu,\sigma,\gamma|z_j=1}$, 
 \begin{equation*}
		\begin{split}
		E_{\mu,\sigma,\gamma \mid z_j = 1} \left[ \log \frac{\phi_{\mu_j,\sigma_j}(\theta_j)}{\overline{w}g_j(\theta_j)} \right]
		&= E_{\mu,\sigma\mid z_j = 1}\left[ -\log (\lambda \sigma_j) - \dfrac{(\theta_j - \mu_j)^2}{2\sigma_j^2} + \lambda |\theta-\nu|\right] + C\\
		&= -\log (\lambda\sigma_j) + \lambda E_{\mu,\sigma\mid z_j = 1}|\theta_j-\nu| + C,
		\end{split}
		\end{equation*} 
		where we have used $E_{\mu,\sigma,\gamma \mid z_j = 1} [(\theta_j-\mu_j)^2/\sigma_j^2] = 1$. Under the variational distribution, $\lambda |\theta_j - \nu|$ follows a folded Gaussian distribution, hence
 \begin{equation*}
		\lambda E_{\mu_j,\sigma_j\mid z_j = 1}|\theta_j-\nu| = \lambda \sigma_j \sqrt{\frac{2}{\pi}} e^{-\frac{(\mu_j - \nu)^2}{2\sigma_j^2}} + \lambda(\mu_j-\nu) \mathrm{erf}\left( \frac{\mu_j-\nu}{\sqrt{2}\sigma_j}\right).
		\end{equation*} 
Combining the last three displays gives $\KL(Q_{\mu,\sigma,\gamma|z_j=1}||\Pi)$ as a function of $\mu_j,\sigma_j$. Using this expression and evaluating $E_{\mu,\sigma,\gamma|z_j=1}\big[f(\theta,\eta)\big]$ using Lemma \ref{lem:VB_expectations} below, the upper bound in \eqref{lb_objective} equals, as a function of $\mu_j,\sigma_j$,
\begin{align*}
& -\log (\lambda\sigma_j) + \lambda \sigma_j \sqrt{\frac{2}{\pi}} e^{-\frac{(\mu_j - \nu)^2}{2\sigma_j^2}} + \lambda(\mu_j-\nu) \mathrm{erf}\left( \frac{\mu_j-\nu}{\sqrt{2}\sigma_j}\right) \\
& - \sum_{i=1}^n (y_i-1/2)x_{ij}\mu_j + \sum_{i=1}^n \frac{1}{4\eta_i} \tanh(\eta_i/2) \left( x_{ij}^2 (\mu_j^2 + \sigma_j^2) + 2x_{ij} \mu_j \sum_{k\neq j} \gamma_k x_{ik} \mu_k \right) +C,
\end{align*}		
where $C$ is independent of $\mu_j,\sigma_j$. Minimizing the display with respect to either $\mu_j$ or $\sigma_j$ gives the desired result.

For updating the inclusion probabilities $\gamma_j$, we proceed as above without conditioning on $z_j = 1$. Keeping track of only the $\gamma_j$ terms,
\begin{align*}
E_{\mu,\sigma,\gamma} \left[ \log \frac{dQ_{\mu,\sigma,\gamma}}{d\Pi}(\theta)\right] & = E_{\mu,\sigma,\gamma} \left[ \log \frac{d(\gamma_j N(\mu_j,\sigma_j^2) + (1-\gamma_j)\delta_0)}{d(\overline{w}\text{Lap}(\nu,\lambda) + (1-\overline{w})\delta_0)}(\theta_j) \right] + C \\
& = E_{\mu,\sigma,\gamma} \left[ 1_{\{z_j=1\}} \log \frac{\gamma_j dN(\mu_j,\sigma_j^2)}{\overline{w}d\text{Lap}(\nu,\lambda)}(\theta_j) + 1_{\{z_j=0\}}\log \frac{1-\gamma_j}{1-\overline{w}} \right] + C \\
& = \gamma_j E_{\mu,\sigma,\gamma|z_j=1} \left[ \log \frac{\phi_{\mu_j,\sigma_j}(\theta_j)}{g_j(\theta_j)}  \right] + \gamma_j \log \frac{\gamma_j}{\overline{w}}+ (1-\gamma_j) \log \frac{1-\gamma_j}{1-\overline{w}}  + C.
\end{align*}
The first expectation was evaluated above. Using this and evaluating $E_{\mu,\sigma,\gamma}\big[f(\theta,\eta)\big]$ using Lemma \ref{lem:VB_expectations} below, the upper bound in \eqref{lb_objective} equals, as a function of $\gamma_j$,
\begin{align*}
& \gamma_j \Bigg\{ -\log (\lambda\sigma_j)+ \lambda \sigma_j \sqrt{\frac{2}{\pi}} e^{-\frac{(\mu_j - \nu)^2}{2\sigma_j^2}} + \lambda(\mu_j-\nu) \mathrm{erf}\left( \frac{\mu_j-\nu}{\sqrt{2}\sigma_j}\right) - \mu_j \sum_{i=1}^n (y_i-1/2)x_{ij} \\
& \qquad - \frac{1}{2} + \sum_{i=1}^n \frac{1}{4\eta_i} \tanh(\eta_i/2) \left( x_{ij}^2 (\mu_j^2 + \sigma_j^2) + 2x_{ij} \mu_j \sum_{k\neq j} \gamma_k x_{ik} \mu_k \right) \Bigg\} \\
& \qquad + \gamma_j \log \frac{\gamma_j}{\overline{w}}+ (1-\gamma_j) \log \frac{1-\gamma_j}{1-\overline{w}}  + C,
\end{align*}
where $C$ is independent of $\gamma_j$. As a function of $\gamma_j$, this takes the form
$$h(\gamma_j) = \gamma_j \log\frac{\gamma_j}{a} + (1-\gamma_j)\log\dfrac{1-\gamma_j}{b} + c\gamma_j,$$
with $a,b\in (0,1)$ and $c\in \R$. By differentiating, $h$ is convex and has a global minimizer $\bar{\gamma}_j \in [0,1]$ satisfying
$$-\log \frac{\bar{\gamma}_j}{1-\bar{\gamma}_j} = c + \log \frac{b}{a}.$$
Substituting in the above values for $a,b,c$ gives the result.
\end{proof}

\subsection{Variational lower bound}\label{sec:lower bound}
	
We now derive the lower bound \eqref{lower_bound} as in \cite{jaakkola2000}. Recall the log-likelihood \eqref{likelihood}:
\begin{equation*}
\ell_{n,\theta}(x,y) = \sum_{i=1}^n y_i x_i^T \theta - g(x_i^T \theta),
\end{equation*}	
where $g(t) = \log (1+e^t)$, $t\in \R$. We lower bound the second term above using a Taylor expansion in $x^2$. The following lemma fills in some details from \cite{jaakkola2000}, where this technique was proposed.
	
\begin{lemma}
For $\Psi(x) = (1+e^{-x})^{-1}$ the standard logistic function and any $\eta\in\R$,
$$\log \Psi(x) \geq \dfrac{x - \eta}{2} + \log \Psi(\eta) - \dfrac{1}{4\eta}\tanh(\eta/2)(x^2 - \eta^2).$$
\end{lemma}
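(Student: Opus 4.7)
The plan is to recognize this as the classical Jaakkola--Jordan logistic lower bound, whose proof rests on a single convexity argument. The key identity I would start from is
\[
-\log \Psi(x) = \log(1+e^{-x}) = -\tfrac{x}{2} + \log\bigl(2\cosh(x/2)\bigr),
\]
obtained by factoring out $e^{-x/2}$. With this rewriting, the claimed inequality is equivalent to
\[
\log\bigl(2\cosh(x/2)\bigr) \leq \log\bigl(2\cosh(\eta/2)\bigr) + \frac{\tanh(\eta/2)}{4\eta}\,(x^2 - \eta^2),
\]
i.e.\ a tangent-line upper bound, in the variable $y = x^2$, for the function $h(y) = \log(2\cosh(\sqrt{y}/2))$ at the point $y_0 = \eta^2$.

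The core step, and the only real content, is therefore to show that $h$ is \emph{concave} on $[0,\infty)$. I would compute directly: with $t = \sqrt{y}$ one has, by the chain rule,
\[
h'(y) = \frac{\tanh(\sqrt{y}/2)}{4\sqrt{y}}, \qquad y>0,
\]
and I need $h'$ to be non-increasing in $y$. Since $y\mapsto \sqrt{y}$ is monotone, this reduces to showing that $\phi(t) := \tanh(t/2)/t$ is non-increasing for $t>0$. Differentiating, $\phi'(t) \leq 0$ is equivalent to $(t/2)\operatorname{sech}^2(t/2) \leq \tanh(t/2)$, which after multiplying by $\cosh^2(t/2)$ becomes $t/2 \leq \sinh(t/2)\cosh(t/2) = \tfrac{1}{2}\sinh(t)$, i.e.\ $\sinh(t)\geq t$. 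This is the standard Taylor-series inequality and handles all $t>0$; the $t=0$ case is covered by continuity after noting $\lim_{t\to 0}\tanh(t/2)/t = 1/2$.

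With concavity in hand, the tangent inequality at $y_0=\eta^2$ gives the display above for all $\eta\neq 0$, and the case $\eta=0$ follows by passing to the limit (both sides are continuous in $\eta$, with $\tanh(\eta/2)/(4\eta)\to 1/8$). To finish, I would substitute the identity $\log(2\cosh(\eta/2)) = -\log\Psi(\eta) + \eta/2$ into the tangent bound and add $-x/2$ to both sides, yielding
\[
-\log\Psi(x) \leq -\tfrac{x-\eta}{2} - \log\Psi(\eta) + \frac{\tanh(\eta/2)}{4\eta}(x^2 - \eta^2),
\]
which is exactly the negative of the claimed inequality.

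The only mild obstacle is verifying concavity cleanly, since a naive second-derivative attack on $h(y)$ leads to unwieldy algebra; the change of variables $t = \sqrt{y}$ reducing the claim to $\sinh(t)\geq t$ is the shortcut that makes the argument transparent. Everything else is bookkeeping, and I would not need any machinery beyond elementary calculus.
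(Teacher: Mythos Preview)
Your proposal is correct and is essentially the same argument as the paper's: both rewrite $\log\Psi(x) = x/2 - \log(e^{x/2}+e^{-x/2})$ and exploit that $\log(2\cosh(x/2))$ is concave (equivalently, $-\log(2\cosh(x/2))$ is convex) as a function of $x^2$, then read off the tangent-line bound at $\eta^2$. The paper simply asserts this convexity ``by elementary calculations,'' whereas you supply the clean verification via the change of variables $t=\sqrt{y}$ reducing to $\sinh(t)\geq t$; that extra detail is a welcome addition but not a departure in strategy.
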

	
\begin{proof}
Note that we can write $\log \Psi(x) = x/2 - \log (e^{x/2} + e^{-x/2}).$
By elementary calculations, it can be shown that the second term on the right hand side is convex in the variable $x^2$. We can therefore use its first order Taylor approximation in $x^2$ to derive a lower bound for $\log\Psi(x)$, i.e. for any $\eta \in \R$,
 \begin{equation*}
		\begin{split}
		\log \Psi(x) &\geq \frac{x}{2} - \log(e^{\eta/2} + e^{-\eta/2}) - \frac{1}{4\eta}\tanh(\eta/2)(x^2 - \eta^2)\\
		&= \frac{x - \eta}{2} + \log \Psi(\eta) - \frac{1}{4\eta}\tanh(\eta/2)(x^2 - \eta^2).
		\end{split}
		\end{equation*}
	\end{proof}
	
	Substituting this lower bound into each term $-g(x_i^T\theta) = \log \Psi(-x_i^T \theta)$ in the log-likelihood yields \eqref{lower_bound}.
	
We next obtain the update equation \eqref{eta} for the free parameter $\eta\in \R^n$. Minimizing \eqref{lb_objective} over $\eta$ for fixed $Q = Q_{\mu,\sigma,\gamma} \in \Q$ is equivalent to solving
\begin{equation*}
\widetilde{\eta} = \argmax_{\eta\in\R^n} E_{\mu,\sigma,\gamma}\big[f(\theta, \eta)\big].
\end{equation*}
	
\begin{lemma}
The function $f_a:\R\to\R$, $a\geq 0$, given by
$$f_a(x) = \log \Psi(x) - \dfrac{x}{2} - \dfrac{1}{4x}\tanh(x/2)\big(a^2 - x^2\big)$$
is symmetric about zero and possesses unique maximizers at $x = \pm a$.
\end{lemma}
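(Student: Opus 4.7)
The plan is to verify symmetry directly and then locate the extrema by computing the derivative of $f_a$ and using the strict monotonicity of an elementary auxiliary function.

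For symmetry, I will combine three identities: $\log\Psi(-x) = \log\Psi(x) - x$ (so $\log\Psi(x) - x/2$ is even in $x$); the oddness of $\tanh$, so that $\tanh(-x/2)/(-x) = \tanh(x/2)/x$, i.e.\ $g(x) := \tanh(x/2)/(4x)$ is even (extended continuously at $0$ to $g(0)=1/8$); and $(-x)^2 = x^2$. Combining these three gives $f_a(-x) = f_a(x)$ immediately, so it suffices to locate the maximizer on $[0,\infty)$.

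For the critical points I will differentiate. Using the identity $\Psi(x) - 1/2 = \tanh(x/2)/2$ one gets $(\log\Psi(x) - x/2)' = -\tanh(x/2)/2$. Writing $f_a(x) = \log\Psi(x) - x/2 - g(x)(a^2 - x^2)$ and noting $2xg(x) = \tanh(x/2)/2$, a direct computation collapses to
\[
f_a'(x) = -\tfrac{1}{2}\tanh(x/2) + 2xg(x) - g'(x)(a^2 - x^2) = -g'(x)(a^2 - x^2).
\]
Hence the critical points of $f_a$ are $x = \pm a$ together with the zeros of $g'$.

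The only genuine obstacle is the monotonicity of $g$: I need $g'(x) < 0$ for $x>0$ (and, by oddness of $g'$, $g'(x) > 0$ for $x<0$, with $g'(0)=0$). This is equivalent to the strict decrease of $\tanh(x/2)/x$ on $(0,\infty)$, which reduces to the elementary inequality $\sinh x > x$ for $x>0$: after clearing positive denominators, the sign of $(\tanh(x/2)/x)'$ equals the sign of $(x/2)\,\mathrm{sech}^2(x/2) - \tanh(x/2) = (x - \sinh x)/(2\cosh^2(x/2))$, which is strictly negative. With this in hand, $-g'(x)(a^2 - x^2)$ is positive on $(0,a)$ and negative on $(a,\infty)$, so $a$ is the unique maximizer on $[0,\infty)$; symmetry then yields $\pm a$ as the only maximizers on $\mathbb{R}$, and a quick check that $f_a(x) \to -\infty$ as $|x| \to \infty$ (from $\log\Psi(x) - x/2 \sim -|x|/2$ balanced against $-g(x)(a^2 - x^2) \sim |x|/4$) rules out any competing extremum at infinity. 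The degenerate case $a=0$ is handled identically, since $f_0'(x) = g'(x)x^2$ has the same sign structure and forces the unique maximum to be at $x=0$.
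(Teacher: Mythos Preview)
Your proof is correct and follows essentially the same route as the paper: both establish symmetry via $\log\Psi(-x)+x/2=\log\Psi(x)-x/2$ and the evenness of $\tanh(x/2)/x$, both factor the derivative as $f_a'(x)=(a^2-x^2)\bigl(\tfrac{\tanh(x/2)}{4x^2}-\tfrac{1}{8x\cosh^2(x/2)}\bigr)$ (which is your $-g'(x)(a^2-x^2)$), and both reduce the crucial sign to the inequality $\sinh x>x$. The one substantive difference is in how the maximizer is pinned down: the paper checks $f_a''(\pm a)<0$, which strictly speaking only yields local maxima, whereas your sign analysis of $f_a'$ on $(0,a)$ and $(a,\infty)$ gives the global, unique maximizer directly and is the cleaner argument; your asymptotic check $f_a(x)\to-\infty$ is in fact already implied by this monotonicity and can be omitted.
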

	
	\begin{proof}
		Indeed, $\tanh(-x) = -\tanh(x)$ shows that the last term is symmetric around zero. Moreover, $\Psi(-x) = 1-\Psi(x)$ yields $\log \Psi(-x) + x/2= \log \Psi(x) - x/2$ thereby proving symmetry of $f_a$. Using $2\Psi(x) - 1 = \tanh(x/2)$,
\begin{align*}
		f'_a(x) &= \frac{\Psi'(x)}{\Psi(x)} - \frac{1}{2} + \frac{1}{4}\left(\tanh(\nicefrac{x}{2}) + \frac{x}{2(\cosh \nicefrac{x}{2})^2}\right) - \left( \frac{x (\cosh \nicefrac{x}{2})^{-2} - 2\tanh(\nicefrac{x}{2})}{8x^2}\right) a^2\\
		&= (a^2 - x^2)\left(\dfrac{\tanh(\nicefrac{x}{2})}{4x^2} - \dfrac{1}{8x (\cosh \nicefrac{x}{2})^2}\right).
\end{align*}
		Since $f_a'(\pm a) = 0$, it remains to show $f''_a(\pm a) < 0$. Note that \begin{equation*}
		\begin{split}
		f''_a(x) = (a^2 - x^2) \frac{d}{dx}\left(\frac{\tanh(\nicefrac{x}{2})}{4x^2} - \frac{1}{8x (\cosh \nicefrac{x}{2})^2}\right) - \frac{\tanh(\nicefrac{x}{2})}{2x} + \frac{1}{4 (\cosh \nicefrac{x}{2})^2}.
		\end{split}
		\end{equation*} The first term vanishes at $x = \pm a$ and the second is symmetric about zero, so the formula $\sinh(x)\cosh(x) = \sinh(2x)/2$ yields 
\begin{equation*}
		f''_a(\pm a) = -\dfrac{2\sinh(\nicefrac{a}{2})\cosh(\nicefrac{a}{2}) - a}{4a(\cosh\nicefrac{a}{2})^2}
		= -\dfrac{\sinh(a) - a}{4a(\cosh\nicefrac{a}{2})^2}.
		\end{equation*} This concludes the proof as $\sinh(x)/x \geq 1$.
	\end{proof}
By the last lemma, we can restrict the free parameter to $\eta\in \R_{\geq 0}^n$ and take $\widetilde{\eta}_i^2 = E_{\mu,\gamma,\sigma}(x_i^T \theta)^2$ to maximize $E_{\mu,\sigma,\gamma}f(\theta,\eta)$. The update \eqref{eta} then follows from the following lemma.
	
	\begin{lemma}\label{lem:VB_expectations}
		For $Q_{\mu,\sigma,\gamma}\in\Q$,
		\begin{equation*}
		\begin{split}
		\bbE_{\mu,\sigma,\gamma}[x_i^T \theta] &= \sum_{k=1}^{p} \gamma_k \mu_k x_{ik},\\
		\bbE_{\mu,\sigma,\gamma}\big[(x_i^T \theta)^2\big] &= \sum_{k = 1}^{p} \gamma_k x_{ik}^2(\mu_k^2 + \sigma_k^2) + \sum_{k = 1}^{p}\sum_{l\neq k} (\gamma_kx_{ik}\mu_k)(\gamma_l x_{il}\mu_l).
		\end{split}
		\end{equation*}
If the expectations are instead taken over $E_{\mu,\sigma,\gamma|z_j=1}$, then the same formulas hold true with $\gamma_j=1$.
	\end{lemma}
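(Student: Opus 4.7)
The plan is to exploit the factorizable (mean-field) structure of $Q_{\mu,\sigma,\gamma}$ so that all expectations reduce to one- and two-coordinate moments of independent spike-and-slab marginals. Under $Q_{\mu,\sigma,\gamma}$, the coordinates $\theta_1,\dots,\theta_p$ are independent with $\theta_k\sim \gamma_k N(\mu_k,\sigma_k^2) + (1-\gamma_k)\delta_0$, so by conditioning on the latent indicator I first compute
\[
\bbE_{\mu,\sigma,\gamma}[\theta_k] = \gamma_k\mu_k, \qquad \bbE_{\mu,\sigma,\gamma}[\theta_k^2] = \gamma_k(\mu_k^2+\sigma_k^2).
\]
These are immediate since conditional on inclusion $\theta_k\sim N(\mu_k,\sigma_k^2)$ and conditional on exclusion $\theta_k=0$ almost surely.

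For the first claim I will apply linearity of expectation:
\[
\bbE_{\mu,\sigma,\gamma}[x_i^T\theta] = \sum_{k=1}^p x_{ik}\,\bbE_{\mu,\sigma,\gamma}[\theta_k] = \sum_{k=1}^p \gamma_k\mu_k x_{ik}.
\]
For the second claim I will expand the square,
\[
(x_i^T\theta)^2 = \sum_{k=1}^p x_{ik}^2\theta_k^2 + \sum_{k=1}^p\sum_{l\neq k} x_{ik}x_{il}\theta_k\theta_l,
\]
take expectations term by term, and invoke independence of the coordinates to replace $\bbE[\theta_k\theta_l]$ with $\bbE[\theta_k]\bbE[\theta_l] = \gamma_k\mu_k\gamma_l\mu_l$ whenever $k\neq l$. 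Plugging in the one-coordinate moments computed above yields the stated identity.

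For the conditional version, observe that conditioning on $z_j=1$ simply replaces the $j$-th marginal $\gamma_j N(\mu_j,\sigma_j^2) + (1-\gamma_j)\delta_0$ by $N(\mu_j,\sigma_j^2)$, while leaving all other marginals unchanged and still independent. Hence the one-coordinate moments become $\bbE[\theta_j]=\mu_j$ and $\bbE[\theta_j^2]=\mu_j^2+\sigma_j^2$, which corresponds exactly to setting $\gamma_j=1$ in the formulas above; repeating the same two computations then gives the conditional identities. There is no real obstacle here—the only thing to be mildly careful about is the bookkeeping of the cross-term $\sum_{k\neq l}$ versus $\sum_{k}\sum_{l\neq k}$, which is the form used in the statement.
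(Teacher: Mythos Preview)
Your proof is correct and essentially identical to the paper's: both exploit the mean-field factorization to compute the one- and two-coordinate moments $\bbE[\theta_k]=\gamma_k\mu_k$, $\bbE[\theta_k^2]=\gamma_k(\mu_k^2+\sigma_k^2)$, then apply linearity for the first identity and expand the square plus independence for the second. The paper phrases the second computation via the indicator $\theta_k=\theta_k 1_{\{z_k=1\}}$, but this is exactly your conditioning-on-inclusion argument in different notation.
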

	
	\begin{proof}
		Since $\theta_k \sim^{iid}(1-\gamma_k)\delta_0+ \gamma_k\mathcal{N}(\mu_k, \sigma_k^2)$ under $Q_{\mu,\sigma,\gamma}$, the first claim follows by linearity of the expectation. Using that $\theta_k = \theta_k 1_{\{z_k = 1\}}$, $Q_{\mu,\sigma,\gamma}$-almost surely, and that $(\theta_k)$ are independent under the mean-filed distribution $Q_{\mu,\sigma,\gamma}$,
 \begin{align*}
 E_{\mu,\sigma,\gamma}\big[(x_i^T \theta)^2\big] & = E_{\mu,\sigma,\gamma} \left(\sum_{k = 1}^{p} x_{ik}\theta_k 1_{\{z_k =1\}}\right)^2 \\
 & = \sum_{k = 1}^{p} \gamma_kx_{ik}^2(\mu_k^2 + \sigma_k^2) +  \sum_{k = 1}^{p}\sum_{l\neq k} (\gamma_kx_{ik}\mu_k)(\gamma_l x_{il}\mu_l).
 \end{align*}
	\end{proof}

\bibliography{references}{}
\bibliographystyle{acm}

\end{document}